\def\eqref#1{equation~\ref{#1}}
\def\1{\bm{1}}
\newcommand{\test}{\mathcal{D_{\mathrm{test}}}}
\def\vtheta{{\bm{\theta}}}
\def\ve{{\bm{e}}}
\def\vx{{\bm{x}}}
\def\vy{{\bm{y}}}
\def\vz{{\bm{z}}}
\DeclareMathAlphabet{\mathsfit}{\encodingdefault}{\sfdefault}{m}{sl}
\SetMathAlphabet{\mathsfit}{bold}{\encodingdefault}{\sfdefault}{bx}{n}
\DeclareMathOperator*{\argmax}{arg\,max}
\DeclareMathOperator*{\argmin}{arg\,min}
\mathchardef\mhyphen="2D
\pgfplotsset{compat=1.18}
    \newenvironment{customlegend}[1][]{%
        \begingroup
        \csname pgfplots@init@cleared@structures\endcsname
        \pgfplotsset{#1}%
    }{%
        \csname pgfplots@createlegend\endcsname
        \endgroup
    }%
    \def\addlegendimage{\csname pgfplots@addlegendimage\endcsname}
\theoremstyle{plain}
\newtheorem{theorem}{Theorem}[section]
\newtheorem{lemma}[theorem]{Lemma}
\newtheorem{corollary}[theorem]{Corollary}
\theoremstyle{definition}
\newtheorem{definition}[theorem]{Definition}
\theoremstyle{remark}
\icmltitlerunning{Computational Asymmetries in Robust Classification}
    \let\Cref\crtCref
    \let\cref\crtcref
\newcommand{\EUFlag}{%
  \begingroup\normalfont
  \includegraphics[height=1.5\fontcharht\font`\B]{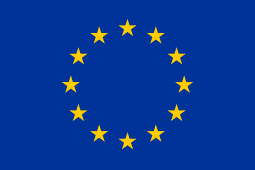}%
  \endgroup
}
\begin{document}

\twocolumn[
\icmltitle{Computational Asymmetries in Robust Classification}



\icmlsetsymbol{equal}{*}

\begin{icmlauthorlist}
\icmlauthor{Samuele Marro}{yyy}
\icmlauthor{Michele Lombardi}{yyy}
\end{icmlauthorlist}

\icmlaffiliation{yyy}{Department of Computer Science, University of Bologna}

\icmlcorrespondingauthor{Samuele Marro}{samuele.marro@unibo.it}

\icmlkeywords{ICML, Machine Learning, Adversarial Attacks, Complexity, Polynomial Hierarchy}

\vskip 0.3in
]



\printAffiliationsAndNotice{}  

\begin{abstract}
In the context of adversarial robustness, we make three strongly related contributions.
First, we prove that while attacking ReLU classifiers is $\mathit{NP}$-hard, ensuring their robustness at training time is $\Sigma^2_P$-hard (even on a single example).
This asymmetry provides a rationale for the fact that robust classifications approaches are frequently fooled in the literature.
Second, we show that inference-time robustness certificates are not affected by this asymmetry, by introducing a proof-of-concept approach named Counter-Attack (CA). 
Indeed, CA displays a reversed asymmetry: running the defense is $\mathit{NP}$-hard, while attacking it is $\Sigma_2^P$-hard.
Finally, motivated by our previous result, we argue that adversarial attacks can be used in the context of robustness certification, and provide an empirical evaluation of their effectiveness.
As a byproduct of this process, we also release UG100, a benchmark dataset for adversarial attacks.

\end{abstract}

\section{Introduction}
\label{sec:introduction}

Adversarial attacks, i.e. algorithms designed to fool machine learning models, represent a significant threat to the applicability of such models in real-world contexts \citep{brown2017adversarial,brendel2019accurate, wu2020making}. Despite years of research effort, countermeasures (i.e. ``defenses'') to adversarial attacks are frequently fooled by applying small tweaks to existing techniques \citep{carlini2016defensive, carlini2017adversarial, he2017adversarial, hosseini2019odds, tramer2020adaptive,croce2022evaluating}.
We argue that this pattern is due to differences between the fundamental mathematical problems that defenses and attacks need to tackle, and we investigate this topic by providing three contributions.

First, we prove a set of theoretical results about the complexity of attack and training-time defense problems, including the fact that \emph{attacking a ReLU classifier is $\mathit{NP}$-hard in the general case, while finding a parameter set that makes a ReLU classifier robust on even a single input is $\Sigma_2^P$-hard}. To the best of our knowledge, this is the first complexity bound for general ReLU classifiers, and the main contribution of this work.
We also provide more general bounds for non-polynomial classifiers, and show in particular that an $A$-time classifier can be attacked in $\mathit{NP}^A$ time.
Instead of using a PAC-like formalization, we rely on a worst-case semantic of robustness. This approach results in a formalization that is both more easier to deal with and independent of data distribution assumptions, while still \emph{providing a rationale for difficulties in training robust classifiers} that are well-known in the related literature. Our proofs also lay the ground work for identifying tractable classes of defenses.


Second, we prove by means of an example that \emph{inference-time defenses can sidestep the asymmetry}.
Our witness is a proof-of-concept approach, referred to as Counter-Attack (CA), that evaluates robustness on the fly for a specific input (w.r.t. to a maximum distance $\varepsilon$) by running an adversarial attack.
Properties enjoyed by this technique are likely to extend to other inference-time defense methods, if they are based on similar principles.
Notably, when built over an exact attack, 
\emph{generating a certificate is $\mathit{NP}$-hard} in the worst case, \emph{$\varepsilon$-bounded attacks are impossible}, and \emph{attacking using perturbations of magnitude $\varepsilon^\prime > \varepsilon$ is $\Sigma_2^P$-hard}.
On the other hand, using a non-exact attack results in partial guarantees (no false positives for heuristic attacks, no false negatives for bounding techniques).


Finally, since our results emphasize the connection between verification and attack problems, we provide an empirical investigation of the use of heuristic attacks for verification.
\emph{We found heuristic attacks to be high-quality approximators for exact decision boundary distances}: a pool of seven heuristic attacks provided an accurate (average over-estimate between 2.04\% and 4.65\%) and predictable (average $R^2 > 0.99$) approximation of the true optimum for small-scale Neural Networks trained on the MNIST and CIFAR10 datasets.
We release\footnote{All our code, models, and data are available
under MIT license at \href{https://github.com/samuelemarro/counter-attack}{https://github.com/samuelemarro/counter-attack}.} our benchmarks and adversarial examples (both exact and heuristic) in a new dataset, named UG100.



Overall, we hope our contributions can support future research by highlighting potential structural challenges, pointing out key sources of complexity, inspiring research on heuristics and tractable classes, and suggesting alternative perspectives on how to build robust classifiers.


\section{Background and Formalization} 
\label{sec:background}





In this section, we introduce key definitions (adapted from \citet{dreossi2019formalization}) that we will use to frame our results.
Our aim is to capture the key traits shared by most of the literature on adversarial attacks, so as to identify properties that are valid under broad assumptions.





\paragraph{Adversarial Attacks and Robustness}
We start by defining the concept of \emph{adversarial example}, which intuitively represents a modification of a legitimate input that is so limited as to be inconsequential for a human observer, but sufficient to mislead a target model.
Formally, let $f: X \to \{1, \dots, N\}$ be a discrete classifier. Let $B_p(\vx, \varepsilon) = \{ \vx^\prime \in X \, | \, \|\vx - \vx^\prime\|_p \leq \varepsilon\}$ be a $L^p$ ball of radius $\varepsilon$ and center $\vx$. Then we have:
%

\begin{definition}[Adversarial Example]
\label{def:adversarialExample}
Given an input $\vx$, a threshold $\varepsilon$, and a $L^p$ norm\footnote{We use the term ``norm'' for $0 < p < 1$ even if in such cases the $L^p$ function is not subadditive.}, an adversarial example is an input $\vx' \in B_p(\vx, \varepsilon)$ such that $f(\vx^\prime) \in C(\vx)$, where $C(\vx) \subseteq \{1, \dots, N\} \setminus \{f(\vx)\}$.
\end{definition} 

This definition is a simplification compared to human perception, but it is adequate for a sufficiently small $\varepsilon$, and it is adopted in most of the relevant literature.
An \emph{adversarial attack} can then be viewed as an optimization procedure that attempts to find an adversarial example.
We define an adversarial attack for a classifier $f$ as a function $a_{f,p}: X \to X$ that solves the following optimization problem:
\begin{equation}
     \argmin_{\vx^\prime \in X} \{\|\vx^\prime - \vx\|_p \mid f(\vx^\prime) \in C(\vx)\}
     \label{eqn:attackAsOptimization}
\end{equation}
The attack is considered successful if the returned solution $\vx^\prime = a_{f, p}(\vx)$ also satisfies $\|\vx^\prime - \vx\|_p \leq \varepsilon$.
We say that an attack is \emph{exact} if it solves \Cref{eqn:attackAsOptimization} to optimality (or, in the case of its decision variant, if it succeeds if and only if a solution exists); otherwise, we say that the attack is \emph{heuristic}.
An attack is said to be \emph{targeted} if $C(\vx) = C_{t,y^\prime}(\vx) = \{y'\}$ with $y^\prime \neq f(\vx)$; it is instead \emph{untargeted} if $C_u(\vx) = \{1, \dots, N\} \setminus \{f(\vx)\}$. We define the \emph{decision boundary distance} $d_p^*(\vx)$ of a given input $\vx$
as the minimum $L^p$ distance between $\vx$ and another input $\vx'$ such that $f(\vx) \neq f(\vx')$. This is also the value of $\|a_{f, p}(\vx) - \vx\|_p$ for an exact, untargeted, attack. 

Intuitively, a classifier is \emph{robust w.r.t. an example $\vx$} iff $\vx$ cannot be successfully attacked. Formally:

\begin{definition}[($\varepsilon$, $p$)-Local Robustness]
\label{def:localRobustness}
A discrete classifier $f$ is ($\varepsilon$, $p$)-locally robust w.r.t. an example $\vx \in X$ iff $\forall \vx' \in B_p(\vx, \varepsilon)$ we have $f(\vx^\prime) = f(\vx)$.
\end{definition}

Under this definition, finding a parameter set $\vtheta$ that makes a classifier $f_\vtheta$ robust on $\vx_0$ can be seen as solving the following constraint satisfaction problem:
\begin{equation}
    \label{eqn:findingRobust}
    \text{find } \vtheta \text{ s. t. } \forall \vx^\prime \in B_p(\vx_0, \varepsilon) . f_\vtheta (x^\prime) = f_\vtheta(x)
\end{equation}

which usually features an additional constraint on the minimum clean accuracy of the model (although we make no assumptions on this front). Note that classifiers are usually expected to be robust on more than one point. However, we will show that the computational asymmetry exists even if we require robustness on a single point.

A common optimization reformulation of \cref{eqn:findingRobust}, which enforces robustness \emph{and} accuracy, is the nested optimization problem used for adversarial training in \citet{madry2018towards}. Specifically, if we have a single ground truth data point $\langle \vx_0, y\rangle$, the optimization problem is:

\begin{equation}
    \label{eqn:robustTraining}
    \argmin_{\vtheta} \max_{\vx^\prime \in B_p(\vx_0, \varepsilon)} \mathcal L (\vtheta, \vx^\prime, y_0)
\end{equation}

where $\mathcal L$ is a proxy for $f_\vtheta(\vx^\prime) = y$ (e.g. the cross entropy loss between $f_\vtheta(\vx^\prime)$ and $y$). The link between $\exists \forall$ queries (such as that in \Cref{eqn:findingRobust} and nested optimization problems (such as that in \Cref{eqn:robustTraining}) underlies the intuition of several of our theoretical results (see \Cref{sec:intuition}).

\paragraph{ReLU Networks and FSFP Spaces}
%
Additionally, our results rely on definitions of ReLU networks and FSFP spaces.
\begin{definition}[ReLU network]
A ReLU network is a composition of sum, multiplication by a constant, and ReLU activation, where $ReLU: \mathbb{R} \to \mathbb{R}_0^+$ is defined as $ReLU(x) = max(x, 0)$.
\end{definition}
Note that any hardness result for ReLU classifiers also extends to general classifiers.

Fixed-Size Fixed-Precision (FSFP) spaces, on the other hand, capture two common assumptions about real-world input spaces: all inputs can be represented with the same number of bits and there exists a positive minorant of the distance between inputs.

\begin{definition}[Fixed-Size Fixed-Precision space]
Given a real $p > 0$, a space $X \subseteq \mathbb{R}^n$ is FSFP if there exists a $\nu \in \mathbb{N}$ such that $\forall \vx. |r(\vx^\prime)| \leq \nu$ (where $|r(\vx)|$ is the size of the representation of $\vx$) and there exists a $\mu \in \mathbb{R}$ such that $\mu > 0$ and $\forall \vx , \vx^\prime \in X. \left ( \|\vx^\prime - \vx\|_p < \mu \implies \vx = \vx^\prime \right )$.
\end{definition}

Examples of FSFP spaces include most image encodings, as well as 32-bit and 64-bit IEE754 tensors. Examples of non-FSFP spaces include the set of all rational numbers in an interval. 
Similarly to ReLU networks, hardness results for FSFP spaces also apply to more general spaces.

\paragraph{$\Sigma_2^P$ Complexity} Several of our theoretical results concern complexity classes in the Polynomial Hierarchy such as $\Sigma_2^P$. $\Sigma_2^P$ is the class of problems that can be solved in $\mathit{NP}$ time if we have an oracle that solves an $\mathit{NP}$-time problem in $O(1)$.  $\Sigma_2^P$-hard problems include finding a strong Nash equilibrium \citep{gottlob2011pure} and $\mathit{co}\Pi_2 \text{3SAT}$ \citep{stockmeyer1976polynomial}. A notable conjecture is the Polynomial Hierarchy conjecture \citep{stockmeyer1976polynomial}, a generalization of the $\mathit{P} \neq \mathit{NP}$ conjecture which states that the Polynomial Hierarchy does not collapse  (i.e. $P \subsetneq \mathit{NP} \subsetneq \Sigma_2^P \subsetneq \Sigma_3^P \dots$). In other words, under broad assumptions, we cannot solve a $\Sigma_2^P$-hard problem efficiently even if we can solve $\mathit{NP}$-hard problems in constant time.



\section{An Asymmetrical Setting}

In this section, we prove the existence of a structural asymmetry between the computational classes of attack and training-time defense problems (barring the collapse of the Polynomial Hierarchy) by studying their decision versions\footnote{Note that hardness results for decision problems trivially extend to their corresponding optimization variants.}. 
While the asymmetry is worst-case in nature, it holds under broad assumptions and provides an explanation for why attacks seem to outperform defenses in practice.

\subsection{Intuition}
\label{sec:intuition}

The intuition behind our theorems consists in three main observations:
\begin{compactitem}
    \item ReLU networks, due to their expressive power, are capable of computing input-output relations that are \emph{at least as complex} as Boolean formulae;
    \item Attacking usually requires solving an optimization problem, whose decision variant (finding \emph{any} adversarial example) can be expressed as an $\exists$ query;
    \item Training a robust classifier, on the other hand, usually requires solving a nested optimization problem, whose decision variant (finding \emph{any} robust parameter set) can be expressed as an $\exists \forall$ query.
\end{compactitem}

From these considerations, we show that solving $3\text{SAT}$ can be reduced to attacking the ReLU classifier that computes the corresponding Boolean formula, and thus that attacking a ReLU classifier is $NP$-hard (\Cref{the:attackNp}).

We then prove that, given a 3CNF formula $z(\vx, \vy)$, it is possible to build a ReLU classifier $f_{\vx}(\vy)$ (where $\vx$ are parameters and $\vy$ are inputs) that computes the same formula. We use this result to prove that $co\Pi_23\text{SAT}$ (a subclass of $TQBF$ that is known to be $\Sigma_2^P$-hard) can be reduced to finding a parameter set that makes $f$ robust, which means that the latter is $\Sigma_2^P$-hard (\Cref{the:defenseSigma}).

Note that, when performing the reductions, we choose the ReLU networks that we need to solve the corresponding problem without considering how likely they are to arise in natural settings. This approach (which is common in proofs by reduction) allows us to study the worst-case complexity of both tasks without making assumptions on the training distribution or the specifics of the learning algorithm.
Studying the average-case complexity of such tasks would of course be of great importance, however: 1) such an approach would require to introduce assumptions about the training distribution; and 2) despite the recent advancements in fields such as PAC learning, average case proof in this setting are still very difficult to obtain except in very specific cases (see \Cref{sec:related}). We hope that our theoretical contributions will allow future researchers to extend our work to average-case results.

In short, while our theorems rely on specific instances of ReLU classifiers, they capture very general phenomena: ReLU networks can learn functions that are at least as complex as Boolean formulae, and robust training requires solving a nested optimization problem. The proofs thus provide an intuition on the formal mechanisms that underly the computational asymmetries, while at the same time outlining directions for studying tractable classes (since both $3SAT$ and $TQBF$ are extensively studied in the literature).


\subsection{Preliminaries}

We begin by extending the work of \citet{katz2017reluplex}, who showed that proving linear properties of ReLU networks is $\mathit{NP}$-complete. Specifically, we prove that the theorem holds even in the special case of adversarial attacks:

\begingroup
\makeatletter
\apptocmd{\thetheorem}{\unless\ifx\protect\@unexpandable@protect\protect\footnote{The proofs of all our theorems and corollaries can be found in  the appendices.}\fi}{}{}
\makeatother
\begin{theorem}[Untargeted $L^\infty$ attacks against ReLU classifiers are $\mathit{NP}$-complete] 
\label{the:attackNp}
Let $U\mhyphen{}ATT_p$ be the set of all tuples $\langle \vx, \varepsilon, f \rangle$ such that:
\begin{equation}
    \exists \vx^\prime \in B_p(\vx, \varepsilon) . f (\vx^\prime) \neq f(\vx)
\end{equation}

where $\vx \in X$, $X$ is a FSFP space and $f$ is a ReLU classifier. 
Then $U\mhyphen{}ATT_\infty$ is $\mathit{NP}$-complete.
\end{theorem}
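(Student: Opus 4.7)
The plan is to establish membership in $\mathit{NP}$ and $\mathit{NP}$-hardness separately, following the blueprint sketched in \Cref{sec:intuition}.

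For membership, the natural certificate is the adversarial example $\vx^\prime$ itself. Since $X$ is FSFP, $|r(\vx^\prime)| \leq \nu$, so the certificate has length polynomial in the input size. Verification consists of two polynomial-time steps: first, checking $\|\vx^\prime - \vx\|_\infty \leq \varepsilon$ coordinate by coordinate; and second, evaluating $f(\vx)$ and $f(\vx^\prime)$ and checking that they differ. The latter is polynomial because a ReLU network is a polynomial-size arithmetic circuit of additions, scalar multiplications, and $\max(\cdot,0)$ gates, and all intermediate activations have representation size polynomial in $\nu$ and in the description of $f$.

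For hardness, I would give a polynomial-time reduction from $3\text{SAT}$. Given a 3CNF formula $\phi(y_1,\dots,y_n)$, I build a ReLU circuit that computes $\phi$ over $\{0,1\}^n$ using the gadgets $\mathrm{NOT}(a) = 1-a$, $\mathrm{OR}(a,b) = \max(a,b) = \mathrm{ReLU}(a-b)+b$, and $\mathrm{AND}(a,b) = \min(a,b) = a - \mathrm{ReLU}(a-b)$, composed according to the clause/literal structure of $\phi$. To force a fixed ``safe'' class at the unattacked point, I add a gating coordinate $y_{n+1}$ and let the network compute $g(\vy) := y_{n+1} \wedge \phi(y_1,\dots,y_n)$; the discrete classifier emits logits $(1-g(\vy),\,g(\vy))$ and selects the class by argmax. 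I take $X = \{0,1\}^{n+1}$ (a FSFP space, with $\nu = n+1$ and $\mu = 1$), $\vx = \vzero$, and $\varepsilon = 1$, so that $B_\infty(\vx,\varepsilon) \cap X$ is the entire Boolean hypercube. Then $f(\vx) = 0$, and an adversarial $\vx^\prime$ exists iff some assignment with $y_{n+1}=1$ satisfies $\phi$, i.e.\ iff $\phi \in 3\text{SAT}$. The instance $\langle \vx,\varepsilon,f\rangle$ has size polynomial in $|\phi|$.

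The main obstacle is the bookkeeping needed to certify that the composed ReLU gadgets faithfully implement $\phi$: a routine induction on the depth of the formula shows that, on Boolean inputs, every intermediate activation lies in $[0,1]$, so the $\min$/$\max$ gadgets really do behave as AND/OR, and the final argmax produces a crisp class label. The remaining points---verifying that $\{0,1\}^{n+1}$ satisfies the FSFP definition, that the reduction is polynomial, and that the NP verifier handles FSFP-encoded inputs correctly---are straightforward.
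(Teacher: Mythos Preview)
Your proposal is correct and follows the same high-level blueprint as the paper---membership via the adversarial point as certificate, hardness via a polynomial reduction from $3\text{SAT}$ that encodes the formula as a ReLU circuit---but the details of the hardness reduction differ in instructive ways.

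The paper works over a (fine) FSFP space containing $[0,1]^n$, places the base point at $\vx^{(s)}=(\tfrac12,\dots,\tfrac12)$ with $\varepsilon=\tfrac12$, and uses $\mu$-based step functions to build exact Boolean gates ($and$, $or$, $not$) as well as an $isx^{(s)}$ check that forces $f(\vx^{(s)})=0$. You instead take the genuinely discrete space $X=\{0,1\}^{n+1}$, base point $\vzero$, and $\varepsilon=1$; you realise $\mathrm{OR}/\mathrm{AND}$ directly as $\max/\min$ (valid ReLU compositions), and you pin $f(\vzero)=0$ via an extra gating coordinate rather than a center-detection gadget. Your route is more elementary: it avoids the $\mu$-dependent step construction entirely. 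The paper's route, however, is engineered so that the same construction extends to arbitrary $L^p$ balls (\Cref{cor:lpAttack}) by exploiting that the ambient FSFP space is dense enough to place witnesses inside $B_p(\vx^{(s)},\tfrac12)$; your discrete hypercube would not survive that generalisation without modification, since for $p<\infty$ the ball $B_p(\vzero,1)\cap\{0,1\}^{n+1}$ contains only the standard basis vectors. For the theorem as stated ($p=\infty$) both arguments are complete.
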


\endgroup
\begin{corollary}
\label{cor:lpAttack}
For every $0 < p \leq \infty$, $U\mhyphen{}ATT_p$ is $\mathit{NP}$-complete.
\end{corollary}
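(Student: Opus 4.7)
The plan is to establish membership in $\mathit{NP}$ and $\mathit{NP}$-hardness for every $p \in (0, \infty]$. Membership is immediate: because $X$ is FSFP, any candidate $\vx^\prime$ has polynomially bounded representation, and one can verify in polynomial time both that $\sum_i |\vx^\prime_i - \vx_i|^p \leq \varepsilon^p$ (working with the $p$-th power avoids root extraction for rational $p$) and that $f(\vx^\prime) \neq f(\vx)$, since ReLU evaluation is polynomial in the description size of $f$.

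For $\mathit{NP}$-hardness, my plan is to adapt the reduction underlying \Cref{the:attackNp}. That proof builds, from a 3CNF $\phi$ on $n$ variables, a ReLU classifier $f$ and a center $\vx$ such that the satisfying assignments of $\phi$ correspond to adversarial examples in $B_\infty(\vx, \varepsilon)$ whose variable coordinates all sit at distance exactly $\varepsilon$ from the center. To port this to $L^p$ with finite $p$, I would rescale the per-coordinate deviation target to $\delta = \varepsilon \cdot n^{-1/p}$, so that an encoding with each of the $n$ variable coordinates at distance $\delta$ from $\vx_i$ has $L^p$ radius exactly $\varepsilon$. The classifier is then modified by composing additional ReLU guard gates that route the input to the adversarial class only when every $|\vx^\prime_i - \vx_i|$ meets or exceeds $\delta$, with the clause-checking gadgets scaled accordingly. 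The case $p = \infty$ needs no modification and follows directly from \Cref{the:attackNp}.

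The main obstacle will be showing that the guard gates faithfully rule out the attacks specific to the $L^p$ geometry, for example concentrating perturbation mass on a few coordinates (which is helpful when $p < 1$, where the ball becomes non-convex and ``spiky'') or exploiting auxiliary input coordinates. The key observation is that the per-coordinate lower bound $|\vx^\prime_i - \vx_i|^p \geq \delta^p = \varepsilon^p / n$ combined with $\sum_i |\vx^\prime_i - \vx_i|^p \leq \varepsilon^p$ forces each coordinate to lie at magnitude essentially equal to $\delta$, so only genuine Boolean assignments can trigger the adversarial class. Restricting $f$ to depend solely on the $n$ variable coordinates eliminates the auxiliary-dimension concern, yielding a reduction that is tight and correct uniformly across all $p \in (0, \infty]$.
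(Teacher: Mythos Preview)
Your proposal is correct but takes a more laborious route than the paper. The paper observes that the \emph{same} classifier $f$ from the $L^\infty$ reduction works unchanged for every $p$: since $f(\vx') = 1$ already forces $cnf_3'(bin(\vx')) = 1$ regardless of where $\vx'$ lies, the backward direction ($q(\vz) \in U\mhyphen{}ATT_p \Rightarrow \vz \in 3\mathrm{SAT}$) carries over verbatim. Only the forward direction needs adjustment, and there the paper simply places the witness at coordinates $\tfrac{1}{2} \pm \rho_{p,n}(\tfrac{1}{2})$, where $\rho_{p,n}(r)$ is any positive minorant of the $L^\infty$ radius of the $L^p$ sphere of radius $r$; this keeps the binarization intact while fitting inside $B_p(\vx^{(s)}, \tfrac{1}{2})$.

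Your guard gates are therefore unnecessary: the worry about ``concentrating perturbation mass on a few coordinates'' is a red herring, because the adversarial class is already gated on $cnf_3'(bin(\vx'))$ by construction of $f$, irrespective of how the $L^p$ budget is distributed. Your route does work---the per-coordinate lower bound plus the budget constraint do pin each $|\vx'_i - \vx_i|$ to $\delta$---but it modifies the classifier and introduces a tightness argument that also creates a minor representability wrinkle (you need points at exactly $\tfrac{1}{2}\pm\delta$ in the FSFP space, whereas the paper only needs \emph{some} representable offset below the minorant). The paper's approach buys simplicity and makes transparent that hardness is norm-insensitive; yours would be relevant if one needed to pin down the adversarial set exactly, which is not required here.
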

\begin{corollary} 
\label{cor:attackTargeted}
Targeted $L^p$ attacks (for $0 < p \leq \infty)$ against ReLU classifiers are $\mathit{NP}$-complete.
\end{corollary}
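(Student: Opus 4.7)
The plan is to establish the corollary by separately proving membership in $\mathit{NP}$ and $\mathit{NP}$-hardness, both reusing machinery already developed for the untargeted case.

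Membership is the easy direction and mirrors the argument for $U\mhyphen{}ATT_p$. A candidate adversarial example $\vx'$ serves as a polynomial-size certificate: since $X$ is FSFP, $\vx'$ has bounded representation and $\|\vx' - \vx\|_p \leq \varepsilon$ can be checked in polynomial time, while $f(\vx')$ is evaluated in polynomial time because $f$ is a polynomial-size ReLU network. Verifying $f(\vx') = y'$ for the prescribed target then completes the check.

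For hardness, I would give a many-one reduction from $U\mhyphen{}ATT_p$, whose $\mathit{NP}$-hardness is provided by \Cref{cor:lpAttack}. Given an instance $\langle \vx, \varepsilon, f\rangle$ with $f : X \to \{1,\dots,N\}$ realized as an argmax over score functions $s_1,\dots,s_N$ computed by a ReLU network, I would construct a two-class ReLU classifier $g : X \to \{1,2\}$ whose two output scores are $s_{f(\vx)}(\vx')$ (the score of the original predicted label, which is a \emph{known constant index}) and $\max_{i \neq f(\vx)} s_i(\vx')$. The latter max is computed as a cascade of pairwise maxima using the identity $\max(a,b) = a + \text{ReLU}(b-a)$, which adds only $O(N)$ ReLU units, keeps the network polynomial in size, and preserves the ReLU structure. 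By construction, $g(\vx') = 2$ iff $f(\vx') \neq f(\vx)$, so the targeted instance $\langle \vx, \varepsilon, g\rangle$ with target $y' = 2$ admits an adversarial example iff the original untargeted instance does. This reduction is polynomial, norm-agnostic (hence covers all $0 < p \leq \infty$ uniformly), and preserves the FSFP property of $X$.

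The main obstacle I anticipate is purely bookkeeping: one must justify that the pairwise-max cascade, together with the selection of the single score $s_{f(\vx)}$, is really expressible inside the ReLU-network formalism adopted earlier (sums, scalar multiplications, ReLU), and that the reduction outputs a valid triple in the format required by the targeted decision problem. Since $f(\vx)$ is computed once at reduction time and ``baked in'' as the fixed index of the selected score, no branching on runtime values is needed, and the entire construction stays within the class of ReLU classifiers; together with membership, this yields $\mathit{NP}$-completeness of targeted $L^p$ attacks for every $0 < p \leq \infty$.
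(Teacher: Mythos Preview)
Your proposal is correct, but it takes a longer route than the paper. The paper simply observes that the 3SAT reduction used for \Cref{the:attackNp} (and its $L^p$ extension in \Cref{cor:lpAttack}) already produces a \emph{two-class} ReLU classifier with $f(\vx^{(s)})=0$; on such instances the untargeted condition $f(\vx')\neq f(\vx^{(s)})$ and the targeted condition $f(\vx')=1$ coincide, so the very same reduction proves hardness of the targeted problem in one line. Your approach instead treats \Cref{cor:lpAttack} as a black box and reduces an arbitrary $N$-class untargeted instance to a binary targeted one via a max-collapse gadget. This is more modular (you never reopen the earlier proof) and norm-agnostic by construction, at the cost of building and justifying the extra ReLU-max cascade.

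One small technical point to watch in your construction: under the paper's lowest-index tie-breaking convention, the claimed equivalence $g(\vx')=2 \iff f(\vx')\neq f(\vx)$ can fail when $\max_{i\neq f(\vx)} s_i(\vx') = s_{f(\vx)}(\vx')$ and the tying index is below $f(\vx)$. This is easily repaired (for instance by noting that the hard instances inherited from the earlier proof are already binary and tie-free, which in fact collapses your reduction to the paper's one-line argument), but as written your reduction needs a sentence addressing ties.
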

\begin{corollary}
\label{cor:attackPolynomial}
\Cref{the:attackNp} holds even if we consider the more general set of polynomial-time classifiers w.r.t. the size of the tuple.
\end{corollary}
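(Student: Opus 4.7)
The plan is to observe that \Cref{cor:attackPolynomial} is essentially a ``free'' extension of \Cref{the:attackNp}, so the proof splits cleanly into hardness and membership, both of which are almost immediate.

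For \emph{hardness}, I would point out that any ReLU classifier is itself a polynomial-time classifier: a ReLU network $f$ can be evaluated on any input $\vx$ in time polynomial in the size of the description of $f$ plus the size of $\vx$, since it is a composition of affine maps and the ReLU activation. Consequently, the class of tuples $\langle \vx, \varepsilon, f\rangle$ with $f$ a ReLU classifier is a subset of the class with $f$ polynomial-time. The reduction from $3\text{SAT}$ already exhibited in the proof of \Cref{the:attackNp} therefore produces instances that also lie in the more general problem, so $NP$-hardness transfers directly.

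For \emph{membership in $NP$}, I would describe a nondeterministic verifier that guesses a candidate adversarial example $\vx' \in X$ and then checks (i) $\vx' \in B_\infty(\vx, \varepsilon)$ by computing $\|\vx'-\vx\|_\infty$ and comparing it to $\varepsilon$, and (ii) $f(\vx') \neq f(\vx)$ by evaluating $f$ on both inputs. The FSFP hypothesis on $X$ guarantees that $|r(\vx')| \leq \nu$, so the certificate $\vx'$ has representation size polynomial in the input tuple; check (i) is trivially polynomial; and check (ii) runs in polynomial time \emph{by definition} of the new classifier class, since $f$ is polynomial-time with respect to the size of the tuple (and hence with respect to inputs whose size is bounded by $\nu$).

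The main thing to be careful about, rather than a real obstacle, is the interpretation of ``polynomial-time w.r.t. the size of the tuple.'' I would make explicit that this means the runtime of $f$ on any $\vx' \in X$ is bounded by a polynomial in $|r(\langle \vx, \varepsilon, f\rangle)|$, so that the verifier above indeed runs in polynomial time on the guessed certificate. With this clarification the whole argument reduces to reusing \Cref{the:attackNp} for hardness and reusing the standard guess-and-check template for membership, and the corollary follows.
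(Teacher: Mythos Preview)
Your proposal is correct and follows essentially the same approach as the paper: for hardness you observe that the ReLU classifiers used in the reduction of \Cref{the:attackNp} are themselves polynomial-time classifiers, and for membership you reuse the guess-and-check certificate argument, noting that the FSFP assumption bounds the certificate size and the polynomial-time assumption on $f$ makes verification polynomial. The paper's proof is terser but makes exactly these two points.
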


A consequence of \Cref{the:attackNp} is that the complementary task of attacking, i.e. proving that no adversarial example exists (which is equivalent to proving that the classifier is locally robust on an input), is $\mathit{coNP}$-complete.

We then provide a more general upper bound that holds for classifiers in any complexity class:

\begin{theorem}[Untargeted $L^p$ attacks against $A$-time classifiers are in $NP^A$]
\label{the:attackRelative}
Let $A$ be a complexity class, let $f$ be a classifier, let $Z_f = \{ \langle \vx, y \rangle \mid y = f(\vx), \vx \in X \}$ and let $U\mhyphen{}ATT_p(f) = \{\langle \vx, \varepsilon, g \rangle \in U\mhyphen{}ATT^\prime_p \mid g = f\}$, where $U\mhyphen{}ATT^\prime_p$ is the same as $U\mhyphen{}ATT_p$ but without the ReLU classifier restriction. If $Z_f \in A$, then for every $0 < p \leq \infty$, $U\mhyphen{}ATT_p(f) \in NP^A$. 
\end{theorem}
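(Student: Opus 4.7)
The plan is to exhibit a nondeterministic polynomial-time algorithm with oracle access to $A$ that decides $U\mhyphen{}ATT_p(f)$, which directly shows membership in $NP^A$. On input $\langle \vx, \varepsilon, g \rangle$ with $g = f$, the algorithm proceeds as follows: (i) nondeterministically guess a candidate $\vx^\prime \in X$; (ii) verify deterministically that $\|\vx^\prime - \vx\|_p \leq \varepsilon$; (iii) use the $A$-oracle to determine $f(\vx)$ and $f(\vx^\prime)$; (iv) accept iff $f(\vx^\prime) \neq f(\vx)$. Correctness is immediate from the definition of $U\mhyphen{}ATT_p$: the machine has an accepting branch exactly when an adversarial example exists within the $L^p$ ball.

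The main task is to check that each of steps (i)--(iv) fits within an $NP^A$ budget. For step (i), since $U\mhyphen{}ATT^\prime_p$ inherits the FSFP restriction on $X$, every $\vx^\prime \in X$ has representation size bounded by a constant $\nu$ depending only on $X$, so the nondeterministic guess has polynomial length in the input. For step (ii), the $L^p$ distance (or a comparison against $\varepsilon$) can be computed in time polynomial in the representation sizes of $\vx, \vx^\prime, \varepsilon$ for any fixed $p \in (0, \infty]$, and more generally when $p$ is given as part of the input with a reasonable encoding. For step (iii), we exploit the hypothesis $Z_f \in A$: to recover $f(\vx)$ we query the oracle on the strings $\langle \vx, 1 \rangle, \langle \vx, 2 \rangle, \dots, \langle \vx, N \rangle$, where $N$ is the number of classes (encoded in the description of $f$), and accept the unique $y$ for which $\langle \vx, y \rangle \in Z_f$; the same procedure recovers $f(\vx^\prime)$. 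This uses $O(N)$ oracle calls, which is polynomial in the input size. Step (iv) is a single comparison.

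The subtlest point I anticipate is the reduction from computing $f$ to deciding membership in $Z_f$. Since $A$ is phrased as a class of decision problems, the function $\vx \mapsto f(\vx)$ is not directly an oracle primitive; the trick is precisely to perform $N$ membership queries to $Z_f$, one per possible label, relying on the fact that exactly one such query returns \textbf{yes}. This keeps the whole procedure in $NP^A$ because the number of queries, and the work between queries, remains polynomial in $|\langle \vx, \varepsilon, f \rangle|$. Once this bookkeeping is in place, putting together steps (i)--(iv) yields $U\mhyphen{}ATT_p(f) \in NP^A$, completing the argument. The theorem then specializes back to \Cref{the:attackNp} by taking $A = P$ (since ReLU classifiers evaluate in polynomial time) and using $NP^P = NP$.
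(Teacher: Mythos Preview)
Your proposal is correct and follows essentially the same approach as the paper's proof: nondeterministically guess $\vx^\prime$ (using the FSFP bound on its size), check the $L^p$ constraint, and use the $Z_f$ oracle to compare labels. You are in fact more careful than the paper on one point---the paper simply says ``return the output of the oracle,'' while you spell out how to recover $f(\vx)$ and $f(\vx^\prime)$ from the \emph{decision} oracle $Z_f$ via $O(N)$ membership queries, which is the right way to fill that gap.
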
 
\begin{corollary}
\label{cor:polynomialHierarchy}
For every $0 < p \leq \infty$, if $Z_f \in \Sigma_n^P$, then $U\mhyphen{}ATT_p(f) \in \Sigma_{n+1}^P$.
\end{corollary}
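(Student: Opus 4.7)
The plan is to obtain Corollary \ref{cor:polynomialHierarchy} as essentially a direct consequence of Theorem \ref{the:attackRelative} combined with the standard inductive characterization of the polynomial hierarchy. The key external fact I intend to cite is $NP^{\Sigma_n^P} = \Sigma_{n+1}^P$, which is the definitional recursion underpinning the polynomial hierarchy (see Stockmeyer's original paper, already cited in the excerpt).

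First, I would instantiate Theorem \ref{the:attackRelative} with the oracle class $A = \Sigma_n^P$. The hypothesis of the corollary is exactly $Z_f \in \Sigma_n^P$, so the theorem immediately yields $U\mhyphen{}ATT_p(f) \in NP^{\Sigma_n^P}$ for every $0 < p \leq \infty$. Second, I would invoke the equality $NP^{\Sigma_n^P} = \Sigma_{n+1}^P$ to conclude $U\mhyphen{}ATT_p(f) \in \Sigma_{n+1}^P$, which is precisely the claim.

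There is essentially no obstacle: the content of the corollary is a specialization of Theorem \ref{the:attackRelative} to a particular tower of oracle classes, and the identification $NP^{\Sigma_n^P} = \Sigma_{n+1}^P$ is a textbook fact. The only thing worth flagging in the write-up is that Theorem \ref{the:attackRelative} is stated in terms of an abstract complexity class $A$, so one should verify that $\Sigma_n^P$ is closed in the sense required by its proof (i.e., that an oracle deciding membership in a $\Sigma_n^P$ language is exactly what a $\Sigma_{n+1}^P$ machine is allowed to query). Since $\Sigma_n^P$ is by definition the class of languages decidable by such oracles, this is immediate. For $n = 0$ one recovers Theorem \ref{the:attackNp} as a sanity check, since $\Sigma_0^P = P$ gives $NP^P = NP = \Sigma_1^P$, consistent with the $NP$-membership established there.
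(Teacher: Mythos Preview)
Your proposal is correct and takes essentially the same approach as the paper: the paper's proof is the one-line remark that the corollary ``follows directly from \Cref{the:attackRelative} and the definition of $\Sigma_n^P$,'' which is exactly your instantiation $A=\Sigma_n^P$ followed by the identity $NP^{\Sigma_n^P}=\Sigma_{n+1}^P$.
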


As a consequence, if $Z_f \in P$, then $U\mhyphen{}ATT_p(f) \in NP$.
Informally, \Cref{the:attackNp} establishes that, under broad assumptions, evaluating and attacking a general classifier are in complexity classes that are strongly conjectured to be distinct, with the attack problem being the harder one. Note that, in some special cases, one can obtain polynomial-time classifiers with polynomial-time attacks by placing additional restrictions on the input distribution and/or the structure of the classifier. Refer to \Cref{sec:related} for an overview of such approaches.

\subsection{Complexity of Robust Training}

We then proceed to prove our main result, i.e. that \emph{finding a robust parameter set}, as formalized by our semantic, is in a distinct complexity class compared to the attack problem.


\begin{theorem}[Finding a set of parameters that make a ReLU network $(\varepsilon, p)$-locally robust on an input is $\Sigma_2^P$-complete] 
\label{the:defenseSigma}
Let $PL\mhyphen{}ROB_p$ be the set of tuples $\langle \vx, \varepsilon, f_\vtheta, v \rangle$ such that:
\begin{equation}
    \label{eqn:plRobCondition}
    \exists \vtheta^\prime . \left (v_f(\vtheta^\prime) = 1 \implies \forall \vx^\prime \in B_p(\vx, \varepsilon) . f_{\vtheta^\prime}(\vx^\prime) = f_{\vtheta^\prime}(\vx) \right )
\end{equation}
where $\vx \in X$, $X$ is a FSFP space and $v_f$ is a polynomial-time function that is 1 iff the input is a valid parameter set for $f$.
Then $PL\mhyphen{}ROB_\infty$ is $\Sigma_2^P$-complete.
\end{theorem}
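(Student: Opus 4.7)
The plan is to show $\Sigma_2^P$-membership from the $\exists\forall$ syntactic structure of \Cref{eqn:plRobCondition}, and $\Sigma_2^P$-hardness via a reduction from $co\Pi_2 3SAT$, whose $\Sigma_2^P$-completeness is cited above. For membership, note that the predicate has the form $\exists \vtheta'.\,\forall \vx' \in B_\infty(\vx,\varepsilon).\,R(\vtheta', \vx')$, where $R(\vtheta', \vx') \equiv (v_f(\vtheta') = 0) \lor (f_{\vtheta'}(\vx') = f_{\vtheta'}(\vx))$. Since FSFP yields polynomial-size bit encodings for $\vtheta'$ and $\vx'$, and both $v_f$ and ReLU evaluation are polynomial-time, $R$ is polynomial-time decidable, so a $\Sigma_2^P$ machine can existentially guess $\vtheta'$ and delegate the inner universal to a $\mathit{coNP}$ oracle.

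For hardness, I reduce from $co\Pi_2 3SAT$: given a Boolean formula $\phi(\vu, \vw)$ with $n$ variables in $\vu$ and $m$ in $\vw$, decide whether $\exists \vu.\,\forall \vw.\,\phi(\vu, \vw)$ holds. First, augment $\phi$ with a fresh variable $w_0$, defining $\phi'(\vu, w_0, \vw) \triangleq \neg w_0 \lor \phi(\vu, \vw)$, so that $\forall (w_0, \vw).\,\phi'(\vu, w_0, \vw) \iff \forall \vw.\,\phi(\vu, \vw)$ and $\phi'(\vu, 0, \vzero) = 1$ unconditionally. Next, build a polynomial-size ReLU network $f_\vtheta$ with parameters $\vtheta \in \R^n$ and inputs $\vx' = (w_0, \vw) \in \R^{m+1}$ via the standard Boolean-gate gadgets $\neg a = 1 - a$, $a \land b = \mathrm{ReLU}(a + b - 1)$, and $a \lor b = (a + b) - \mathrm{ReLU}(a + b - 1)$, so that $f_\vtheta(\vx') = 1$ iff $\phi'(\vtheta, \vx')$ is true whenever parameters and inputs are Boolean. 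Take $X = \{0,1\}^{m+1}$ (an FSFP space with $\mu = 1$), the anchor input $\vx = \vzero$, and $\varepsilon = 1$, so that $B_\infty(\vzero, 1) \cap X = X$ covers every Boolean assignment to $\vx'$; define $v_f(\vtheta) = 1$ iff $\vtheta \in \{0,1\}^n$. The $\neg w_0$ disjunct forces $f_\vtheta(\vzero) = 1$ for every valid $\vtheta$, so local robustness at $\vzero$ is equivalent to $f_\vtheta \equiv 1$ on $X$, hence to $\forall \vw.\,\phi(\vtheta, \vw)$; the existence of a valid, robust $\vtheta'$ therefore coincides exactly with the original $\Sigma_2$-satisfiability question.

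The main obstacle lies in the implication inside \Cref{eqn:plRobCondition}: read literally, any $\vtheta'$ with $v_f(\vtheta') = 0$ vacuously satisfies the inner condition, which would trivialize the problem. The reduction is meaningful only once the outer existential is understood as ranging over valid parameter sets, which I secure by declaring the ambient parameter domain to be exactly $\{0,1\}^n$, so that $v_f$ is trivially satisfied on every admissible $\vtheta'$. The secondary delicate step is fixing the label at the center uniformly in $\vtheta$—done via the $w_0$ trick—so that local robustness becomes exactly ``$\phi$ holds for every $\vw$'' rather than the weaker ``$f_\vtheta$ is Boolean-constant on the ball'', which would correspond to a strictly easier question. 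The remaining ingredients, namely the ReLU gadgets for Boolean gates, verifying that $X$ is FSFP, and polynomial-time bookkeeping, are routine.
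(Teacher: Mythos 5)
Your proof is correct and follows the same overall strategy as the paper: membership read off the $\exists\forall$ structure with a polynomial-time inner predicate, and hardness by reduction from a $\Sigma_2^P$-complete quantified Boolean problem in which the existential block is encoded in the parameters (forced to be binary through $v_f$) and the universal block is encoded in the inputs ranging over the $\varepsilon$-ball, with ReLU gate gadgets computing the formula and a device that pins the label of the anchor point independently of $\vtheta$. The differences are in the encoding machinery. The paper keeps a real-valued (fixed-precision) ball $B_\infty\left(\left(\tfrac12,\dots,\tfrac12\right),\tfrac12\right)=[0,1]^n$, recovers Boolean values by thresholding at $\tfrac12$ via $step$-based gadgets (which require the FSFP minorant $\mu$), and fixes the center's class with an explicit ``is-center'' detector $isx^{(s)}$; you instead take the Boolean hypercube itself as the FSFP input space with $\varepsilon=1$, and pin the anchor's class by adding a fresh input $w_0$ and the disjunct $\lnot w_0$, so no threshold gadgets, center detection, or $\mu$-arithmetic are needed. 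Your variant is somewhat more elementary and still proves the theorem as stated (the hypercube is a legitimate FSFP space and the reduction is clearly polynomial); the paper's variant exercises the same construction later reused for the $L^p$ corollaries and for \Cref{the:foolingCA}, where non-corner points of the ball matter. Two further points in your favor: you explicitly address the vacuous-truth quirk of the implication in \Cref{eqn:plRobCondition} (the paper's own proof silently reads it as a conjunction, checking $v_f(\vtheta^\prime)=1$ as part of the verifier), and your source problem should just be read as the paper's $co\Pi_2{}3SAT$, whose matrix is the negation of a 3CNF — your gate gadgets handle that negation trivially, so nothing is lost.
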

\begin{corollary}
\label{cor:lpDefense}
$PL\mhyphen{}ROB_p$ is $\Sigma_2^P$-complete for all $0 < p \leq \infty$.
\end{corollary}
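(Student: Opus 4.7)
I would prove both containment in $\Sigma_2^P$ and $\Sigma_2^P$-hardness.

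Containment follows from \Cref{the:attackNp}. The defining condition \eqref{eqn:plRobCondition} has the syntactic form $\exists \vtheta^\prime . \psi(\vtheta^\prime)$, where $\psi$ is an implication whose antecedent $v_f(\vtheta^\prime) = 1$ is polynomial-time decidable and whose consequent is the negation of a $U\mhyphen{}ATT_\infty$ instance on $f_{\vtheta^\prime}$. First I would guess a polynomial-size $\vtheta^\prime$ (legitimate under the FSFP assumption bounding the number of bits per parameter), then evaluate $v_f$ directly, and finally invoke a single $NP$ oracle call, justified by \Cref{the:attackNp}, to decide whether any $\vx^\prime \in B_\infty(\vx, \varepsilon)$ flips the label. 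This is a nondeterministic polynomial procedure with an $NP$ oracle, placing the problem in $NP^{NP} = \Sigma_2^P$.

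For hardness I would reduce from $\Sigma_2\text{3SAT}$: given a 3CNF formula $z(\vu, \vw)$ with $m$ existential and $n$ universal variables, decide whether $\exists \vu \in \{0,1\}^m \forall \vw \in \{0,1\}^n . z(\vu, \vw) = 1$. Reusing the 3CNF-to-ReLU construction implicit in the proof of \Cref{the:attackNp}, I would build a ReLU classifier $f_\vtheta$ whose parameters $\vtheta$ encode $\vu$ and whose inputs encode $\vw$, and which outputs class $1$ on a Boolean $\vw$ iff $z(\vtheta, \vw) = 1$. Choosing $\vx$ and $\varepsilon$ so that $B_\infty(\vx, \varepsilon)$ covers all Boolean encodings of $\vw$, and taking $v_f$ to accept exactly the Boolean encodings of $\vu$ and to further require $f_\vtheta(\vx) = 1$, I would obtain the equivalence: the constructed $PL\mhyphen{}ROB_\infty$ instance is true iff $\exists \vu \forall \vw . z(\vu, \vw) = 1$.

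The main obstacle will be making the Boolean ``for all'' faithful under the continuous $L^\infty$ quantification: the inner quantifier ranges over the entire ball, not just its vertices, so the construction must prevent fractional $\vx^\prime$ from acting as spurious adversarial examples. I would address this with the same piecewise-linear gadgets used for \Cref{the:attackNp}, essentially prepending a ReLU saturation/snapping layer that forces the continuous extension of $z$ to agree on each cell with its Boolean value at the nearest vertex. A secondary subtlety is the implication form of \Cref{eqn:plRobCondition}: to prevent the existential from being trivially satisfied by some $\vtheta^\prime$ with $v_f(\vtheta^\prime) = 0$, I would make $v_f$ total on the parameter space of the constructed $f_\vtheta$, so the implication collapses into the intended conjunction ``valid and robust''. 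With these pieces in place, correctness of the reduction follows by a direct case analysis on the truth of $z(\vtheta, \vw)$, and \Cref{cor:lpDefense} is then obtained by transferring the construction to other $L^p$ norms in the standard way.
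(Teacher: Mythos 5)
Your containment argument is fine (guess $\vtheta'$, check $v_f$, one $\mathit{NP}$-oracle call for the attack problem via \Cref{cor:lpAttack}), and matches the paper's placement in $\Sigma_2^P$. The hardness half, however, has a genuine flaw: the source problem you chose is not $\Sigma_2^P$-hard. Deciding $\exists \vu \forall \vw .\, z(\vu,\vw)=1$ when $z$ is a \emph{3CNF} is in $\mathit{NP}$: once $\vu$ is guessed, the residual formula in $\vw$ is a CNF, and a CNF is universally true iff every clause is either already satisfied by $\vu$ or contains a complementary pair of $\vw$-literals, which is checkable in polynomial time. So a reduction from this problem can only give $\mathit{NP}$-hardness, not $\Sigma_2^P$-hardness. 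This is precisely why the paper reduces from $co\Pi_2 3SAT$, i.e.\ $\exists \hat{\vx}\, \forall \hat{\vy}\, \lnot R(\hat{\vx},\hat{\vy})$ with $R$ a 3CNF (equivalently, an $\exists\forall$ query with a DNF matrix), which is $\Sigma_2^P$-complete by Stockmeyer; robustness of $f_{\vtheta}$ at the constant class then encodes ``$R(\vtheta,\cdot)$ is false on every Boolean assignment.'' Your construction can be repaired by this sign flip, but as written the reduction does not establish the claimed hardness.

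Two secondary points. First, the actual content of \Cref{cor:lpDefense}, given \Cref{the:defenseSigma}, is the transfer from $L^\infty$ to general $L^p$, which you defer to ``the standard way''; it is not automatic, because for $p<\infty$ the radius-$\tfrac12$ $L^p$ ball around $\vx^{(s)}$ no longer contains the Boolean vertices of $[0,1]^n$, and the paper handles this by re-encoding assignments at $\tfrac12 \pm \rho_{p,n}(\tfrac12)$ so that every Boolean $\hat{\vy}$ has a representative inside the $L^p$ ball in one direction, and by noting the gadget semantics persist on the whole ball in the other. Second, your fix for the implication in \Cref{eqn:plRobCondition} is internally inconsistent: you cannot simultaneously have $v_f$ accept exactly the Boolean encodings of $\vu$ and be ``total'' (always accepting); if it accepts everything, the existential ranges over arbitrary parameter vectors and you would need an additional snapping gadget on $\vtheta$, whereas the paper simply restricts $v_f$ to Boolean parameter vectors and reads the condition as the conjunction ``valid and robust.''
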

\begin{corollary}
\label{cor:defensePolynomial}
\Cref{the:defenseSigma} holds even if, instead of ReLU classifiers, we consider the more general set of polynomial-time classifiers w.r.t. the size of the tuple.
\end{corollary}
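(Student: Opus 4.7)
The plan is to obtain \Cref{cor:defensePolynomial} as a corollary of \Cref{the:defenseSigma} without re-executing the hardness reduction, by observing that the underlying argument for the ReLU case never truly uses ReLU-specific structure beyond polynomial-time evaluability, and that ReLU classifiers are already a subclass of polynomial-time classifiers. Concretely, I would split the proof into the two standard directions (hardness and membership) and handle them separately. Let $PL\mhyphen{}ROB^{\mathrm{poly}}_p$ denote the variant of $PL\mhyphen{}ROB_p$ in which $f_\vtheta$ ranges over polynomial-time classifiers (say, described by a polynomial-time Turing machine and a time bound encoded in unary).

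\textbf{Hardness.} For any fixed $0 < p \le \infty$ I would appeal directly to \Cref{the:defenseSigma} (together with \Cref{cor:lpDefense}): the $\mathit{co}\Pi_2 3\text{SAT}$ reduction used there produces an instance $\langle \vx, \varepsilon, f_\vtheta, v_f\rangle$ whose classifier is a ReLU network, and a ReLU network is trivially evaluated in polynomial time in $|\vtheta| + |\vx|$. Hence the produced instance already lies in $PL\mhyphen{}ROB^{\mathrm{poly}}_p$, so the same many-one reduction establishes $\Sigma_2^P$-hardness without modification.

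\textbf{Membership.} Here I would re-examine the $\Sigma_2^P$-membership argument for \Cref{the:defenseSigma} and observe that it factors through three sub-procedures: (i) guessing a parameter set $\vtheta'$ of polynomial size; (ii) checking $v_f(\vtheta') = 1$ in polynomial time (true by hypothesis on $v_f$); and (iii) using a $\mathrm{coNP}$ oracle to verify $\forall \vx' \in B_p(\vx, \varepsilon) .\ f_{\vtheta'}(\vx') = f_{\vtheta'}(\vx)$, which is a $\mathrm{coNP}$ statement as long as the inner predicate $f_{\vtheta'}(\vx') = f_{\vtheta'}(\vx)$ is checkable in polynomial time in $|\vtheta'| + |\vx'|$ and $B_p(\vx, \varepsilon)$ admits polynomial-size witnesses (which holds because $X$ is FSFP). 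The only place the ReLU restriction was invoked in the original membership proof is step (iii); replacing ``ReLU evaluation'' by ``polynomial-time evaluation'' keeps step (iii) in $\mathrm{coNP}$. Hence $PL\mhyphen{}ROB^{\mathrm{poly}}_p$ remains in $\Sigma_2^P = \mathit{NP}^{\mathrm{coNP}}$.

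\textbf{Expected obstacle.} The only mildly delicate point is ensuring that the existential witness $\vtheta'$ can be assumed to have polynomially bounded size. For ReLU classifiers this was implicit in the reduction; in the polynomial-time-classifier setting I would handle it by insisting, as part of the definition of a ``parameter set for $f$,'' that $v_f$ only accept strings of length bounded by a fixed polynomial in $|\langle \vx, \varepsilon, f, v_f\rangle|$ (or equivalently, by padding $v_f$ so this holds). With that convention in place, hardness transfers for free and membership is a routine adaptation of the argument used in \Cref{the:defenseSigma}, yielding $\Sigma_2^P$-completeness.
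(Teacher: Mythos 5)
Your proposal matches the paper's own argument: the paper proves this corollary exactly by noting that the ReLU instances produced in the reduction for \Cref{the:defenseSigma} are themselves polynomial-time classifiers (so hardness transfers immediately), and that the $\Sigma_2^P$-membership argument only needs the inner predicate to be polynomial-time checkable. Your extra remark about bounding the size of the witness $\vtheta'$ is a reasonable tightening of a point the paper leaves implicit, but the overall route is the same.
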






The $\Sigma_2^P$ complexity class includes $\mathit{NP}$ and is conjectured to be strictly harder (as part of the Polynomial Hierarchy conjecture).
In other words, if the Polynomial Hierarchy conjecture holds, \textbf{robustly training a general ReLU classifier is strictly harder than attacking it}.
Note that our results hold \emph{in the worst-case}, meaning there can be specific circumstances under which guaranteed robustness could be achieved with reasonable effort.
However, in research fields where similar asymmetries are found, they tend to translate into practically meaningful difficulty gaps: for example, $\exists \forall$ Quantified Booolean Formula problems (which are $\Sigma_2^P$-complete) are in practice much harder to solve than pure SAT problems (which are $\mathit{NP}$-complete).

We conjecture this is also the case for our result, as it mirrors the key elements in the SAT/TQBF analogy.
First, generic classifiers can learn (and are known to learn) \emph{complex input-output mappings with many local optima}.
Second, while attacks rely on existential quantification (finding an example), \emph{achieving robustness requires addressing a universally quantified problem} (since we need to guarantee the same prediction on all neighboring points).

\subsection{Relevance of the Result and Related Work}
\label{sec:related}

In this section we discuss the significance of our results, both on the theoretical and the practical side.


\paragraph{Theoretical Relevance}



As we mentioned, results about polynomial-time attack and/or robustness certificates are available, but under restrictive assumptions.
For example, \citet{mahloujifar2019can} showed that there exist exact polynomial-time attacks against classifiers trained on product distributions.
Similarly, \citet{awasthi2019robustness} showed that for degree-2 polynomial threshold functions there exists a polynomial-time algorithm that either proves that the model is robust or finds an adversarial example.

Other complexity lower bounds also exist, but again they apply under specific conditions. \citet{degwekar2019computational}, extending the work of \citet{bubeck2018adversarial} and \citet{bubeck2019adversarial}, showed that there exist certain cryptography-inspired classification tasks such that learning a classifier with a robust accuracy of 99\% is as hard as solving the Learning Parity with Noise problem (which is $\mathit{NP}$-hard). On the other hand, \citet{song2021cryptographic} showed that learning a single periodic neuron over noisy isotropic Gaussian distributions in polynomial time would imply that the Shortest Vector Problem (conjectured to be $\mathit{NP}$-hard) can be solved in polynomial time.

Finally, \citet{garg2020adversarially} provided an average-case complexity analysis, by introducing assumptions on the data-generation process. In particular, by requiring attackers to provide a valid cryptographic signature for inputs, it is possible to prevent attacks with limited computational resources from fooling the model in polynomial time.

Compared to the above results, both \Cref{the:attackNp}
and \Cref{the:defenseSigma} apply to a wider class of models.
In fact, to the best of our knowledge, \textbf{\Cref{the:defenseSigma} is the first robust training complexity bound for general ReLU classifiers}.




\paragraph{Empirical Relevance}

\Cref{the:attackNp,the:defenseSigma} imply that training-time defenses can be strictly (and significantly) harder than attacks.
This result is consistent with a recurring pattern in the literature where new defenses are routinely broken.
For example, defensive distillation \citep{papernot2016distillation} was broken by \citet{carlini2016defensive}.
Carlini also showed that several adversarial example detectors \citep{carlini2017adversarial}, as well as model-based purifiers \citep{carlini2017magnet} can be fooled.
Similarly, \citet{he2017adversarial} showed that ensembles of weak defenses can be fooled, while the defense of \citet{roth2019odds} was fooled by \citet{hosseini2019odds}.
Finally, \citet{tramer2020adaptive} and \citet{croce2022evaluating} broke a variety of adaptive defenses.

While our theorems formally hold only in the worst case, they rely at their core on two properties that can be expected to be practically relevant, and namely: 1) that NNs can learn response surfaces that are as complex as Boolean formulas, and 2) that robustness involves universal rather then existential quantification.
For this reason, we think that \textbf{the asymmetry we identified can provide valuable insight into a large body of empirical work}.

\subsection{Additional Sources of Asymmetry}
\label{sec:additionalSources}

On top of our identified structural difference, there are additional factors that may provide an advantage to the attacker, despite the fact that they lack a formal characterization at the moment of writing.
We review them in this section, both as promising directions for future theoretical research, and since awareness of them can support efforts to build more robust defenses.

First, the attacker can gather information about the target model, e.g. by using genuine queries \citep{papernot2017practical}, while the defender does not have such an advantage. As a result, the defender often needs to either make assumptions about adversarial examples \citep{hendrycks2016early, roth2019odds} or train models to identify common properties \citep{feinman2017detecting, grosse2017statistical}. These assumptions can be exploited, such as in the case of \citet{carlini2017adversarial}, who generated adversarial examples that did not have the expected properties.

Second, the attacker can focus on one input at the time, while the defender has to guarantee robustness on a large subset of the input space.
This weakness can be exploited: for example, MagNet \citep{meng2017magnet} relies on a model of the entire genuine distribution, which can be sometimes inaccurate. \citet{carlini2017magnet} broke MagNet by searching
for examples that were both classified differently and mistakenly considered genuine. 

Finally, defenses cannot significantly compromise the accuracy of a model. Adversarial training, for example, often reduces the clean accuracy of the model \citep{madry2018towards}, leading to a trade-off between accuracy and robustness.

All of these factors can, depending on the application context, exacerbate the effects of the structural asymmetry; for this reason, minimizing their impact represents another important research direction.







\section{Sidestepping the Asymmetry}
\label{sec:counterAttack}

An important aspect of our theoretical results is that they apply only to building robust classifiers at training time.
This leaves open the possibility to \emph{sidestep the asymmetry by focusing on defenses that operate at inference time}. Here, we prove that this indeed the case by means of an example, and characterize its properties since they can be expected to hold for other systems based on the same principles.

Our witness is a proof-of-concept robustness checker, called Counter-Attack (CA), that relies on adversarial attacks to compute robustness certificates at inference time, w.r.t. to a maximum $p$-norm $\varepsilon$.
CA can compute certificates in $\mathit{NP}$-time, and attacking it beyond its intended certification radius is $\Sigma_2^P$-hard, proving that \textbf{inference-time defenses can flip the attack-defense asymmetry}.
While an argument can be made that CA is usable as it is, our main aim is to pave the ground for future approaches with the same strengths, and hopefully having better scalability.




\subsection{Inference-Time Defenses can Flip the Asymmetry: the Case of Counter-Attack}



The main idea in CA is to evaluate robustness on a case-by-case basis, flagging inputs as potentially unsafe if a robust answer cannot be provided. Specifically, given a norm-order $p$ and threshold $\varepsilon$, CA operates as follows:
\begin{compactitem}
    \item For a given input $\vx$, we determine if the model is $(\varepsilon, p)$-locally robust by running an untargeted adversarial attack on $x$;
    \item If the attack succeeds, we flag the input.
\end{compactitem}
In a practical usage scenario, flagged inputs would then be processed by a slower, but more robust, model (e.g. a human) or rejected; this behavior is similar to that of approaches for learning with rejection, but with a semantic tied to adversarial robustness\footnote{Note that the learning-with-rejection approach usually involves some form of confidence score; while the decision boundary distance might be seen as a sort of score, it does not have a probabilistic interpretation. Studying CA under this light represents a promising research direction.}.

Similarly, it is possible to draw comparisons between robust transductive learning (e.g. the work of \citet{chen2021towards}) and CA. While the two techniques use different approaches, we believe that parts of our analysis might be adapted to study existing applications of transductive learning to robust classification. Refer to \Cref{sec:transductiveLearning} for a more in-depth comparison.






Finally, note that the flagging rate depends on the model robustness: a model that is locally robust on the whole input distribution would have a flagging rate of 0, while in the opposite case all inputs would be flagged.
As a consequence, this form of inference-time defense is best thought of as a \emph{complement} to training-time robustness approaches, designed to catch those cases that are hard to handle due to \Cref{the:defenseSigma}.
A technique such as CA would indeed benefit from most advances in the field of adversarial robustness: training-time defenses for a better flagging rate, and attack algorithms for more effective and efficient certificates.

\subsection{Formal Properties}

The formal properties of the CA approach depend on the kind of attack used to perform the robustness check.
Specifically, when used with an exact attack,
such as those from \citet{carlini2017provably} and \citet{tjeng2019evaluating}, CA provides formal robustness guarantees for an arbitrary $p$ and $\varepsilon$:
\begin{theorem}
\label{the:caGuarantees}
Let $0 < p \leq \infty$ and let $\varepsilon > 0$. Let $f: X \to \{1, \dots, N\}$ be a classifier and let $a$ be an exact attack.
Let $f_{CA}^a: X \to \{1, \dots, N\} \cup \{ \star \}$ be defined as:
\begin{equation}
    f_{CA}^a (\vx) = \begin{cases}
        f(\vx) & \|a_{f,p}(\vx) - \vx\|_p > \varepsilon \\
        \star & \text{otherwise}
    \end{cases}
\end{equation}
Then $\forall \vx \in X$ an $L^p$ attack on $\vx$ with radius greater than or equal to $\varepsilon$ and with $\star \not \in C(\vx)$ fails.
\end{theorem}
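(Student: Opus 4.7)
The plan is a proof by contradiction. Suppose some $\vx' \in B_p(\vx, \varepsilon)$ witnesses a successful attack, that is, $f_{CA}^a(\vx') \in C(\vx)$ with $\star \notin C(\vx)$. The goal is to show that $f_{CA}^a(\vx')$ must coincide with $f_{CA}^a(\vx)$, contradicting the definitional exclusion of $f_{CA}^a(\vx)$ from $C(\vx)$.

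The first step is to unpack the consequence of $f_{CA}^a(\vx') \neq \star$ (forced by $\star \notin C(\vx)$). By construction of $f_{CA}^a$, this gives both $\|a_{f,p}(\vx') - \vx'\|_p > \varepsilon$ and $f_{CA}^a(\vx') = f(\vx')$. Exactness of $a$ then upgrades the inequality to $d_p^*(\vx') > \varepsilon$, so $f$ is constant on the entire ball $B_p(\vx', \varepsilon)$.

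The second step is to transport this constancy back to $\vx$. The $L^p$ distance is symmetric in its arguments for every $p > 0$ (even when $\|\cdot\|_p$ fails to be subadditive), so $\|\vx - \vx'\|_p \leq \varepsilon$ yields $\vx \in B_p(\vx', \varepsilon)$, hence $f(\vx) = f(\vx')$ and therefore $f_{CA}^a(\vx') = f(\vx)$. In the regime where CA actually returns a label at $\vx$ (i.e., $f_{CA}^a(\vx) = f(\vx)$), we obtain $f_{CA}^a(\vx') = f_{CA}^a(\vx)$, directly contradicting $f_{CA}^a(\vx') \in C(\vx)$.

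The main obstacle I anticipate is the edge case $f_{CA}^a(\vx) = \star$: the chain above only yields $f_{CA}^a(\vx') = f(\vx)$ and does not immediately rule out $f(\vx) \in C(\vx)$. I would dispose of it by treating this case as vacuous: when CA already returns $\star$ on $\vx$, no label was committed to in the first place and therefore nothing is being certified there, so the theorem's conclusion is naturally read as restricted to the non-flagged regime. A secondary interpretive point is the phrasing ``radius greater than or equal to $\varepsilon$''; the proof applies directly whenever the attacker's perturbation lies in $B_p(\vx, \varepsilon)$, which is the $\varepsilon$-bounded regime highlighted in the overview of the section.
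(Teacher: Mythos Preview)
Your argument is essentially the paper's, just framed as a contradiction rather than a direct two-case split. The paper argues: for any candidate $\vx' \in B_p(\vx,\varepsilon)$, either $f(\vx') = f(\vx)$ (so $f_{CA}^a(\vx') \in \{f(\vx),\star\}$, neither in $C(\vx)$), or $f(\vx') \neq f(\vx)$, in which case by symmetry $\vx \in B_p(\vx',\varepsilon)$ is itself an adversarial example for $\vx'$, the exact attack finds something within $\varepsilon$, and $f_{CA}^a(\vx') = \star$. Your contrapositive route (not flagged $\Rightarrow$ $d_p^*(\vx') > \varepsilon$ $\Rightarrow$ $f(\vx) = f(\vx')$) covers exactly the same ground via the same symmetry observation.

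The edge case you flag is not a real obstacle, and you should not dispose of it by calling it vacuous. Once you reach $f_{CA}^a(\vx') = f(\vx)$ in step two, you are already done: by Definition~2.1 the target set satisfies $C(\vx) \subseteq \{1,\dots,N\} \setminus \{f(\vx)\}$, so $f(\vx) \notin C(\vx)$ holds regardless of whether $f_{CA}^a(\vx)$ is $f(\vx)$ or $\star$. The paper's proof invokes precisely this fact in its first case (``the attack fails because $f(\vx) \not\in C(\vx)$''). Your detour through $f_{CA}^a(\vx)$ is what manufactured the apparent gap; routing the contradiction through $f(\vx) \notin C(\vx)$ directly closes it cleanly.
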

The notation $f^a_{CA}(\vx)$ refers to the classifier $f$ combined with CA, relying on attack $a$. The condition $\star \not \in C(\vx)$ requires that the input generated by the attack should not be flagged by CA. Intuitively, CA guarantees robustness due to the fact that, if $\vx^\prime$ is an adversarial example for an input $\vx$, $\vx$ is also an adversarial example for $\vx^\prime$, which means that $\vx^\prime$ will be flagged.

Due to the properties of $L^p$ norms, CA also guarantees a degree of robustness against attacks with a different norm:

\begin{corollary}
\label{cor:caGuaranteesLp}
Let $1 \leq p \leq \infty$ and let $\varepsilon > 0$. Let $f$ be a classifier on inputs with $n$ elements that uses CA with norm $p$ and radius $\varepsilon$. Then for all inputs and for all $1 \leq r < p$, $L^r$ attacks of radius greater than or equal to $\varepsilon$ and with $\star \not \in C(\vx)$ will fail. Similarly, for all inputs and for all $r > p$, $L^r$ attacks of radius greater than or equal to $n^{\frac{1}{r} - \frac{1}{p}} \varepsilon$ and with $\star \not \in C(\vx)$ will fail (treating $\frac{1}{\infty}$ as 0).
\end{corollary}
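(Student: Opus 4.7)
My plan is to deduce both cases directly from \Cref{the:caGuarantees} by using the classical inclusion between $L^p$ and $L^r$ balls in $\R^n$. The only ingredient I need beyond the theorem itself is the standard norm-equivalence bound: for every $\vv \in \R^n$ and every $1 \leq q \leq s \leq \infty$ (with the convention $1/\infty := 0$), one has $\|\vv\|_s \leq \|\vv\|_q \leq n^{1/q - 1/s}\|\vv\|_s$. This is exactly what turns an $L^r$ budget into an $L^p$ budget, which is the regime already covered by \Cref{the:caGuarantees}.

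For the case $1 \leq r < p$, I would specialise the left-hand inequality to $q = r$ and $s = p$, obtaining $\|\vv\|_p \leq \|\vv\|_r$ and hence $B_r(\vx, \varepsilon) \subseteq B_p(\vx, \varepsilon)$. Any candidate perturbation considered by an $L^r$ attack at the stated radius therefore sits inside the $L^p$-ball monitored by CA, and the argument of \Cref{the:caGuarantees} goes through verbatim: were the attack to return some $\vx' \in C(\vx)$ with $\star \notin C(\vx)$, the point $\vx$ itself would be an adversarial example for $\vx'$ at $L^p$-distance at most $\varepsilon$, forcing $f_{CA}^a(\vx') = \star$, which contradicts $\star \notin C(\vx)$.

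For the case $r > p$, I would invoke the other side of the inequality in the form $\|\vv\|_p \leq n^{1/p - 1/r}\|\vv\|_r$. Substituting $\|\vv\|_r \leq n^{1/r - 1/p}\varepsilon$ yields $\|\vv\|_p \leq n^{1/p - 1/r} \cdot n^{1/r - 1/p}\varepsilon = \varepsilon$, so $B_r(\vx, n^{1/r - 1/p}\varepsilon) \subseteq B_p(\vx, \varepsilon)$ and exactly the same argument applies. The limiting cases $r = \infty$ or $p = \infty$ are absorbed by the convention $1/\infty := 0$, yielding the right expressions by continuity in the exponents.

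I do not anticipate any substantive obstacle: the entire proof is a one-line reduction to \Cref{the:caGuarantees} via a containment of $L^r$ into $L^p$ balls, and the rest is bookkeeping of exponents. The only subtlety worth flagging is that the $n$-dependent factor appears only in the $r > p$ case and is tight, being attained (with equality in the norm-equivalence inequality) by vectors with all coordinates equal in modulus; this is also why no dimensional factor is needed when $r < p$, since there the $L^r$-ball is already the smaller of the two and fits inside the $L^p$-ball at the same radius.
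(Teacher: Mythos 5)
Your proposal is correct and follows essentially the same route as the paper: it reduces the $L^r$ budget to an $L^p$ budget via the norm comparison $\|\vv\|_r \geq \eta \|\vv\|_p$ (with $\eta = 1$ for $r < p$ and $\eta = n^{\frac{1}{r}-\frac{1}{p}}$ for $r > p$) and then applies the symmetry argument of \Cref{the:caGuarantees} to conclude that the returned point is flagged. The only difference is that the paper proves these norm inequalities explicitly (a normalization argument for $r < p$, H\"older's inequality for $r > p$, and a limiting argument for the $\infty$ cases), whereas you invoke them as standard facts.
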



Note that since the only expensive step in CA consists in applying an adversarial attack to an input, the complexity is the same as that of a regular attack. 


\paragraph{Attacking with a Higher Radius} 

In addition to robustness guarantees for a chosen $\varepsilon$, CA provides a form of computational robustness even beyond its intended radius. To prove this statement, we first formalize the task of attacking CA (referred to as Counter-CA, or CCA). 
%
%
This involves finding, given a starting point $\vx$, an input $\vx^\prime \in B_p(\vx, \varepsilon^\prime)$ that is adversarial but not flagged by CA, i.e. such that $f(\vx^\prime) \in C(\vx) \land \forall \vx^{\prime\prime} \in B_p(\vx^\prime, \varepsilon) . f(\vx^{\prime\prime}) = f(\vx^\prime)$.
Note that, \emph{for $\varepsilon^\prime \leq \varepsilon$, no solution exists}, since $\vx \in B_p(\vx^\prime, \varepsilon)$ and $f(\vx) \neq f(\vx^\prime)$.

\begin{theorem}[Attacking CA with a higher radius is $\Sigma_2^P$-complete]
\label{the:foolingCA}
Let $CCA_p$ be the set of all tuples $\langle \vx, \varepsilon, \varepsilon^\prime, C, f \rangle$ such that:
\begin{equation}
\label{eqn:definitionCCA}
\begin{gathered}
  \exists \vx^\prime \in B_p(\vx, \varepsilon^\prime) . \\ \left ( f(\vx^\prime) \in C(\vx) \land \forall \vx^{\prime\prime} \in B_p(\vx^\prime, \varepsilon) . f(\vx^{\prime\prime}) = f(\vx^\prime) \right )  
\end{gathered}
\end{equation}

where $\vx \in X$, $X$ is a FSFP space, $\varepsilon^\prime > \varepsilon$, $f(\vx) \not \in C(\vx)$ $f$ is a ReLU classifier and whether an output is in $C(\vx^*)$ for some $\vx^*$ can be decided in polynomial time.
Then $CCA_\infty$ is $\Sigma_2^P$-complete.
\end{theorem}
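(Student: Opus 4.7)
The plan is to establish both membership in $\Sigma_2^P$ and $\Sigma_2^P$-hardness separately. For membership, the predicate in \Cref{eqn:definitionCCA} is already in the form $\exists \vx^\prime . \forall \vx^{\prime\prime} . \psi(\vx^\prime, \vx^{\prime\prime})$ with a polynomial-time-computable matrix $\psi$. By the FSFP assumption, the witness $\vx^\prime$ has polynomial-size representation, so a $\Sigma_2^P$ machine can nondeterministically guess $\vx^\prime$, verify $\vx^\prime \in B_p(\vx, \varepsilon^\prime)$ and $f(\vx^\prime) \in C(\vx)$ in polynomial time, and then make one call to a $\mathit{coNP}$ oracle to test $\forall \vx^{\prime\prime} \in B_p(\vx^\prime, \varepsilon) . f(\vx^{\prime\prime}) = f(\vx^\prime)$. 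This last query is the complement of a restricted $L^\infty$ attack on $\vx^\prime$ — which lies in $\mathit{NP}$ by \Cref{the:attackNp} — so the oracle call is in $\mathit{coNP}$, and $CCA_\infty \in \mathit{NP}^{\mathit{coNP}} = \Sigma_2^P$.

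For hardness, I would reduce from $\exists\forall\,3\mathrm{SAT}$. Given $\Phi = \exists \vu \in \{0,1\}^n . \forall \vv \in \{0,1\}^m . \phi(\vu, \vv)$ with $\phi$ in 3CNF, padded so that $\phi(\vzero, \vzero) = 0$, I construct a ReLU classifier $f$ on $n + m$ input coordinates split into a $\vu$-block and a $\vv$-block, and set $\vx = \vzero$, $\varepsilon^\prime = 1$, $\varepsilon = 1/3$, and $C(\vx) = \{1\}$. The outer $L^\infty$ ball lets the attacker pin $\vx^\prime$'s $\vu$-block at any $\vu \in \{0,1\}^n$ while centering its $\vv$-block at a point $\vv^*$ chosen so that $[\vv^*_j - 1/3, \vv^*_j + 1/3]$ spans both a region decoded as $0$ and a region decoded as $1$. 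The inner $\varepsilon$-ball then keeps each $\vu$-coordinate inside $[-1/3, 1/3] \cup [2/3, 4/3]$ — freezing the chosen $\vu$ throughout — while the $\vv$-coordinates of $\vx^{\prime\prime}$ sweep a window large enough to enumerate every boolean $\vv$. The network $f$ applies a saturating ReLU clamp to each input coordinate (with a suitable affine amplification on the $\vv$-block), then runs the 3CNF evaluator gadget from the reduction behind \Cref{the:attackNp}, labelling the input as $1$ iff $\phi(\hat\vu, \hat\vv) = 1$. Then $f(\vx) = 0 \not\in C(\vx)$, the existential choice of $\vx^\prime$ is equivalent to choosing $\vu$, and the universal condition on $\vx^{\prime\prime}$ collapses to $\forall \vv . \phi(\vu, \vv) = 1$, yielding the required biconditional with $\Phi$.

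The main obstacle I expect is ensuring that the $\vv$-decoder produces a crisp boolean for every $\vx^{\prime\prime}$ in the inner ball: a naive ReLU ramp has an ``ambiguous middle'' near the threshold where the output is fractional and $f(\vx^{\prime\prime})$ risks being neither $0$ nor $1$, breaking the equivalence with the discrete $\phi$. I would exploit the FSFP assumption to pick the amplification factor and the center $\vv^*$ so that representable $\vx^{\prime\prime}$-values avoid the threshold by at least the FSFP resolution, allowing a steep (but still finite) ReLU ramp to saturate everywhere it can actually be evaluated. A complementary safeguard is to design the final argmax so that any residual intermediate activations are mapped to the adversarial label $1$, which preserves the reduction because the universal condition still collapses to $\forall \vv . \phi(\vu, \vv) = 1$. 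Polynomial-time constructibility of all gadgets and of $C$, together with the verification that the reduction is itself polynomial, follows routinely from the techniques used in \Cref{the:attackNp} and \Cref{the:defenseSigma}.
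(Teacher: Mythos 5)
Your membership argument is fine and essentially matches the paper's (the $\exists\forall$ form with a polynomial-time matrix, using the FSFP bound on the witness sizes). The hardness reduction, however, has two genuine problems. First, the source problem you chose is not $\Sigma_2^P$-hard: with a 3CNF matrix $\phi$, the universal quantifier commutes with the conjunction of clauses, and a single clause holds for all $\vv$ iff it contains complementary $\vv$-literals or one of its $\vu$-literals is true; hence $\exists \vu \forall \vv\, \phi(\vu,\vv)$ is equivalent to satisfiability of the CNF over $\vu$ obtained by deleting the $\vv$-literals, and is therefore in $\mathit{NP}$. To get $\Sigma_2^P$-hardness the inner matrix must effectively be a DNF, i.e.\ you should reduce from $co\Pi_2{}3SAT$ ($\exists \hat{\vx} \forall \hat{\vy} . \lnot R(\hat{\vx},\hat{\vy})$ with $R$ in 3CNF), which is exactly what the paper does.

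Second, and independently, the soundness direction of your reduction does not go through: nothing forces a successful attacker to place the $\vv$-block of $\vx^\prime$ where the inner ball sweeps both boolean values of every $\vv_j$. With outer radius $1$ and inner radius $1/3$, the attacker can center a $\vv$-coordinate more than $1/3$ away from its decoding threshold, so the universal condition only certifies $\phi$ on a strict subset of assignments, and a yes-instance of $CCA_\infty$ need not yield $\forall \vv\, \phi(\vu,\vv)$. Your proposed safeguards do not repair this: mapping ambiguous activations to the adversarial label only helps the attacker (the inner-ball condition requires agreement with $f(\vx^\prime)$, which \emph{is} the adversarial label), and the FSFP-based steepening addresses crispness of the decoder, not coverage. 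The paper's construction is organized precisely around this forcing step: the outer ball is exactly $[0,1]^n$, each coordinate encodes both $\hat{x}_i$ and $\hat{y}_i$ via four sub-intervals of width $1/4$, the inner radius $\gamma$ satisfies $1/4 < \gamma < 1/2$, and any coordinate equal to $1/2$ is classified as the \emph{original} class, which punishes centers near the $\hat{x}$ threshold; elementary interval geometry then shows that every admissible center's inner ball realizes every $\hat{\vy}$ while freezing $\hat{\vx}$. Your disjoint-block variant would need an analogous punishment-plus-geometry argument before the claimed biconditional with the quantified formula holds.
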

\begin{corollary}
\label{cor:ccaLp}
$CCA_p$ is $\Sigma_2^P$-complete for all $0 < p \leq \infty$.
\end{corollary}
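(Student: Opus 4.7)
The condition in \Cref{eqn:definitionCCA} has the canonical $\Sigma_2^P$ shape $\exists^{\mathrm{poly}}\forall^{\mathrm{poly}}\,P$ with a polynomial-time inner predicate $P$: evaluating a ReLU classifier $f$, testing ball membership, and testing membership in $C(\vx)$ are all polynomial time. Since the FSFP assumption keeps the representation size of the witnesses $\vx^\prime, \vx^{\prime\prime}$ polynomial, $CCA_\infty \in \Sigma_2^P$ follows directly from the definition of the class.

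\paragraph{Hardness.} The plan is to reduce from $\Sigma_2 3\mathrm{SAT}$, the $\Sigma_2^P$-complete problem of deciding $\exists \vx \in \{0,1\}^n .\, \forall \vy \in \{0,1\}^m .\, \phi(\vx,\vy)$ with $\phi$ in 3CNF. Following the strategy in the proofs of \Cref{the:attackNp,the:defenseSigma}, I would build a polynomial-size ReLU network that computes $\phi$ on Boolean inputs and extend it to the continuous input space by prepending steep ReLU ramps that round each coordinate at $1/2$; the FSFP assumption makes this rounding exact on the grid. One or two auxiliary coordinates, gated by additional ReLU pieces, force $f$ to emit a default class $0$ at points close to $\vx_0$ (ensuring $f(\vx_0) \notin C(\vx_0) := \{1\}$), while exposing the formula output only when these gate coordinates are pushed to their ``on'' value.

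Next, I would place $\vx_0$ at the center of the hypercube in the ``formula'' coordinates and at the gate-off value elsewhere, and set $\varepsilon^\prime = 1/2$, $\varepsilon = 1/2 - \delta$ with $\delta$ smaller than the FSFP precision (so $\varepsilon^\prime > \varepsilon$ as required). Because the $L^\infty$ ball gives each coordinate its own independent budget, $\vx^\prime$ can drive the first $n$ coordinates to $0$ or $1$ (committing to a Boolean $\hat{\vx}$), leave the next $m$ coordinates at $1/2$, and turn the gate on; the inner ball $B_\infty(\vx^\prime,\varepsilon)$ then sweeps every Boolean value of the universal coordinates while keeping both the existential coordinates on the same side of the rounding threshold and the gate on. Hence the CCA condition collapses to $\phi(\hat{\vx},\hat{\vy})=1$ for every $\hat{\vy}$, i.e.\ to the original $\Sigma_2 3\mathrm{SAT}$ instance; conversely, any $\hat{\vx}$ witness immediately yields a committed $\vx^\prime$ satisfying the CCA condition.

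\paragraph{Main obstacle and the $L^p$ corollary.} The most delicate point is ruling out spurious witnesses in which $\vx^\prime$ is \emph{uncommitted}, i.e.\ has some existential coordinate near $1/2$. My argument is that being uncommitted only tightens the inner $\forall$, because $\vx^{\prime\prime}$ can now flip that bit in the rounded encoding; any such witness therefore implies an equally valid committed witness obtained by snapping the coordinate to $0$ or $1$, so soundness of the reduction is preserved. For \Cref{cor:ccaLp}, I would follow the template of \Cref{cor:lpDefense}: rescaling the input coordinates inside the ReLU network lets an $L^p$ ball of appropriate radius play the same role as the $L^\infty$ ball above, since each axis can still be moved independently up to its share of the total norm budget (using the standard inclusions between $L^p$ and $L^\infty$ balls for the scaling factors).
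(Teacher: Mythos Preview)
Your membership argument is fine and matches the paper. The hardness reduction, however, has a soundness gap that your ``uncommitted existential'' paragraph does not cover. The problematic direction is $CCA_\infty$-yes $\Rightarrow$ $\Sigma_2 3\mathrm{SAT}$-yes: nothing in your construction prevents the witness $\vx'$ from \emph{committing a universal coordinate} to $0$ or $1$. If, say, $x'_{n+j}=0$, then the inner $L^\infty$ ball of radius $\tfrac12-\delta$ keeps coordinate $n+j$ strictly below the rounding threshold $\tfrac12$, so $\hat y_j$ is pinned to $0$ for every $\vx''$. The CCA condition then only certifies $\phi(\hat\vx,\hat\vy)=1$ for $\hat\vy$ with $\hat y_j=0$, which does not give $\forall\hat\vy\,\phi(\hat\vx,\hat\vy)$. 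Your snapping argument goes the wrong way here: moving a committed universal coordinate back to $\tfrac12$ \emph{enlarges} the set of $\hat\vy$ that must be satisfied, so a cheating $\vx'$ need not yield an honest one.

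The paper blocks this by a different encoding. It uses a \emph{single} block of coordinates (padding $\hat\vx,\hat\vy$ to the same length) and packs \emph{both} bits $\hat x_i,\hat y_i$ into the same real coordinate via nested thresholds at $\tfrac14,\tfrac12,\tfrac34$. A gadget $inv_T$ forces any point with a coordinate equal to $\tfrac12$ into the ``wrong'' class, so the inner-ball constraint pushes $\vx'$ away from $\tfrac12$ in every coordinate, committing $\hat\vx$. The inner radius is chosen in $(\tfrac14,\tfrac12)$, large enough that for any such committed $\vx'$ the inner ball necessarily straddles the $\tfrac14$ or $\tfrac34$ threshold in every coordinate, so both values of $\hat y_i$ are always visited. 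Coupling the two bits on one axis is precisely what removes the degree of freedom your separate-coordinate construction leaves to the adversary.

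For the $L^p$ extension, the paper does substantially more than rescale: it changes the invalid-encoding gadgets so that points with a coordinate $\leq 0$ or $\geq 1$ are now class $1$, shrinks the inner radius to $\gamma'\in(\tfrac18,\tfrac14)$, and places the intended witness at coordinates $\tfrac14$ or $\tfrac34$ rather than $\{0,1\}$. These modifications are needed because an $L^p$ ball does not give each axis an independent budget, so the contradiction arguments that pin down $\vx^*$ must be redone one coordinate at a time with the smaller radius. A one-line ``rescale and use norm inclusions'' does not capture this.
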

\begin{corollary}
\label{cor:foolingPolynomial}
\Cref{the:foolingCA} also holds if, instead of ReLU classifiers, we consider the more general set of polynomial-time classifiers w.r.t. the size of the tuple.
\end{corollary}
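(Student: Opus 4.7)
The plan is to establish the corollary by splitting into hardness and membership, both of which should follow almost mechanically from the structure of the proof of \Cref{the:foolingCA}. For hardness, I would observe that any ReLU network of size $s$ can be evaluated in time polynomial in $s$ (a standard forward-pass argument), so the class of ReLU classifiers is a subclass of the polynomial-time classifiers. Consequently, every instance $\langle \vx, \varepsilon, \varepsilon', C, f \rangle$ used in the $\Sigma_2^P$-hardness reduction of \Cref{the:foolingCA} is, verbatim, a valid instance of $CCA_p$ restricted to polynomial-time classifiers. The reduction from $co\Pi_2 3\text{SAT}$ (or whatever base problem \Cref{the:foolingCA} uses) therefore transports unchanged, giving $\Sigma_2^P$-hardness of the broader problem.

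For membership, I would re-examine the $\exists \forall$ structure implicit in \Cref{eqn:definitionCCA} and check that the only place the ReLU assumption was used in the membership argument of \Cref{the:foolingCA} was to guarantee polynomial-time evaluation of $f$. Concretely, a $\Sigma_2^P$ verifier: (i) guesses $\vx' \in B_p(\vx, \varepsilon')$, whose representation is polynomially bounded by the FSFP assumption on $X$; (ii) verifies in polynomial time that $\vx' \in B_p(\vx, \varepsilon')$ and that $f(\vx') \in C(\vx)$, using the polynomial-time evaluation of $f$ and the hypothesis that membership in $C(\vx)$ is polynomial-time decidable; then (iii) invokes a $\mathit{coNP}$ oracle to verify $\forall \vx'' \in B_p(\vx', \varepsilon).\, f(\vx'') = f(\vx')$, whose complement is exactly an $L^p$ attack decision problem against a polynomial-time classifier, which by \Cref{cor:attackPolynomial} (the polynomial-classifier extension of \Cref{the:attackNp}) lies in $\mathit{NP}$. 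Assembling these pieces places $CCA_p$ in $\Sigma_2^P$ whenever $f$ is polynomial-time, which together with hardness yields $\Sigma_2^P$-completeness.

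The corollary for general $0 < p \leq \infty$ then follows by the same norm-translation device used for \Cref{cor:ccaLp}, since that argument only manipulated the $L^p$-ball constraint and was agnostic to the internal structure of $f$. The main subtlety, and essentially the only place where real work is needed, is step (iii) of the membership proof: I must confirm that the inner universal check reduces to the \emph{attack} problem on a polynomial-time classifier, not on a ReLU network, so that I am entitled to invoke \Cref{cor:attackPolynomial} rather than \Cref{the:attackNp}. This is immediate once I unfold the definition of $B_p(\vx', \varepsilon)$ and observe that the existential witness in the complement is again an element of an FSFP space with polynomial representation size.

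I do not expect any genuine obstacle: once one notes that every ingredient in the proof of \Cref{the:foolingCA} (FSFP guess size, polynomial-time norm check, polynomial-time decidability of $C$, and $\mathit{NP}$ containment of the inner attack subproblem) survives the relaxation from ReLU to polynomial-time $f$, the corollary follows. The proof will therefore be short, consisting essentially of the two observations above plus a pointer to \Cref{cor:attackPolynomial}.
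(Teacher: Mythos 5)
Your proposal is correct and matches the paper's argument in essence: the paper proves this corollary by noting (as for \Cref{cor:attackPolynomial}) that ReLU classifiers are themselves polynomial-time classifiers, so the hardness reduction carries over verbatim, while the $\Sigma_2^P$-membership argument for \Cref{the:foolingCA} only ever used polynomial-time evaluability of $f$. Your more explicit packaging of membership via an $\mathit{NP}$ machine with an $\mathit{NP}$/$\mathit{coNP}$ oracle and \Cref{cor:attackPolynomial} is just a rephrasing of the paper's $\exists\forall$-with-polynomial-predicate argument, so the two proofs are essentially the same.
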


In other words, under our assumptions, fooling CA can be harder than running it, thus flipping the computational asymmetry.
\Cref{cor:polynomialHierarchy} also implies that it is impossible to obtain a better gap between running the model and attacking it, from a Polynomial Hierarchy point of view (e.g. a $P$-time model that is $\Sigma_2^P$-hard to attack).
%
%
%
%
%
%
%
%
Note that, due to the worst-case semantic of \Cref{the:foolingCA}, fooling CA can be expected to be easy in practice when $\varepsilon^\prime \gg \varepsilon$: this is however a very extreme case, where the threshold might have been poorly chosen or the adversarial examples might be very different from genuine examples.


\paragraph{Partial Robustness}

While using exact attacks with CA is necessary for the best formal behavior, the approach remains capable of providing partial guarantees when used with either heuristic or lower-bounding approaches.


In particular, if a heuristic attack returns an example $\vx^\prime$ with $\|\vx - \vx^\prime\|_p \leq \varepsilon$, then $f$ is guaranteed to be locally non-robust on $\vx$. However, a heuristic attack failing to find an adversarial example does not guarantee that the model is locally robust.

Conversely, if we replace the attack with an optimization method capable of returning a lower bound $lb(\vx)$ on the decision boundary distance (e.g. a Mathematical Programming solver), we get the opposite result: if the method proves that $lb(\vx) > \varepsilon$, then $f$ is locally robust on $x$, but $f$ might be robust even if the method fails to prove it.

In other words, with heuristic attacks false positives are impossible, while with lower-bound methods false negatives are impossible.
Note that these two methods can be combined to improve scalability while retaining some formal guarantees.

These considerations provide further motivation for research in heuristic attacks, since every improvement in that field could lead to more reliable or faster robustness ``certificates''.
Additionally, they emphasize the potential of lower bounding techniques (e.g. guaranteed approximation algorithms) as efficient certification tools.
%
%
Finally, while we think that CA is an interesting technique per-se, we reiterate that the main appeal of the approach is to prove by means of an example that it is possible to circumvent the computational asymmetry we identified. We hope that future work will expand on this research direction, developing approaches that are both more efficient and with more formal guarantees.

\section{An Evaluation of Adversarial Attacks as Certification Tools}
\label{sec:experimentalEvaluation}

CA highlights an interesting aspect of adversarial attacks: since attacking a classifier and certifying its local robustness are complementary tasks, \textbf{adversarial attacks can be used to build inference-time certification techniques}.
This observation raises interest in evaluating existing (heuristic) attack algorithms in terms of their ability to serve as defenses (of which CA is just one of many possible applications). For example, in contexts where provable robustness is too resource-intensive, one could use sufficiently powerful heuristic attacks to determine with great accuracy if the model is locally robust (but without formal guarantees).

From this point of view, it should be noted that checking robustness \emph{only requires evaluating the decision boundary distance}, and not necessarily finding the adversarial example that is closest to an input $\vx$, i.e. the optimal solution of \Cref{eqn:attackAsOptimization}.
As a consequence, an attack does not need to perform well to be usable as a defense, but just to come \emph{predictably close} to the decision boundary.
For example, an algorithm that consistently overestimates the decision boundary distance by a 10\% factor would be as good as an exact attack for many practical purposes, since we could simply apply a correction to obtain an exact estimate.
This kind of evaluation is natural when viewing the issue from the perspective of our CA method, but to the best of our knowledge it has never been observed in the literature.

In this section, we thus empirically evaluate the quality of heuristic attacks.
Specifically, we test whether $\|\vx - \vx_h\|_p$, where $\vx_h$ is an adversarial example found by a heuristic attack, is predictably close to the true decision boundary distance $d_p^*(\vx)$. To the best of our knowledge, the only other work that performed a somewhat similar evaluation is \citet{carlini2017provably}, which evaluated the optimality of the Carlini \& Wagner attack on 90 MNIST samples for a $\sim$20k parameter network.

Consistently with \citet{athalye2018obfuscated} and \citet{weng2018towards}, we focus on the $L^\infty$ norm. 
Additionally, we focus on \emph{pools} of heuristic attacks. The underlying rationale is that different adversarial attacks should be able to cover for their reciprocal blind spots, providing a more reliable estimate.
%
Since this evaluation is empirical, it requires sampling from a chosen distribution, in our case specific classifiers and the MNIST \citep{mnist} and CIFAR10 \citep{cifar10} datasets. This means that the results are not guaranteed for other distributions, or for other defended models: studying how adversarial attacks fare in these cases is an important topic for future work.



\paragraph{Experimental Setup}
\label{sec:experimentalSetup}

We randomly selected $\sim$2.3k samples each from the test set of two datasets, MNIST and CIFAR10. We used three architectures per dataset (named A, B and C), each trained in three settings, namely standard training, PGD adversarial training \citep{madry2018towards} and PGD adversarial training with ReLU loss and pruning \citep{xiao2019training} (from now on referred to as ReLU training), for a total of nine configurations per dataset.

Since our analysis requires computing exact decision boundary distances, and size and depth both have a strong adverse impact on solver times, we used small and relatively shallow networks with parameters between $\sim$2k and $\sim$80k.
For this reason, the natural accuracy for standard training are significantly below the state of the art (89.63\% - 95.87\% on MNIST and 47.85\% - 55.81\% on CIFAR10). Adversarial training also had a negative effect on natural accuracies (84.54\% - 94.24\% on MNIST and 45.19\% - 51.35\% on CIFAR10), similarly to ReLU training (83.69\% - 93.57\% on MNIST and 32.27\% - 37.33\% on CIFAR10).
Note that using reachability analysis tools for NNs, such as \cite{gehr2018ai2}, capable of providing \emph{upper bounds} on the decision boundary in a reasonable time would not be sufficient for our goal: indeed both lower and upper bounds on the decision boundary distance could be arbitrarily far from $d^*(\vx)$, thus preventing us from drawing any firm conclusion.

We first ran a pool of heuristic attacks on each example, namely BIM \citep{kurakin2016adversarial}, Brendel \& Bethge \citep{brendel2019accurate}, Carlini \& Wagner \citep{carlini2017towards}, Deepfool \citep{moosavi2016deepfool}, Fast Gradient \citep{goodfellow2015explaining} and PGD \citep{madry2018towards}, in addition to simply adding uniform noise to the input.
%
Our main choice of attack parameters (from now on referred to as the ``strong'' parameter set) prioritizes finding adversarial examples at the expense of computational time.
For each example, we considered the nearest feasible adversarial example found by any attack in the pool.
We then ran the exact solver-based attack MIPVerify \citep{tjeng2019evaluating}, which is able to find the nearest adversarial example to a given input.
The entire process (including test runs) required $\sim$45k core-hours on an HPC cluster. Each node of the cluster has 384 GB of RAM and features two Intel CascadeLake 8260 CPUs, each with 24 cores and a clock frequency of 2.4GHz. 
We removed the examples for which MIPVerify crashed in at least one setting, obtaining 2241 examples for MNIST and 2269 for CIFAR10. We also excluded from our analysis all adversarial examples for which MIPVerify did not find optimal bounds (atol = 1e-5, rtol = 1e-10), which represent on average 11.95\% of the examples for MNIST and 16.30\% for CIFAR10.
Additionally, we ran the same heuristic attacks with a faster parameter set (from now on referred to as the ``balanced'' set) on a single machine with an AMD Ryzen 5 1600X six-core 3.6 GHz processor, 16 GBs of RAM and an NVIDIA GTX 1060 6 GB GPU. The process took approximately 8 hours.
Refer to \Cref{app:fullExperimentalSetup} for a more comprehensive overview of our experimental setup.



\paragraph{Distance Approximation}
\label{sec:results}

Across all settings, the mean distance found by the strong attack pool is 4.09\textpm2.02\%\ higher for MNIST and 2.21\textpm1.16\% higher for CIFAR10 than the one found by MIPVerify. For 79.81\textpm15.70\% of the MNIST instances and 98.40\textpm1.63\% of the CIFAR10 ones, the absolute difference is less than 1/255, which is the minimum distance in 8-bit image formats.
The balanced attack pool performs similarly, finding distances that are on average 4.65\textpm2.16\% higher for MNIST and 2.04\textpm1.13\% higher for CIFAR10. The difference is below 1/255 for 77.78\textpm16.08\% of MNIST examples and 98.74\textpm1.13\% of CIFAR10 examples. 
%
%
We compare the distances found by the strong attack pool for MNIST A and CIFAR10 (using standard training) with the true decision bound distances in \Cref{fig:scatter}. Refer to \Cref{app:extraResults} for the full data.

For all datasets, architectures and training techniques there appears to be a \textbf{strong, linear, correlation between the distance of the output of the heuristic attacks and the true decision boundary distance}.
We chose to measure this by training a linear regression model linking the two distances.
For the strong parameter set, we find that the average $R^2$ across all settings is 0.992\textpm0.004 for MNIST and 0.997\textpm0.003 for CIFAR10. The balanced parameter set performs similarly, achieving an $R^2$ of 0.990\textpm0.006 for MNIST and 0.998\textpm0.002 for CIFAR10.
From these results, we conjecture that increasing the computational budget of heuristic attacks does not necessarily improve predictability, although further tests would be needed to confirm such a claim.
Note that such a linear model can also be used to correct decision boudary distance overestimates in the context of heuristic CA. Another (possibly more reliable) procedure would consist in using quantile fitting; results for this approach are reported in \Cref{app:quantileCalibration}.

\definecolor{general}{HTML}{648FFF}
\definecolor{general2}{HTML}{FE6100}

\begin{figure}[ht]
\centering
\begin{subfigure}[]{0.4\textwidth}
\centering
    \begin{adjustbox}{width=\textwidth}
    \begin{tikzpicture}[transform shape]
        \begin{axis}[,
        xlabel={True Distance},
        ylabel={Estimated Distance},
        xlabel near ticks,
        ylabel near ticks,
        xtick pos=left,
        ytick pos=left,
        scatter/classes={%
        a={mark=o,draw=red}}
        ]
        \addplot [scatter src = explicit symbolic, only marks, mark size=0.5pt, draw=general2, opacity=0.2] table [x=True Distance, y=Estimated Distance, col sep = semicolon] {data/scatter/mnist-a-standard-strong.csv};
        \addplot[draw=black!90!white] coordinates {(0, 0) (0.08, 0.08)};
        \end{axis}
    ;
    \end{tikzpicture}
    \end{adjustbox}
\caption[Ex2]%
{{\small MNIST A}}    
\end{subfigure}
\hfill
\begin{subfigure}[]{0.4\textwidth}  
\centering 
    \begin{adjustbox}{width=\textwidth}
    \begin{tikzpicture}[transform shape]
        \begin{axis}[,
        xlabel={True Distance},
        ylabel={Estimated Distance},
        xlabel near ticks,
        ylabel near ticks,
        xtick pos=left,
        ytick pos=left,
        scatter/classes={%
        a={mark=o,draw=red}}
        ]
        \addplot [scatter src = explicit symbolic,only marks, mark size=0.5pt, draw=general2, opacity=0.2] table [x=True Distance, y=Estimated Distance, col sep = semicolon] {data/scatter/cifar10-a-standard-strong.csv};
        \addplot[draw=black!90!white] coordinates {(0, 0) (0.035, 0.035)};
        \end{axis}
    ;
    \end{tikzpicture}
    \end{adjustbox}
\caption[]%
{{\small CIFAR10 A}}  

\end{subfigure}
   \caption{Distances of the nearest adversarial example found by the strong attack pool compared to those found by MIPVerify on MNIST A and CIFAR10 A with standard training. The black line represents the theoretical optimum. Note that no samples are below the black line.} 
   
\label{fig:scatter}
\end{figure}
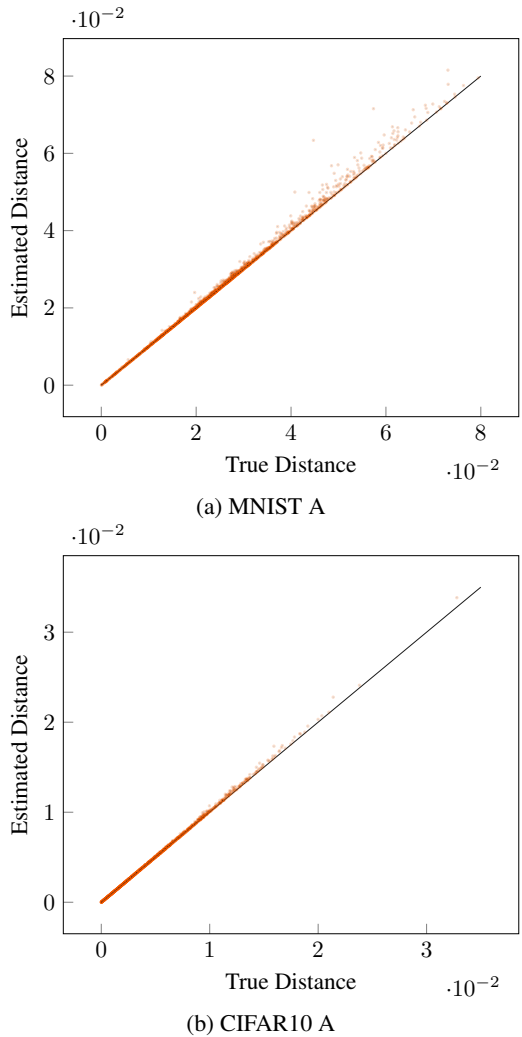

\paragraph{Attack Pool Ablation Study}
Due to the nontrivial computational requirements of running several attacks on the same input, we now study whether it is possible to drop some attacks from the pool without compromising its predictability.
Specifically, we consider all possible pools of size $n$ (with a success rate of 100\%) and pick the one with the highest average $R^2$ value over all architectures and training techniques.
As shown in \Cref{fig:ablation}, adding attacks \emph{does} increase predictability, although with diminishing returns. For example, the pool composed of the Basic Iterative Method, the Brendel \& Bethge Attack and the Carlini \& Wagner attack achieves on its own a $R^2$ value of 0.988\textpm0.004 for MNIST+strong, 0.986\textpm0.005 for MNIST+balanced, 0.935\textpm0.048 for CIFAR10+strong and 0.993\textpm0.003 for CIFAR10+balanced.
%
%
%
Moreover, dropping both the Fast Gradient Sign Method and uniform noise leads to negligible ($\ll 0.001$) absolute variations in the mean $R^2$.
%
%
These findings suggest that, as far as consistency is concerned, \textbf{the choice of attacks represents a more important factor than the number of attacks} in a pool. Refer to \Cref{app:ablation} for a more in-depth overview of how different attack selections affect consistency and accuracy.

\paragraph{Efficient Attacks}
We then explore if it is possible to increase the efficiency of attacks by optimizing for fast, rather than accurate, results. 
We pick three new parameter sets (namely Fast-100, Fast-1k and Fast-10k) designed to find the nearest adversarial examples within the respective number of calls to the model. 
We find that while Deepfool is not the strongest adversarial attack (see \Cref{app:extraResults}), it provides adequate results in very few model calls. For details on these results see \Cref{app:convergence}. 

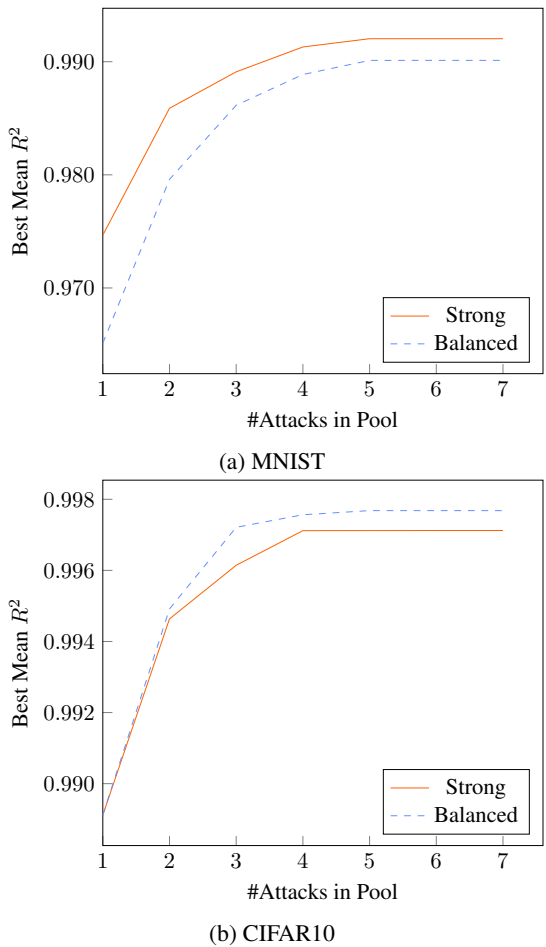
\begin{figure}
\centering
\begin{subfigure}[]{0.42\textwidth}
\centering
    \begin{adjustbox}{width=\textwidth}
    \begin{tikzpicture}[transform shape]
        \begin{axis}[
        xmin=1,
        xlabel={\#Attacks in Pool},
        ylabel={Best Mean $R^2$},
        xlabel near ticks,
        ylabel near ticks,
        xtick pos=left,
        ytick pos=left,
        y tick label style={
            /pgf/number format/.cd,
            fixed,
            fixed zerofill,
            precision=3,
            /tikz/.cd
        },
        legend pos=south east,
        ]
        \addplot [draw=general2] table [x=Pool Size, y=R2, col sep = semicolon] {data/ablation/mnist-strong.csv};
        \addplot [draw=general, dashed] table [x=Pool Size, y=R2, col sep = semicolon] {data/ablation/mnist-reduced.csv};
        \addlegendentry{Strong}
        \addlegendentry{Balanced}
        \end{axis}
    ;
    \end{tikzpicture}
    \end{adjustbox}
\caption[Ex2]%
{{\small MNIST}}    
\end{subfigure}
\hfill
\begin{subfigure}[]{0.42\textwidth}  
\centering 
    \begin{adjustbox}{width=\textwidth}
    \begin{tikzpicture}[transform shape]
        \begin{axis}[
        xmin=1,
        xlabel={\#Attacks in Pool},
        ylabel={Best Mean $R^2$},
        xlabel near ticks,
        ylabel near ticks,
        xtick pos=left,
        ytick pos=left,
        y tick label style={
            /pgf/number format/.cd,
            fixed,
            fixed zerofill,
            precision=3,
            /tikz/.cd
        },
        legend pos = south east
        ]
        \addplot [draw=general2] table [x=Pool Size, y=R2, col sep = semicolon] {data/ablation/cifar10-strong.csv};
        \addplot [draw=general, dashed] table [x=Pool Size, y=R2, col sep = semicolon] {data/ablation/cifar10-reduced.csv};
        \addlegendentry{Strong}
        \addlegendentry{Balanced}
        \end{axis}
    ;
    \end{tikzpicture}
    \end{adjustbox}
\caption[]%
{{\small CIFAR10}}  

\end{subfigure}
   \caption{Best mean $R^2$ value in relation to the number of attacks in the pool.}
   
\label{fig:ablation}
\end{figure}












\paragraph{UG100 Dataset}

We collect all the adversarial examples found by both MIPVerify and the heuristic attacks into a new dataset, which we name UG100. UG100 can be used to benchmark new adversarial attacks. Specifically, we can determine how strong an attack is by comparing it to both the theoretical optimum and heuristic attack pools. Another potential application involves studying factors that affect whether adversarial attacks perform sub-optimally.

\section{Conclusion}

In this work, we provided three contribution in the context of adversarial robustness.

First, we proved that attacking a ReLU classifier is $\mathit{NP}$-hard, while training a robust model of the same type is $\Sigma_2^P$-hard.
This result implies that defending is in the worst case harder than attacking; moreover, due to the broad applicability assumptions and the structure of its proof, it represents a reasonable explanation for the difficulty gap often encountered when building robust classifiers. The intuition behind our proofs can also help to pave the way for research into more tractable classes.

Second, we showed how inference-time techniques can sidestep the aforementioned computational asymmetry, by introducing a proof-of-concept defense called Counter Attack (CA). The central idea in CA is to check robustness by relying on adversarial attacks themselves: this strategy provides robustness guarantees, can invert the computational asymmetry, and may serve as the basis for devising more advanced inference-time defenses.

Finally, motivated by the last observation, we provided an empirical evaluation of heuristic attacks in terms of their ability to consistently approximate the decision boundary distance. We found that state-of-the-art heuristic attacks are indeed very reliable approximators of the decision boundary distance, suggesting that even heuristic attacks might be used in defensive contexts.

Our theoretical results highlight a structural challenge in adversarial ML, one that could be sidestepped through not only our CA approach, but potentially many more.
Additionally, we showed that adversarial attacks can also play a role in asymmetry-free robustness, thus opening up new research directions on their defensive applications.
We hope that our observations, combined with our formal analysis and our UG100 benchmark, can serve as the starting point for future research into these two important areas.

\section*{Acknowledgements}

\EUFlag \space The project leading to this application has received funding from the European Union’s Horizon Europe research and innovation programme under grant agreement No. 101070149. We also acknowledge the CINECA award under the ISCRA initiative, for the availability of high performance computing resources and support. Finally, we thank Andrea Borghesi, Andrea Iacco and Rebecca Montanari for their advice and support.

\bibliography{bibliography}

\begin{thebibliography}{48}
\providecommand{\natexlab}[1]{#1}
\providecommand{\url}[1]{\texttt{#1}}
\expandafter\ifx\csname urlstyle\endcsname\relax
  \providecommand{\doi}[1]{doi: #1}\else
  \providecommand{\doi}{doi: \begingroup \urlstyle{rm}\Url}\fi

\bibitem[Athalye et~al.(2018)Athalye, Carlini, and
  Wagner]{athalye2018obfuscated}
Athalye, A., Carlini, N., and Wagner, D.
\newblock Obfuscated gradients give a false sense of security: Circumventing
  defenses to adversarial examples.
\newblock In \emph{International Conference on Machine Learning}, pp.\
  274--283. PMLR, 2018.

\bibitem[Awasthi et~al.(2019)Awasthi, Dutta, and
  Vijayaraghavan]{awasthi2019robustness}
Awasthi, P., Dutta, A., and Vijayaraghavan, A.
\newblock On robustness to adversarial examples and polynomial optimization.
\newblock \emph{Advances in Neural Information Processing Systems}, 32, 2019.

\bibitem[Brendel et~al.(2019)Brendel, Rauber, K{\"u}mmerer, Ustyuzhaninov, and
  Bethge]{brendel2019accurate}
Brendel, W., Rauber, J., K{\"u}mmerer, M., Ustyuzhaninov, I., and Bethge, M.
\newblock Accurate, reliable and fast robustness evaluation.
\newblock \emph{Advances in Neural Information Processing Systems}, 32, 2019.

\bibitem[Brown et~al.(2017)Brown, Man{\'e}, Roy, Abadi, and
  Gilmer]{brown2017adversarial}
Brown, T.~B., Man{\'e}, D., Roy, A., Abadi, M., and Gilmer, J.
\newblock Adversarial patch.
\newblock \emph{arXiv preprint arXiv:1712.09665}, 2017.

\bibitem[Bubeck et~al.(2018)Bubeck, Lee, Price, and
  Razenshteyn]{bubeck2018adversarial}
Bubeck, S., Lee, Y.~T., Price, E., and Razenshteyn, I.
\newblock Adversarial examples from cryptographic pseudo-random generators.
\newblock \emph{arXiv preprint arXiv:1811.06418}, 2018.

\bibitem[Bubeck et~al.(2019)Bubeck, Lee, Price, and
  Razenshteyn]{bubeck2019adversarial}
Bubeck, S., Lee, Y.~T., Price, E., and Razenshteyn, I.
\newblock Adversarial examples from computational constraints.
\newblock In \emph{International Conference on Machine Learning}, pp.\
  831--840. PMLR, 2019.

\bibitem[Carlini \& Wagner(2016)Carlini and Wagner]{carlini2016defensive}
Carlini, N. and Wagner, D.
\newblock Defensive distillation is not robust to adversarial examples.
\newblock \emph{arXiv preprint arXiv:1607.04311}, 2016.

\bibitem[Carlini \& Wagner(2017{\natexlab{a}})Carlini and
  Wagner]{carlini2017adversarial}
Carlini, N. and Wagner, D.
\newblock Adversarial examples are not easily detected: Bypassing ten detection
  methods.
\newblock In \emph{Proceedings of the 10th ACM Workshop on Artificial
  Intelligence and Security}, pp.\  3--14, 2017{\natexlab{a}}.

\bibitem[Carlini \& Wagner(2017{\natexlab{b}})Carlini and
  Wagner]{carlini2017magnet}
Carlini, N. and Wagner, D.
\newblock Magnet and "{E}fficient defenses against adversarial attacks" are not
  robust to adversarial examples.
\newblock \emph{arXiv preprint arXiv:1711.08478}, 2017{\natexlab{b}}.

\bibitem[Carlini \& Wagner(2017{\natexlab{c}})Carlini and
  Wagner]{carlini2017towards}
Carlini, N. and Wagner, D.
\newblock Towards evaluating the robustness of neural networks.
\newblock In \emph{2017 IEEE Symposium on Security and Privacy (SP)}, pp.\
  39--57. IEEE, 2017{\natexlab{c}}.

\bibitem[Carlini et~al.(2017)Carlini, Katz, Barrett, and
  Dill]{carlini2017provably}
Carlini, N., Katz, G., Barrett, C., and Dill, D.~L.
\newblock Provably minimally-distorted adversarial examples.
\newblock \emph{arXiv preprint arXiv:1709.10207}, 2017.

\bibitem[Chen et~al.(2021)Chen, Guo, Wu, Li, Lao, Liang, and
  Jha]{chen2021towards}
Chen, J., Guo, Y., Wu, X., Li, T., Lao, Q., Liang, Y., and Jha, S.
\newblock Towards adversarial robustness via transductive learning.
\newblock \emph{arXiv preprint arXiv:2106.08387}, 2021.

\bibitem[Croce et~al.(2022)Croce, Gowal, Brunner, Shelhamer, Hein, and
  Cemgil]{croce2022evaluating}
Croce, F., Gowal, S., Brunner, T., Shelhamer, E., Hein, M., and Cemgil, T.
\newblock Evaluating the adversarial robustness of adaptive test-time defenses.
\newblock \emph{arXiv preprint arXiv:2202.13711}, 2022.

\bibitem[Degwekar et~al.(2019)Degwekar, Nakkiran, and
  Vaikuntanathan]{degwekar2019computational}
Degwekar, A., Nakkiran, P., and Vaikuntanathan, V.
\newblock Computational limitations in robust classification and win-win
  results.
\newblock In \emph{Conference on Learning Theory}, pp.\  994--1028. PMLR, 2019.

\bibitem[Ding et~al.(2019)Ding, Wang, and Jin]{ding2019advertorch}
Ding, G.~W., Wang, L., and Jin, X.
\newblock {AdverTorch} v0.1: An adversarial robustness toolbox based on
  {P}y{T}orch.
\newblock \emph{arXiv preprint arXiv:1902.07623}, 2019.

\bibitem[Dreossi et~al.(2019)Dreossi, Ghosh, Sangiovanni-Vincentelli, and
  Seshia]{dreossi2019formalization}
Dreossi, T., Ghosh, S., Sangiovanni-Vincentelli, A., and Seshia, S.~A.
\newblock A formalization of robustness for deep neural networks.
\newblock \emph{arXiv preprint arXiv:1903.10033}, 2019.

\bibitem[Feinman et~al.(2017)Feinman, Curtin, Shintre, and
  Gardner]{feinman2017detecting}
Feinman, R., Curtin, R.~R., Shintre, S., and Gardner, A.~B.
\newblock Detecting adversarial samples from artifacts.
\newblock \emph{arXiv preprint arXiv:1703.00410}, 2017.

\bibitem[Garg et~al.(2020)Garg, Jha, Mahloujifar, and
  Mohammad]{garg2020adversarially}
Garg, S., Jha, S., Mahloujifar, S., and Mohammad, M.
\newblock Adversarially robust learning could leverage computational hardness.
\newblock In \emph{Algorithmic Learning Theory}, pp.\  364--385. PMLR, 2020.

\bibitem[Gehr et~al.(2018)Gehr, Mirman, Drachsler-Cohen, Tsankov, Chaudhuri,
  and Vechev]{gehr2018ai2}
Gehr, T., Mirman, M., Drachsler-Cohen, D., Tsankov, P., Chaudhuri, S., and
  Vechev, M.
\newblock Ai2: Safety and robustness certification of neural networks with
  abstract interpretation.
\newblock In \emph{2018 IEEE symposium on security and privacy (SP)}, pp.\
  3--18. IEEE, 2018.

\bibitem[Goodfellow et~al.(2015)Goodfellow, Shlens, and
  Szegedy]{goodfellow2015explaining}
Goodfellow, I., Shlens, J., and Szegedy, C.
\newblock Explaining and harnessing adversarial examples.
\newblock In \emph{International Conference on Learning Representations}, 2015.
\newblock URL \url{http://arxiv.org/abs/1412.6572}.

\bibitem[Gottlob et~al.(2011)Gottlob, Greco, and Scarcello]{gottlob2011pure}
Gottlob, G., Greco, G., and Scarcello, F.
\newblock Pure {N}ash equilibria: Hard and easy games.
\newblock \emph{arXiv e-prints}, pp.\  arXiv--1109, 2011.

\bibitem[Grosse et~al.(2017)Grosse, Manoharan, Papernot, Backes, and
  McDaniel]{grosse2017statistical}
Grosse, K., Manoharan, P., Papernot, N., Backes, M., and McDaniel, P.
\newblock On the (statistical) detection of adversarial examples.
\newblock \emph{arXiv preprint arXiv:1702.06280}, 2017.

\bibitem[{Gurobi Optimization, LLC}(2022)]{gurobi}
{Gurobi Optimization, LLC}.
\newblock {Gurobi Optimizer Reference Manual}, 2022.
\newblock URL \url{https://www.gurobi.com}.

\bibitem[He et~al.(2017)He, Wei, Chen, Carlini, and Song]{he2017adversarial}
He, W., Wei, J., Chen, X., Carlini, N., and Song, D.
\newblock Adversarial example defenses: ensembles of weak defenses are not
  strong.
\newblock In \emph{Proceedings of the 11th USENIX Conference on Offensive
  Technologies}, pp.\  15--15, 2017.

\bibitem[Hendrycks \& Gimpel(2017)Hendrycks and Gimpel]{hendrycks2016early}
Hendrycks, D. and Gimpel, K.
\newblock Early methods for detecting adversarial images.
\newblock In \emph{International Conference on Learning Representations
  (Workshop Track)}, 2017.

\bibitem[Hosseini et~al.(2019)Hosseini, Kannan, and
  Poovendran]{hosseini2019odds}
Hosseini, H., Kannan, S., and Poovendran, R.
\newblock Are odds really odd? {B}ypassing statistical detection of adversarial
  examples.
\newblock \emph{arXiv preprint arXiv:1907.12138}, 2019.

\bibitem[Katz et~al.(2017)Katz, Barrett, Dill, Julian, and
  Kochenderfer]{katz2017reluplex}
Katz, G., Barrett, C., Dill, D., Julian, K., and Kochenderfer, M.
\newblock Reluplex: An efficient smt solver for verifying deep neural networks.
\newblock \emph{arXiv preprint arXiv:1702.01135}, 2017.

\bibitem[Kingma \& Ba(2014)Kingma and Ba]{kingma2014adam}
Kingma, D.~P. and Ba, J.
\newblock Adam: A method for stochastic optimization.
\newblock \emph{arXiv preprint arXiv:1412.6980}, 2014.

\bibitem[Koenker \& Bassett~Jr(1978)Koenker and
  Bassett~Jr]{koenker1978regression}
Koenker, R. and Bassett~Jr, G.
\newblock Regression quantiles.
\newblock \emph{Econometrica: journal of the Econometric Society}, pp.\
  33--50, 1978.

\bibitem[Krizhevsky et~al.(2009)Krizhevsky, Hinton, et~al.]{cifar10}
Krizhevsky, A., Hinton, G., et~al.
\newblock Learning multiple layers of features from tiny images.
\newblock Technical report, 2009.

\bibitem[Kurakin et~al.(2017)Kurakin, Goodfellow, and
  Bengio]{kurakin2016adversarial}
Kurakin, A., Goodfellow, I., and Bengio, S.
\newblock Adversarial machine learning at scale.
\newblock 2017.

\bibitem[LeCun et~al.(1998)LeCun, Cortes, and J.C.~Burges]{mnist}
LeCun, Y., Cortes, C., and J.C.~Burges, C.
\newblock The {MNIST} database of handwritten digits.
\newblock \url{http://yann.lecun.com/exdb/mnist/}, 1998.

\bibitem[Madry et~al.(2018)Madry, Makelov, Schmidt, Tsipras, and
  Vladu]{madry2018towards}
Madry, A., Makelov, A., Schmidt, L., Tsipras, D., and Vladu, A.
\newblock Towards deep learning models resistant to adversarial attacks.
\newblock In \emph{International Conference on Learning Representations}, 2018.

\bibitem[Mahloujifar \& Mahmoody(2019)Mahloujifar and
  Mahmoody]{mahloujifar2019can}
Mahloujifar, S. and Mahmoody, M.
\newblock Can adversarially robust learning leverage computational hardness?
\newblock In \emph{Algorithmic Learning Theory}, pp.\  581--609. PMLR, 2019.

\bibitem[Meng \& Chen(2017)Meng and Chen]{meng2017magnet}
Meng, D. and Chen, H.
\newblock Mag{N}et: a two-pronged defense against adversarial examples.
\newblock In \emph{Proceedings of the 2017 ACM SIGSAC Conference on Computer
  and Communications Security}, pp.\  135--147, 2017.

\bibitem[Moosavi-Dezfooli et~al.(2016)Moosavi-Dezfooli, Fawzi, and
  Frossard]{moosavi2016deepfool}
Moosavi-Dezfooli, S.-M., Fawzi, A., and Frossard, P.
\newblock Deepfool: a simple and accurate method to fool deep neural networks.
\newblock In \emph{Proceedings of the IEEE Conference on Computer Vision and
  Pattern Recognition}, pp.\  2574--2582, 2016.

\bibitem[Papernot et~al.(2016)Papernot, McDaniel, Wu, Jha, and
  Swami]{papernot2016distillation}
Papernot, N., McDaniel, P., Wu, X., Jha, S., and Swami, A.
\newblock Distillation as a defense to adversarial perturbations against deep
  neural networks.
\newblock In \emph{2016 IEEE Symposium on Security and Privacy (SP)}, pp.\
  582--597. IEEE, 2016.

\bibitem[Papernot et~al.(2017)Papernot, McDaniel, Goodfellow, Jha, Celik, and
  Swami]{papernot2017practical}
Papernot, N., McDaniel, P., Goodfellow, I., Jha, S., Celik, Z.~B., and Swami,
  A.
\newblock Practical black-box attacks against machine learning.
\newblock In \emph{Proceedings of the 2017 ACM on Asia Conference on Computer
  and Communications Security}, pp.\  506--519, 2017.

\bibitem[Paszke et~al.(2019)Paszke, Gross, Massa, Lerer, Bradbury, Chanan,
  Killeen, Lin, Gimelshein, Antiga, Desmaison, Kopf, Yang, DeVito, Raison,
  Tejani, Chilamkurthy, Steiner, Fang, Bai, and Chintala]{pytorch}
Paszke, A., Gross, S., Massa, F., Lerer, A., Bradbury, J., Chanan, G., Killeen,
  T., Lin, Z., Gimelshein, N., Antiga, L., Desmaison, A., Kopf, A., Yang, E.,
  DeVito, Z., Raison, M., Tejani, A., Chilamkurthy, S., Steiner, B., Fang, L.,
  Bai, J., and Chintala, S.
\newblock Py{T}orch: An imperative style, high-performance deep learning
  library.
\newblock In Wallach, H., Larochelle, H., Beygelzimer, A., d\textquotesingle
  Alch\'{e}-Buc, F., Fox, E., and Garnett, R. (eds.), \emph{Advances in Neural
  Information Processing Systems 32}, pp.\  8024--8035. Curran Associates,
  Inc., 2019.

\bibitem[Rauber et~al.(2020)Rauber, Zimmermann, Bethge, and
  Brendel]{rauber2020foolbox}
Rauber, J., Zimmermann, R., Bethge, M., and Brendel, W.
\newblock Foolbox native: Fast adversarial attacks to benchmark the robustness
  of machine learning models in pytorch, tensorflow, and jax.
\newblock \emph{Journal of Open Source Software}, 5\penalty0 (53):\penalty0
  2607, 2020.

\bibitem[Roth et~al.(2019)Roth, Kilcher, and Hofmann]{roth2019odds}
Roth, K., Kilcher, Y., and Hofmann, T.
\newblock The odds are odd: A statistical test for detecting adversarial
  examples.
\newblock In \emph{International Conference on Machine Learning}, pp.\
  5498--5507. PMLR, 2019.

\bibitem[Song et~al.(2021)Song, Zadik, and Bruna]{song2021cryptographic}
Song, M.~J., Zadik, I., and Bruna, J.
\newblock On the cryptographic hardness of learning single periodic neurons.
\newblock \emph{Advances in neural information processing systems},
  34:\penalty0 29602--29615, 2021.

\bibitem[Stockmeyer(1976)]{stockmeyer1976polynomial}
Stockmeyer, L.~J.
\newblock The polynomial-time hierarchy.
\newblock \emph{Theoretical Computer Science}, 3\penalty0 (1):\penalty0 1--22,
  1976.

\bibitem[Tjeng et~al.(2019)Tjeng, Xiao, and Tedrake]{tjeng2019evaluating}
Tjeng, V., Xiao, K.~Y., and Tedrake, R.
\newblock Evaluating robustness of neural networks with mixed integer
  programming.
\newblock In \emph{International Conference on Learning Representations}, 2019.

\bibitem[Tramer et~al.(2020)Tramer, Carlini, Brendel, and
  Madry]{tramer2020adaptive}
Tramer, F., Carlini, N., Brendel, W., and Madry, A.
\newblock On adaptive attacks to adversarial example defenses.
\newblock \emph{Advances in Neural Information Processing Systems},
  33:\penalty0 1633--1645, 2020.

\bibitem[Weng et~al.(2018)Weng, Zhang, Chen, Song, Hsieh, Daniel, Boning, and
  Dhillon]{weng2018towards}
Weng, L., Zhang, H., Chen, H., Song, Z., Hsieh, C.-J., Daniel, L., Boning, D.,
  and Dhillon, I.
\newblock Towards fast computation of certified robustness for {R}e{LU}
  networks.
\newblock In \emph{International Conference on Machine Learning}, pp.\
  5276--5285. PMLR, 2018.

\bibitem[Wu et~al.(2020)Wu, Lim, Davis, and Goldstein]{wu2020making}
Wu, Z., Lim, S.-N., Davis, L.~S., and Goldstein, T.
\newblock Making an invisibility cloak: Real world adversarial attacks on
  object detectors.
\newblock In \emph{European Conference on Computer Vision}, pp.\  1--17.
  Springer, 2020.

\bibitem[Xiao et~al.(2019)Xiao, Tjeng, Shafiullah, and Madry]{xiao2019training}
Xiao, K.~Y., Tjeng, V., Shafiullah, N.~M., and Madry, A.
\newblock Training for faster adversarial robustness verification via inducing
  {R}e{LU} stability.
\newblock In \emph{International Conference on Learning Representations}, 2019.

\end{thebibliography}
\bibliographystyle{icml2023}

\newpage
\appendix
\onecolumn

\mathchardef\mhyphen="2D

\appendix

\section{Proof Preliminaries}

\subsection{Notation}

We use $f_i$ to denote the $i$-th output of a network. We define $f$ as
\begin{equation}
    f(\vx) = \arg\max_{i} \{ f_i(\vx) \}
\end{equation}
for situations where multiple outputs are equal to the maximum, we use the class with the lowest index.

\subsection{\texorpdfstring{$\mu$}{mu} Arithmetic}

Given two FSFP spaces $X$ and $X^\prime$ with distance minorants $\mu$ and $\mu^\prime$, we can compute new positive minorants after applying functions to the spaces as follows:
\begin{itemize}
    \item Sum of two vectors: $\mu_{X+X^\prime} = min(\mu, \mu^\prime)$;
    \item Multiplication by a constant: $\mu_{\alpha X} = |\alpha| \mu$;
    \item ReLU: $\mu_{ReLU(X)} = \mu$.
\end{itemize}

Since it is possible to compute the distance minorant of a space transformed by any of these functions in polynomial time, it is also possible to compute the distance minorant of a space transformed by any composition of such functions in polynomial time.


\subsection{Functions}

We now provide an overview of several functions that can be obtained by using linear combinations and ReLUs.

\paragraph{$\max$}

\citet{carlini2017provably} showed that we can implement the $\max$ function using linear combinations and ReLUs as follows:

\begin{equation}
    \max(x, y) = ReLU(x - y) + y
\end{equation}

We can also obtain an $n$-ary version of $\max$ by chaining multiple instances together.


\paragraph{$step$} If $X$ is a FSFP space, then the following scalar function:
\begin{equation}
    step_0(x) = \frac{1}{\mu} \left (ReLU(x) - ReLU(x - \mu) \right)
\end{equation}
is such that $\forall i . \forall \vx \in X$,  $step_0(x_i)$ is 0 for $x_i \leq 0$ and 1 for $x_i > 0$. 

Similarly, let $step_1$ be defined as follows:
\begin{equation}
    step_1(x) = \frac{1}{\mu} \left ( ReLU(x + \mu) - ReLU(x) \right )
\end{equation}

Note that $\forall i . \forall \vx \in X$, $step_1(x_i) = 0$ for $x_i < 0$ and $step_1(x_i) = 1$ for $x_i \geq 0$.

\paragraph{Boolean Functions}

We then define the Boolean functions $not: \{0, 1\} \to \{0, 1\}$, $and: \{0, 1\}^2 \to \{0, 1\}$, $or: \{0, 1\}^2 \to \{0, 1\}$ and $if: \{0, 1\}^3 \to \{0, 1\}$ as follows:
\begin{align}
    not(x) &= 1 - x \\
    and(x, y) &= step_1(x + y - 2) \\
    or(x, y) &= step_1(x + y) \\
    if(a, b, c) &= or(and(not(a), b), and(a, c))
\end{align}

where $if(a, b, c)$ returns $b$ if $a = 0$ and $c$ otherwise.

Note that we can obtain $n$-ary variants of $and$ and $or$ by chaining multiple instances together.

\paragraph{$cnf_3$} Given a set $\vz = \{\{z_{1,1}, \dots, z_{1,3}\}, \dots, \{z_{n, 1}, z_{n,3}\}\}$ of Boolean atoms (i.e. $z_{i,j}(\vx) = x_k$ or $\lnot x_k$ for a certain $k$) defined on an $n$-long Boolean vector $\vx$, $cnf_3(\vz)$ returns the following Boolean function:
\begin{equation}
    cnf^\prime_3(\vx) = \bigwedge_{i = 1, \dots, n}\bigvee_{j = 1, \dots, 3}z_{i, j}(\vx)
\end{equation}

We refer to $\vz$ as a 3CNF formula.

Since $cnf^\prime_3$ only uses negation, conjunction and disjunction, it can be implemented using respectively $neg$, $and$ and $or$.
Note that, given $\vz$, we can build $cnf^\prime_3$ in polynomial time w.r.t. the size of $\vz$.

\paragraph{Comparison Functions}

We can use $step_0$, $step_1$ and $neg$ to obtain comparison functions as follows:
\begin{align}
    geq(x, k) &= step_1(x - k) \\
    gt(x, k) &= step_0(x, k) \\
    leq(x, k) &= not(gt(x, k)) \\
    lt(x, k) &= not(geq(x, k)) \\
    eq(x, k) &= and(geq(x, k), leq(x, k))
\end{align}


\noindent
Moreover, we define $open : \mathbb{R}^3 \to \{0, 1\}$ as follows:
\begin{equation}
    open(x, a, b) = and(gt(x, a), lt(x, b))
\end{equation}




\section{Proof of \Cref{the:attackNp}}

\subsection{\texorpdfstring{$U\mhyphen{}ATT_\infty \in NP$}{U-ATT Infinity in NP}}

To prove that $U\mhyphen{}ATT_\infty \in NP$, we show that there exists a polynomial certificate for $U\mhyphen{}ATT$ that can be checked in polynomial time. The certificate is the value of $\vx^\prime$, which will have a representation of the same size as $\vx$ (due to the FSFP space assumption) and can be checked by verifying:
\begin{itemize}
    \item $\|\vx - \vx'\|_\infty \leq \varepsilon$, which can be checked in linear time;
    \item $f_\vtheta(\vx') \neq f(\vx)$, which can be checked in polynomial time.
\end{itemize}

\subsection{\texorpdfstring{$U\mhyphen{}ATT_\infty$ is $NP$-Hard}{U-ATT Infinity is NP-Hard}}
\label{sec:attNpHard}

We will prove that $U\mhyphen{}ATT_\infty$ is $NP$-Hard by showing that $3SAT \leq U\mhyphen{}ATT_\infty$.

Given a set of 3CNF clauses $\vz = \{\{z_{11}, z_{12}, z_{13}\}, \dots, \{z_{m1}, z_{m2}, z_{m3} \}\}$ defined on $n$ Boolean variables $x_1, \dots, x_n$, we construct the following query $q(\vz)$ for $U\mhyphen{}ATT_\infty$: 
\begin{equation}
    q(\vz) = \langle \vx^{(s)}, \frac{1}{2}, f \rangle
\end{equation}

where $\vx^{(s)} = \left ( \frac{1}{2}, \dots, \frac{1}{2} \right )$ is a vector with $n$ elements.
Verifying $q(\vz) \in U\mhyphen{}ATT_\infty$ is equivalent to checking:
\begin{equation}
\label{eqn:untAttCondition}
    \exists \vx^\prime \in B_\infty \left (x_{s}, \frac{1}{2} \right ) . f(\vx^\prime) \neq f(\vx^{(s)})
\end{equation}

Note that $\vx \in B_\infty \left (\vx^{(s)}, \frac{1}{2} \right )$ is equivalent to $\vx \in [0, 1]^n$.

\paragraph{Truth Values} We will encode the truth values of $\hat{\vx}$ as follows:
\begin{align}
    x^\prime_i \in \left [0, \frac{1}{2} \right ] \iff \hat{x}_i = 0 \\
    x^\prime_i \in \left (\frac{1}{2}, 1 \right ] \iff \hat{x}_i = 1
\end{align}

We can obtain the truth value of a scalar variable by using $isT(x_i) = gt \left (x_i, \frac{1}{2} \right )$. Let $bin(\vx) = or (isT(x_1), \dots, isT(x_n))$.


\paragraph{Definition of $f$} We define $f$ as follows:
\begin{align}
    f_1(\vx) &= and(not(isx^{(s)}(\vx)), cnf^\prime_3(bin(\vx))) \\
    f_0(\vx) &= not(f_1(\vx))
\end{align}

where $cnf^\prime_3 = cnf_3(\vz)$ and $isx^{(s)}$ is defined as follows:
\begin{equation}
    isx^{(s)}(\vx) = and \left (eq \left (x_1, \frac{1}{2} \right ), \dots, eq \left (x_n, \frac{1}{2} \right ) \right )
\end{equation}

Note that $f$ is designed such that $f(\vx^{(s)}) = 0$, while for $\vx^\prime \neq \vx^{(s)}$, $f(\vx^\prime) = 1$ iff the formula $\vz$ is true for the variable assignment $bin(\vx^\prime)$.


\begin{lemma}
$\vz \in 3SAT \implies q(\vz) \in U\mhyphen{}ATT_\infty$
\end{lemma}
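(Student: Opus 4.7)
\medskip

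\noindent\textbf{Proof proposal for the lemma.}

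The plan is a direct constructive argument: starting from any satisfying Boolean assignment of $\vz$, I will lift it to a point $\vx^\prime \in [0,1]^n$ and then verify that $f$ as constructed classifies $\vx^\prime$ differently from $\vx^{(s)}$. Concretely, suppose $\vz$ is satisfiable and let $\hat{\vx} \in \{0,1\}^n$ be a satisfying assignment. I would set $\vx^\prime$ coordinatewise by $x^\prime_i = 0$ if $\hat{x}_i = 0$ and $x^\prime_i = 1$ if $\hat{x}_i = 1$. Since each coordinate lies in $[0,1]$, we have $\|\vx^\prime - \vx^{(s)}\|_\infty \leq 1/2$, so $\vx^\prime \in B_\infty(\vx^{(s)}, 1/2)$, which is the admissibility requirement in \Cref{eqn:untAttCondition}.

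Next I would compute the two classifier outputs. By construction $isx^{(s)}(\vx^{(s)}) = 1$ (all coordinates equal $1/2$), hence $f_1(\vx^{(s)}) = 0$ and $f_0(\vx^{(s)}) = 1$, so $f(\vx^{(s)}) = 0$. For $\vx^\prime$, assuming $n \geq 1$ we have $x^\prime_1 \in \{0,1\}$, so $eq(x^\prime_1, 1/2) = 0$ and therefore $isx^{(s)}(\vx^\prime) = 0$, i.e. $not(isx^{(s)}(\vx^\prime)) = 1$. Under the intended reading of $bin$ as the coordinatewise binarization $(isT(x^\prime_1), \dots, isT(x^\prime_n))$, the identity $isT(x^\prime_i) = gt(x^\prime_i, 1/2) = \hat{x}_i$ follows directly from the encoding, so $bin(\vx^\prime) = \hat{\vx}$. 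Since $\hat{\vx}$ satisfies $\vz$, we get $cnf^\prime_3(bin(\vx^\prime)) = 1$, hence $f_1(\vx^\prime) = and(1,1) = 1$ and $f(\vx^\prime) = 1 \neq 0 = f(\vx^{(s)})$. This exhibits the required adversarial example and so $q(\vz) \in U\mhyphen{}ATT_\infty$.

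I do not anticipate a real obstacle here, since all the heavy lifting was front-loaded into the construction of $f$: once the gadgets $isx^{(s)}$, $bin$, $isT$, and $cnf^\prime_3$ are accepted to behave as specified on inputs in $[0,1]^n$, the lemma reduces to a one-line case analysis. The only subtle point to check carefully is the degenerate case $n = 0$ (vacuously handled, since then $3SAT$ is trivially satisfiable and $\vx^{(s)}$ is the empty vector, which we can exclude by convention), and the fact that strict inequalities in $gt(\cdot, 1/2)$ align correctly with the chosen encoding at the boundary values $0$ and $1$; both are immediate from the definitions of $step_0$ and $step_1$ applied to the FSFP grid.
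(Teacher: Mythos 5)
Your proposal is correct and follows essentially the same route as the paper: the satisfying assignment, viewed as a point of $\{0,1\}^n \subseteq B_\infty(\vx^{(s)}, \tfrac{1}{2})$, is taken as the adversarial example, with $bin$ recovering the assignment and $isx^{(s)}$ ensuring the point is classified $1$ while $\vx^{(s)}$ is classified $0$. Your extra checks (the value of $f(\vx^{(s)})$, the behavior of $gt(\cdot,\tfrac{1}{2})$ at $0$ and $1$, and reading $bin$ as coordinatewise binarization despite the paper's notation) are consistent with the paper's construction and add nothing that departs from it.
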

\begin{proof}
Let $\vz \in 3SAT$. Therefore $\exists \vx^* \in \{0, 1\}^n$ such that $cnf_3(\vz)(\vx^*) = 1$. Since $bin(\vx^*) = \vx^*$ and $\vx^* \neq \vx^{(s)}$, $f(\vx^*) = 1$, which means that it is a valid solution for \Cref{eqn:untAttCondition}. From this we can conclude that $q(\vz) \in U\mhyphen{}ATT_\infty$.
\end{proof}

\begin{lemma}
$q(\vz) \in U\mhyphen{}ATT_\infty \implies \vz \in 3SAT$
\end{lemma}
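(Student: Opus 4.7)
The plan is to mirror the forward direction but extract a satisfying assignment from any successful attack. First I would compute $f(\vx^{(s)})$ to pin down the label being attacked: since every coordinate of $\vx^{(s)}$ is exactly $1/2$, we have $eq(x^{(s)}_i,1/2)=1$ for all $i$, so $isx^{(s)}(\vx^{(s)})=1$, and hence $f_1(\vx^{(s)})=\mathrm{and}(\mathrm{not}(1),\cdot)=0$ and $f_0(\vx^{(s)})=1$. Thus $f(\vx^{(s)})=0$.

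Next, I would unpack the hypothesis $q(\vz)\in U\mhyphen{}ATT_\infty$. This yields some $\vx'\in B_\infty(\vx^{(s)},1/2)=[0,1]^n$ with $f(\vx')\neq f(\vx^{(s)})=0$, so $f(\vx')=1$, i.e.\ $f_1(\vx')=1$. By the definition
\[
f_1(\vx') = \mathrm{and}\bigl(\mathrm{not}(isx^{(s)}(\vx')),\, cnf^\prime_3(bin(\vx'))\bigr),
\]
both conjuncts must equal $1$. In particular $cnf^\prime_3(bin(\vx'))=1$.

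Finally, I would read off a satisfying assignment. Define $\vx^\star=bin(\vx')\in\{0,1\}^n$; by the construction of $isT$ and $bin$, each coordinate $x^\star_i$ is $1$ iff $x'_i>1/2$ and $0$ otherwise. Since $cnf^\prime_3$ implements $\bigwedge_i\bigvee_j z_{ij}$ on Boolean inputs, the equality $cnf^\prime_3(\vx^\star)=1$ means that $\vx^\star$ satisfies $\vz$, hence $\vz\in 3SAT$.

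The only subtle step is verifying that the intermediate subcircuits indeed take Boolean values on the (possibly real-valued) input $\vx'$: $isT$ is built from $step_0$, which outputs $\{0,1\}$ on the FSFP input space, and $bin$ is an $\mathrm{or}$ of such gates whose inputs are all in $\{0,1\}$, so $bin(\vx')\in\{0,1\}^n$ and $cnf^\prime_3$ is being evaluated on a genuine Boolean assignment. Once this is spelled out, the implication follows immediately, and combined with the converse direction we get $3SAT\leq U\mhyphen{}ATT_\infty$, completing the $NP$-hardness reduction.
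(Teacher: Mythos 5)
Your proposal is correct and follows essentially the same route as the paper: extract the adversarial example $\vx' \in [0,1]^n$, observe that $f(\vx') = 1$ forces $cnf^\prime_3(bin(\vx')) = 1$, and read off $bin(\vx')$ as a satisfying Boolean assignment for $\vz$. The extra steps you spell out (that $f(\vx^{(s)}) = 0$ and that the intermediate gates output genuine Boolean values on real-valued inputs) are details the paper leaves implicit, not a different argument.
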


\begin{proof}
Since $q(\vz) \in U\mhyphen{}ATT_\infty$, $\exists \vx^* \in [0, 1]^n \setminus \{\vx^{(s)}\}$ that is a solution to \Cref{eqn:untAttCondition} (i.e. $f(\vx^*) = 1$). Then $cnf^\prime_3(bin(\vx^*)) = 1$, which means that there exists a $\hat{\vx}$ (i.e. $bin(\vx^*)$) such that $cnf^\prime_3(\hat{\vx}) = 1$. From this we can conclude that $\vz \in 3SAT$.
\end{proof}

\noindent
Since:
\begin{itemize}
    \item $q(\vz)$ can be computed in polynomial time;
    \item $\vz \in 3SAT \implies q(\vz) \in U\mhyphen{}ATT_\infty$;
    \item $q(\vz) \in U\mhyphen{}ATT \implies \vz \in 3SAT$.
\end{itemize}
we can conclude that $3SAT \leq U\mhyphen{}ATT_\infty$.

\subsection{Proof of \Cref{cor:lpAttack}}

\subsubsection{\texorpdfstring{$U\mhyphen{}ATT_p \in NP$}{U-ATT p in NP}}
The proof is identical to the one for $U\mhyphen{}ATT_\infty$.

\subsubsection{\texorpdfstring{$U\mhyphen{}ATT_p$ is $NP$-Hard}{U-ATT p is NP-Hard}}





The proof that $q(\vz) \in U\mhyphen{}ATT_p \implies \vz \in 3SAT$ is very similar to the one for $U\mhyphen{}ATT_\infty$. Since $q(\vz) \in U\mhyphen{}ATT_p$, we know that $\exists \vx^* \in B_p(\vx^{(s)}, \varepsilon) \setminus \{\vx^{(s)}\} . f(\vx^*) = 1$, which means that there exists a $\hat{\vx}$ (i.e. $bin(\vx^*)$) such that $cnf^\prime_3(\hat{\vx}) = 1$. From this we can conclude that $\vz \in 3SAT$.

The proof that $\vz \in 3SAT \implies q(\vz) \in U\mhyphen{}ATT_p$ is slightly different, due to the fact that since $\vx^* \not \in B_p(\vx^{(s)}, \frac{1}{2})$ we need to use a different input to prove that $\exists \vx^\prime \in B_p(\vx^{(s)}) . f(\vx^\prime) = 1$.

Let $0 < p < \infty$. Given a positive integer $n$ and a real $0 < p < \infty$, let $\rho_{p,n}(r)$ be a positive minorant of the $L^\infty$ norm of a vector on the $L^p$ sphere of radius $r$. For example, for $n = 2$, $p = 2$ and $r = 1$, any positive value less than or equal to $\frac{\sqrt{2}}{2}$ is suitable. Note that, for $0 < p < \infty$ and $n, r > 0$, $\rho_{p,n}(r) < r$. 

Let $\vz \in 3SAT$. Therefore $\exists \vx^* \in \{0, 1\}^n$ such that $cnf_3(\vz)(\vx^*) = 1$. Let $\vx^{**}$ be defined as:
\begin{equation}
    x^{**}_i = \begin{cases}\frac{1}{2} - \rho_{p, n}\left(\frac{1}{2}\right) & x^*_i = 0 \\
    \frac{1}{2} + \rho_{p, n}\left(\frac{1}{2}\right) & x^*_i = 1 \end{cases}
\end{equation}

By construction, $\vx^{**} \in B_p \left (\vx^{(s)}, \rho_{p, n}\left(\frac{1}{2}\right) \right )$. Additionally, $bin(\vx^{**}) = \vx^*$, and since we know that $\vz$ is true for the variable assignment $\vx^*$, we can conclude that $f(\vx^{**}) = 1$, which means that $\vx^{**}$ is a valid solution for \Cref{eqn:untAttCondition}. From this we can conclude that $q(\vz) \in U\mhyphen{}ATT_p$.

\subsection{Proof of \Cref{cor:attackTargeted}}

The proof is identical to the proof of \Cref{the:attackNp} (for $p = \infty$) and \Cref{cor:lpAttack} (for $0 < p < \infty$), with the exception of requiring $f(\vx^\prime) = 1$.

\subsection{Proof of \Cref{cor:attackPolynomial}}

The proof that attacking a polynomial-time classifier is in $NP$ is the same as that for \Cref{the:attackNp}.

Attacking a polynomial-time classifier is $NP$-hard due to the fact that the ReLU networks defined in the proof of \Cref{the:attackNp} are polynomial-time classifiers. Since attacking a general polynomial-time classifier is a generalization of attacking a ReLU polynomial-time classifier, the problem is $NP$-hard.

\section{Proof of \Cref{the:attackRelative}}

Proving that $U\mhyphen{}ATT_p(f) \in NP^A$ means proving that it can be solved in polynomial time by a non-deterministic Turing machine with an oracle that can solve a problem in $A$. Since $Z_f \in A$, we can do so by picking a non-deterministic Turing machine with access to an oracle that solves $Z_f$.
We then generate non-deterministically the adversarial example and return the output of the oracle. Due to the FSFP assumption, we know that the size of this input is the same as the size of the starting point, which means that it can be generated non-deterministically in polynomial time. Therefore, $U\mhyphen{}ATT_p(f) \in NP^A$. 

\subsection{Proof of \Cref{cor:polynomialHierarchy}}

Follows directly from \Cref{the:attackRelative} and the definition of $\Sigma_n^P$.

\section{Proof of \Cref{the:defenseSigma}}

\subsection{Preliminaries}

$\Pi_2^P 3SAT$ is the set of all $\vz$ such that:
\begin{equation}
    \forall \hat{\vx} \exists \hat{\vy} . R(\hat{\vx}, \hat{\vy})
\end{equation}

where $R(\hat{\vx}, \hat{\vy}) = cnf_3(\vz)(\hat{x}_1, \dots, \hat{x}_n, \hat{y}_1, \dots, \hat{y}_n)$.

\citet{stockmeyer1976polynomial} showed that $\Pi_2 3SAT$ (also known as $\forall\exists 3 SAT$) is $\Pi_2^P$-complete. Therefore, $co\Pi_2{}3SAT$, which is defined as the set of all $\vz$ such that:
\begin{equation}
\label{eqn:coForallExists3SatCondition}
    \exists \hat{\vx} \forall \hat{\vy} \lnot R(\hat{\vx}, \hat{\vy})
\end{equation}
is $\Sigma_2^P$-complete.

\subsection{\texorpdfstring{$PL\mhyphen{}ROB_\infty \in \Sigma_2^P$}{PL-ROB Infinity in Sigma2P}}
\label{sec:plRobInSigma}

$PL\mhyphen{}ROB_\infty \in \Sigma_2^P$ if there exists a problem $A \in P$ and a polynomial $q$ such that $\forall \Gamma = \langle \vx, \varepsilon, f_\vtheta, v_f \rangle$:
\begin{equation}
    \Gamma \in PL\mhyphen{}ROB \iff \exists \vy . |\vy| \leq q(|\Gamma|) \land (\forall \vz . (|\vz| \leq q(|\Gamma|) \implies \langle \Gamma, \vy, \vz \rangle \in A ))
\end{equation}
This can be proven by setting $\vy = \vtheta^\prime$, $\vz = \vx^\prime$ and $A$ as the set of triplets $\langle \Gamma, \vtheta^\prime, \vx^\prime\rangle$ such that all of the following are true:
\begin{itemize}
    \item $v_f(\vtheta^\prime) = 1$;
    \item $\|\vx - \vx^\prime\|_\infty \leq \varepsilon$;
    \item $f_\vtheta(\vx) = f_\vtheta(\vx^\prime)$.
\end{itemize}
Since all properties can be checked in polynomial time, $A \in P$ and thus $PL\mhyphen{}ROB_\infty \in \Sigma_2^P$.

\subsection{\texorpdfstring{$PL\mhyphen{}ROB_\infty$ is $\Sigma_2^P$-Hard}{PL-ROB Infinity is Sigma2P-Hard}}

We will prove that $PL\mhyphen{}ROB_\infty$ is $\Sigma_2^P$-hard by showing that $co\Pi_2{}3SAT \leq  PL\mhyphen{}ROB_\infty$.

Let $n_{\hat{\vx}}$ be the length of $\hat{\vx}$ and let $n_{\hat{\vy}}$ be the length of $\hat{\vy}$.

Given a set $\vz$ of 3CNF clauses, we construct the following query $q(\vz)$ for $PL\mhyphen{}ROB$:
\begin{equation}
    q(\vz) = \langle \vx^{(s)}, \frac{1}{2}, f_\vtheta, v_f \rangle
\end{equation}
where $\vx^{(s)} = \left (\frac{1}{2}, \dots, \frac{1}{2} \right )$ is a vector with $n_{\hat{\vy}}$ elements and $v_f(\vtheta) = 1 \iff \vtheta \in \{0, 1\}^{n_{\hat{\vx}}}$. Note that $\vtheta^\prime \in \{0, 1\}^{n_{\hat{\vx}}}$ can be checked in polynomial time w.r.t. the size of the input.

\paragraph{Truth Values} We will encode the truth values of $\hat{\vx}$ as a set of binary parameters $\vtheta^\prime$, while we will encode the truth values of $\hat{\vy}$ using $\vx^\prime$ through the same technique mentioned in \Cref{sec:attNpHard}.

\paragraph{Definition of $f_\vtheta$}
We define $f_\vtheta$ as follows:
\begin{itemize}
    \item $f_{\vtheta, 1}(\vx) = and(not(isx^{(s)}(\vx)), cnf_3^{\prime\prime}(\vtheta, \vx))$, where $cnf_3^{\prime\prime}$ is defined over $\vtheta$ and $bin(\vx)$ using the same technique mentioned in \Cref{sec:attNpHard} and $isx^{(s)}(\vx) = and_{i=1,\dots,n}eq(x_i, \frac{1}{2})$;
    \item $f_{\vtheta, 0}(\vx) = not(f_{\vtheta,1}(\vx))$.
\end{itemize}

Note that $f_\vtheta(\vx^{(s)}) = 0$ for all choices of $\vtheta$. Additionally, $f_\vtheta$ is designed such that:
\begin{equation}
    \forall \vx^\prime \in B_\infty \left (\vx^{(s)}, \frac{1}{2} \right ) \setminus \{\vx^{(s)}\} . \forall \vtheta^\prime . \left ( v_f(\vtheta^\prime) = 1 \implies (f_{\vtheta^\prime}(\vx^\prime) = 1 \iff R(\vtheta^\prime, bin(\vx^\prime))) \right )
\end{equation}

\begin{lemma}
$\vz \in co\Pi_2{}3SAT \implies q(\vz) \in PL\mhyphen{}ROB_\infty$
\end{lemma}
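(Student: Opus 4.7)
The plan is to exhibit an explicit witness $\vtheta^\ast$ for the existential in (\ref{eqn:plRobCondition}) that lies in the valid region of $v_f$ and makes $f_{\vtheta^\ast}$ constant on the ball $B_\infty(\vx^{(s)}, 1/2)$. From $\vz \in co\Pi_2 3SAT$ (see \Cref{eqn:coForallExists3SatCondition}), I obtain an $\hat{\vx}^\ast \in \{0,1\}^{n_{\hat{\vx}}}$ such that $\lnot R(\hat{\vx}^\ast, \hat{\vy})$ holds for every $\hat{\vy} \in \{0,1\}^{n_{\hat{\vy}}}$. I would then set $\vtheta^\ast := \hat{\vx}^\ast$; by construction $\vtheta^\ast \in \{0,1\}^{n_{\hat{\vx}}}$, so $v_f(\vtheta^\ast) = 1$, and the implication in (\ref{eqn:plRobCondition}) is nontrivial.

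Next, I would verify that $f_{\vtheta^\ast}(\vx^\prime) = f_{\vtheta^\ast}(\vx^{(s)}) = 0$ for every $\vx^\prime \in B_\infty(\vx^{(s)}, 1/2)$. The center case is immediate because the $isx^{(s)}$ gate in the definition of $f_{\vtheta, 1}$ forces the output to $0$ at $\vx^{(s)}$ for every choice of $\vtheta$. For any $\vx^\prime \neq \vx^{(s)}$ in the ball, the $isx^{(s)}$ guard is inactive and the construction of $f_\vtheta$ gives
\[
f_{\vtheta^\ast, 1}(\vx^\prime) \;=\; cnf_3^{\prime\prime}(\vtheta^\ast, bin(\vx^\prime)) \;=\; R(\hat{\vx}^\ast, bin(\vx^\prime)),
\]
which vanishes since the universally quantified hypothesis $\forall \hat{\vy}.\,\lnot R(\hat{\vx}^\ast, \hat{\vy})$ covers \emph{every} possible bit-encoding $bin(\vx^\prime) \in \{0,1\}^{n_{\hat{\vy}}}$. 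Consequently $f_{\vtheta^\ast}$ is identically $0$ on the ball, so the robustness condition of (\ref{eqn:plRobCondition}) is satisfied and $q(\vz) \in PL\mhyphen{}ROB_\infty$.

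I do not anticipate a serious obstacle: the map $q$ was engineered in the preceding construction precisely so that the two quantifier levels align, with the outer $\exists \hat{\vx}$ of $co\Pi_2 3SAT$ mirrored by $\exists \vtheta^\prime$ of $PL\mhyphen{}ROB_\infty$ and the inner $\forall \hat{\vy}$ mirrored by $\forall \vx^\prime \in B_\infty(\vx^{(s)}, 1/2)$ via the $bin$ encoding. The only care-point is the degeneracy of $bin$ at the center $\vx^{(s)}$, which is exactly what the $isx^{(s)}$ guard was designed to neutralise; without it, the center would become $\vtheta$-sensitive and break the equality $f_{\vtheta^\ast}(\vx^\prime) = f_{\vtheta^\ast}(\vx^{(s)})$. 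Once this bookkeeping is tracked, no further computation is required.
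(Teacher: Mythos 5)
Your proof is correct and follows essentially the same route as the paper's: take the witness $\hat{\vx}^\ast$ from the $co\Pi_2 3SAT$ solution as $\vtheta^\ast$, note $v_f(\vtheta^\ast)=1$, and observe that $f_{\vtheta^\ast}$ is identically $0$ on $B_\infty(\vx^{(s)},\tfrac{1}{2})$ because $f_{\vtheta^\ast}(\vx^\prime)=1 \iff R(\vtheta^\ast, bin(\vx^\prime))$ (with the center handled by the $isx^{(s)}$ guard) and $R(\vtheta^\ast,\cdot)$ is universally false. Your explicit treatment of the center point is a minor elaboration of a fact the paper states as part of the construction, not a different argument.
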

\begin{proof}
Since $\vz \in co\Pi_2{}3SAT$, there exists a Boolean vector $\vx^*$ such that $\forall \hat{\vy}. \lnot R(\vx^*, \hat{\vy})$.

Then both of the following statements are true:
\begin{itemize}
    \item $v_f(\vx^*) = 1$, since $\vx^* \in \{0, 1\}^{n_{\hat{\vx}}}$;
    \item $\forall \vx^\prime \in B_\infty(\vx^{(s)}, \varepsilon) . f_{\vx^*}(\vx^\prime) = 0$, since $f_{\vx^*}(\vx^\prime) = 1 \iff R(\vx^*, bin(\vx^\prime))$;
\end{itemize} 
Therefore, $\vx^*$ is a valid solution for \Cref{eqn:plRobCondition} and thus $q(\vz) \in PL\mhyphen{}ROB_\infty$.
\end{proof}

\begin{lemma}
$q(\vz) \in PL\mhyphen{}ROB_\infty \implies \vz \in co\Pi_2{}3SAT$
\end{lemma}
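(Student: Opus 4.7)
The plan is to reverse-engineer a $co\Pi_2 3SAT$ witness from the $\vtheta^*$ that certifies $q(\vz) \in PL\mhyphen{}ROB_\infty$. First I would unpack the assumption to obtain a $\vtheta^*$ with $v_f(\vtheta^*) = 1$ (equivalently, $\vtheta^* \in \{0,1\}^{n_{\hat{\vx}}}$) such that $\forall \vx^\prime \in B_\infty(\vx^{(s)}, 1/2) . f_{\vtheta^*}(\vx^\prime) = f_{\vtheta^*}(\vx^{(s)})$, which is exactly the witness structure used in the forward direction.

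Next I would evaluate $f_{\vtheta^*}(\vx^{(s)})$ directly from the construction. Because $isx^{(s)}(\vx^{(s)}) = 1$, the definition forces $f_{\vtheta^*, 1}(\vx^{(s)}) = 0$ and hence $f_{\vtheta^*}(\vx^{(s)}) = 0$. Local robustness then propagates this to the entire ball, giving $f_{\vtheta^*}(\vx^\prime) = 0$ for every $\vx^\prime \in B_\infty(\vx^{(s)}, 1/2)$. For any $\vx^\prime \neq \vx^{(s)}$ in this ball we have $isx^{(s)}(\vx^\prime) = 0$, so the equation $f_{\vtheta^*, 1}(\vx^\prime) = 0$ collapses to $cnf_3^{\prime\prime}(\vtheta^*, bin(\vx^\prime)) = 0$, i.e., $\lnot R(\vtheta^*, bin(\vx^\prime))$.

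The crux of the argument is to lift this negation from the continuous ball to the full Boolean cube. For any $\hat{\vy} \in \{0,1\}^{n_{\hat{\vy}}}$, the point $\vx^\prime = \hat{\vy}$ viewed as a $\{0,1\}$-valued real vector lies in $[0,1]^{n_{\hat{\vy}}} = B_\infty(\vx^{(s)}, 1/2)$, differs from $\vx^{(s)}$ (whose coordinates are all $1/2$), and satisfies $bin(\vx^\prime) = \hat{\vy}$ under the encoding from \Cref{sec:attNpHard}. Hence $\lnot R(\vtheta^*, \hat{\vy})$ holds for every $\hat{\vy}$, and choosing $\hat{\vx} = \vtheta^*$ witnesses $\exists \hat{\vx}. \forall \hat{\vy}. \lnot R(\hat{\vx}, \hat{\vy})$, i.e., $\vz \in co\Pi_2 3SAT$.

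I expect the only real obstacle to be bookkeeping: carefully checking that $isx^{(s)}$, $bin$, and $cnf_3^{\prime\prime}$ compose as advertised on both the degenerate point $\vx^{(s)}$ and every Boolean-cube point, and handling the fact that the ball also contains non-Boolean points (on which $cnf_3^{\prime\prime}(\vtheta^*, bin(\cdot))$ still vanishes, but where the information they carry is redundant with the Boolean-cube points already covered). Once that check is complete, the robustness-to-falsity translation is immediate, and together with the forward direction and the $\Sigma_2^P$ membership argument of \Cref{sec:plRobInSigma}, this yields $\Sigma_2^P$-completeness of $PL\mhyphen{}ROB_\infty$.
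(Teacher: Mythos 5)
Your proof is correct and follows essentially the same route as the paper: extract the Boolean witness $\vtheta^*$ with $v_f(\vtheta^*)=1$, observe $f_{\vtheta^*}(\vx^{(s)})=0$, evaluate $f_{\vtheta^*}$ at the Boolean points $\hat{\vy}\in\{0,1\}^{n_{\hat{\vy}}}$ of the ball (where $bin(\hat{\vy})=\hat{\vy}$), and use local robustness to conclude $\forall \hat{\vy}.\lnot R(\vtheta^*,\hat{\vy})$, so $\vtheta^*$ witnesses membership in $co\Pi_2{}3SAT$. The only difference is that you spell out intermediate evaluations the paper leaves implicit (and, like the paper, you read \Cref{eqn:plRobCondition} as supplying a witness that actually satisfies $v_f(\vtheta^*)=1$ rather than one that is vacuously valid).
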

\begin{proof}
Since $q(\vz) \in PL\mhyphen{}ROB_\infty$, there exists a $\vtheta^*$ such that:
\begin{equation}
    v_f(\vtheta) = 1 \land \forall \vx^\prime \in B_\infty(\vx^{(s)}, \varepsilon) . f_{\vtheta^*}(\vx^\prime) = f_{\vtheta^*}(\vx^{(s)})
\end{equation}

Note that $\vtheta^* \in \{0, 1\}^{n_{\hat{\vx}}}$, since $v_f(\vtheta^*) = 1$.
Moreover, $\forall \hat{\vy} . \lnot R(\vtheta^*, \hat{\vy})$, since $bin(\hat{\vy}) = \hat{\vy}$ and $f_{\vtheta^*}(\hat{\vy}) = 1 \iff R(\vtheta^*, \hat{\vy})$.

Therefore, $\vtheta^*$ is a valid solution for \Cref{eqn:coForallExists3SatCondition}, which implies that $\vz \in co\Pi_2{}3SAT$.

\end{proof}

\noindent
Since:
\begin{itemize}
    \item $q(\vz)$ can be computed in polynomial time;
    \item $\vz \in co\Pi_2{}3SAT \implies q(\vz) \in PL\mhyphen{}ROB_\infty$;
    \item $q(\vz) \in PL\mhyphen{}ROB_\infty \implies \vz \in co\Pi_2{}3SAT$.
\end{itemize}

we can conclude that $co\Pi_2{}3SAT \leq PL\mhyphen{}ROB_\infty$.

\subsection{Proof of \Cref{cor:lpDefense}}

\subsubsection{\texorpdfstring{$PL\mhyphen{}ROB_p \in \Sigma_2^P$}{PL-ROB p in Sigma2P}}
The proof is identical to the one for $PL\mhyphen{}ROB_\infty$.

\subsubsection{\texorpdfstring{$PL\mhyphen{}ROB_p$ is $\Sigma_2^P$-Hard}{PL-ROB p is Sigma2P-Hard}}

We follow the same approach used in the proof for \Cref{cor:lpAttack}.

\paragraph{Proof of $q(\vz) \in PL\mhyphen{}ROB_p \implies \vz \in co\Pi_2{}3SAT$} If $q(\vz) \in PL\mhyphen{}ROB_p$, it means that $\exists \vtheta^* . \left (v_f(\vtheta^*) = 1 \implies \forall \vx^\prime \in B_p \left (\vx^{(s)}, \frac{1}{2} \right ) . f(\vx^\prime) = 0 \right )$. Then $\forall \hat{\vy}$, there exists a corresponding input $\vy^{**} \in B_p \left (\vx^{(s)}, \frac{1}{2} \right )$ defined as follows:
\begin{equation}
    y^{**}_i = \begin{cases}\frac{1}{2} - \rho_{p, n}\left(\frac{1}{2}\right) & \hat{y}_i = 0 \\
    \frac{1}{2} + \rho_{p, n}\left(\frac{1}{2}\right) & \hat{y}_i = 1 \end{cases}
\end{equation}

such that $e^{(y)}(y^{**}) = \hat{y}$.
Since $\vy^{**} \in B_p \left (\vx^{(s)}, \frac{1}{2} \right )$, $cnf_3^{\prime\prime}(\vtheta^*, bin(\vy^{**})) = 0$, which means that $R(\vtheta^*, \hat{\vy})$ is false. In other words, $\exists \vtheta^* . \forall \hat{\vy} . \lnot R(\vtheta^*, \hat{\vy})$, i.e. $\vz \in co\Pi_2{}3SAT$.

\paragraph{Proof of $\vz \in co\Pi_2{}3SAT \implies q(\vz) \in PL\mhyphen{}ROB_p$} The proof is very similar to the corresponding one for \Cref{the:defenseSigma}.

If $\vz \in co\Pi_2{}3SAT$, then $\exists \hat{\vx}^* . \forall \hat{\vy} . \lnot R(\hat{\vx}, \hat{\vy})$. Set $\vtheta^* = \hat{\vx}^*$. We know that $f_\vtheta^*(\vx^{(s)}) = 0$. We also know that $\forall \vx^\prime \in B_p \left ( \vx^{(s)}, \frac{1}{2} \right ) \setminus \{\vx^{(s)}\} . \left ( f_{\vtheta^*}(\vx) = 1 \iff cnf_3^{\prime\prime}(\vtheta^*, \vx^\prime) = 1 \right )$. In other words, $\forall \vx^\prime \in B_p \left ( \vx^{(s)}, \frac{1}{2} \right ) \setminus \{\vx^{(s)}\} . \left ( f_{\vtheta^*}(\vx^\prime) = 1 \iff R(\vtheta^*, bin(\vx^\prime)) \right )$. Since $R(\vtheta^*, \hat{\vy})$ is false for all choices of $\hat{\vy}$, $\forall \vx^\prime \in B_p \left ( \vx^{(s)}, \frac{1}{2} \right ) \setminus \{\vx^{(s)}\} . f_{\vtheta^*} (\vx^\prime) = 0$. Given the fact that $f_{\vtheta^*}(\vx^{(s)}) = 0$, we can conclude that $\vtheta^*$ satisfies \Cref{eqn:plRobCondition}.

\subsection{Proof of \Cref{cor:defensePolynomial}}

Similarly to the proof of \Cref{cor:attackPolynomial}, it follows from the fact that ReLU classifiers are polynomial-time classifiers (w.r.t. the size of the tuple).

\section{Proof of \Cref{the:caGuarantees}}

There are two cases:
\begin{itemize}
    \item $\forall \vx^\prime \in B_p(\vx, \varepsilon) . f(\vx^\prime) = f(\vx)$: then the attack fails because $f(\vx) \not \in C(\vx)$;
    \item $\exists \vx^\prime \in B_p(\vx, \varepsilon) . f(\vx^\prime) \neq f(\vx)$: then due to the symmetry of the $L^p$ norm $\vx \in B_p(\vx^\prime, \varepsilon)$. Since $f(\vx) \neq f(\vx^\prime)$, $\vx$ is a valid adversarial example for $\vx^\prime$, which means that $f(\vx^\prime) = \star$. Since $\star \not \in C(\vx)$, the attack fails. 
\end{itemize}

\subsection{Proof of \Cref{cor:caGuaranteesLp}}

Assume that $\forall \vx .||\vx||_r \geq \eta ||\vx||_p$ and fix $\vx^{(s)} \in X$. Let $\vx^\prime \in B_r(\vx^{(s)}, \eta\varepsilon)$ be an adversarial example. Then $||\vx^\prime - \vx^{(s)}||_r \leq \eta\varepsilon$, and thus $\eta ||\vx^\prime - \vx^{(s)}||_p \leq \eta\varepsilon$. Dividing by $\eta$, we get $||\vx^\prime - \vx^{(s)}||_p \leq \varepsilon$, which means that $\vx^{(s)}$ is a valid adversarial example for $\vx^\prime$ and thus $\vx^\prime$ is rejected by $p$-CA.

We now proceed to find the values of $\eta$.

\subsubsection{\texorpdfstring{$1 \leq r < p$}{1 <= r < p}}

We will prove that $||\vx||_r \geq ||\vx||_p$.

\paragraph{Case $p < \infty$}
Consider $\ve = \frac{\vx}{||\vx||_p}$. $\ve$ is such that $||\ve||_p = 1$ and for all $i$ we have $|\ve_i| \leq 1$. Since $r < p$, for all $0 \leq t \leq 1$ we have $|t|^p \leq |t|^r$. Therefore:
\begin{equation}
    ||\ve||_r = \left (\sum_{i = 1}^n |\ve_i|^r \right)^{1/r} \geq \left (\sum_{i = 1}^n |\ve_i|^p \right)^{1/r} = ||\ve||^{p/r}_p = 1
\end{equation}
Then, since $||\ve||_r \geq 1$:
\begin{equation}
    ||\vx||_r = || \; ||\vx||_p \ve ||_r = ||\vx||_p ||\ve||_r \geq ||\vx||_p
\end{equation}

\paragraph{Case $p = \infty$} Since $||\vx||_r \geq ||\vx||_p$ for all $r < p$ and since the expressions on both sides of the inequality are compositions of continuous functions, as $p \to \infty$ we get $||\vx||_r \geq ||\vx||_\infty$.

\subsubsection{\texorpdfstring{$r > p$}{r > p}}

We will prove that $||\vx||_r \geq n^{\frac{1}{r} - \frac{1}{p}} ||\vx||_p$.

\paragraph{Case $r < \infty$}
Hölder's inequality states that, given $\alpha, \beta \geq 1$ such that $\frac{1}{\alpha} + \frac{1}{\beta} = 1$ and given $f$ and $g$, we have:
\begin{equation}
    ||fg||_1 \leq ||f||_\alpha ||g||_\beta
\end{equation}

Setting $\alpha = \frac{r}{r - p}$, $\beta = \frac{r}{p}$, $f = (1, \dots, 1)$ and $g = \left (x_1^p, \dots, x_n^p \right )$, we know that:
\begin{itemize}
    \item $||fg||_1 = \sum_{i=1}^n (1 \cdot x_i^p) = ||\vx||_p^p$;
    \item $||f||_\alpha = \left (\sum_{i=1}^n 1 \right )^{1/\alpha} = n^{1/\alpha}$;
    \item $||g||_\beta = \left (\sum_{i = 1} x_i^{pr/p} \right )^{p/r} = \left (\sum_{i = 1} x_i^r \right )^{p/r} = ||\vx||_r^p$.
\end{itemize}
Therefore $||\vx||_p^p \leq n^{1/\alpha} ||\vx||_r^p$.
Raising both sides to the power of $1/p$, we get $||\vx||_p \leq n^{1/(p\alpha)} ||\vx||_r$. Therefore:
\begin{equation}
    ||\vx||_p \leq n^{(r - p)/(pr)} ||\vx||_r = n^{\frac{1}{p} - \frac{1}{r}} ||\vx||_r
\end{equation}
Dividing by $n^{\frac{1}{p} - \frac{1}{r}}$ we get:
\begin{equation}
    n^{\frac{1}{r} - \frac{1}{p}} ||\vx||_p \leq ||\vx||_r
\end{equation}

\paragraph{Case $r = \infty$}
Since the expressions on both sides of the inequality are compositions of continuous functions, as $r \to \infty$ we get $||\vx||_\infty \geq n^{-\frac{1}{p}}||x||_p$.

\section{Proof of \Cref{the:foolingCA}}




\subsection{\texorpdfstring{$CCA_\infty \in \Sigma_2^P$}{CCA Infinity in Sigma2P}}

$CCA_\infty \in \Sigma_2^P$ iff there exists a problem $A \in P$ and a polynomial $p$ such that $\forall \Gamma = \langle \vx, \varepsilon, \varepsilon^\prime, C, f\rangle$:
\begin{equation}
    \Gamma \in CCA_\infty \iff \exists \vy . |\vy| \leq p \left (|\Gamma| \right ) \land \left (\forall \vz . (|\vz| \leq p(|\Gamma|) \implies  \langle \Gamma, \vy, \vz \rangle \in A ) \right ) 
\end{equation}

This can be proven by setting $\vy = \vx^\prime$,$\vz = \vx^{\prime\prime}$ and $A$ as the set of all triplets $\langle \Gamma, \vx^\prime, \vx^{\prime\prime} \rangle$ such that all of the following are true:
\begin{itemize}
    \item $\|\vx - \vx^\prime\|_\infty \leq \varepsilon^\prime$
    \item $f(\vx^\prime) \in C(\vx)$
    \item $\|\vx^{\prime\prime} - \vx^\prime\|_\infty \leq \varepsilon$
    \item $f(\vx^{\prime\prime}) = f(\vx^\prime)$
\end{itemize}

Since all properties can be checked in polynomial time, $A \in P$. 

\subsection{\texorpdfstring{$CCA_\infty$ is $\Sigma_2^P$-Hard}{CCA Infinity is Sigma2P-Hard}}

We will show that $CCA_\infty$ is $\Sigma_2^P$-hard by proving that $co\Pi_2{}3SAT \leq CCA_\infty$.

First, suppose that the length of $\hat{\vx}$ and $\hat{\vy}$ differ. In that case, we pad the shortest one with additional variables that will not be used.

Let $n$ be the maximum of the lengths of $\hat{\vx}$ and $\hat{\vy}$.

Given a set $\vz$ of 3CNF clauses, we construct the following query $q(\vz)$ for $CCA_\infty$:
\begin{equation}
    q(\vz) = \langle \vx^{(s)}, \gamma, \frac{1}{2}, C_u, h \rangle
\end{equation}

where $\frac{1}{4} < \gamma < \frac{1}{2}$ and $\vx^{(s)} = \left (\frac{1}{2}, \dots, \frac{1}{2}\right)$ is a vector with $n$ elements.
Verifying $q(\vz) \in CCA_\infty$ is equivalent to checking:

\begin{equation}
    \label{eqn:ccaExplicitEquation}
    \exists \vx^\prime \in B \left (x_{s}, \frac{1}{2} \right ) . \left ( h(\vx^\prime) \neq h(\vx) \land \left ( \forall \vx^{\prime\prime} \in B \left (\vx^\prime, \frac{1}{4} \right )  .\; h(\vx^{\prime\prime}) = h(\vx^\prime) \right ) \right )
\end{equation}

Note that $\vx^\prime \in [0, 1]^n$.

\paragraph{Truth Values} We will encode the truth values of $\hat{\vx}$ and $\hat{\vy}$ as follows:
\begin{equation}
\label{eqn:truthEncoding}
    \begin{split}
        x^{\prime\prime}_i \in \left (0, \frac{1}{4} \right ) & \iff \hat{x}_i = 0 \land \hat{y}_i = 0 \\
        x^{\prime\prime}_i \in \left (\frac{1}{4}, \frac{1}{2} \right ) & \iff \hat{x}_i = 0 \land \hat{y}_i = 1 \\
        x^{\prime\prime}_i \in \left (\frac{1}{2}, \frac{3}{4} \right ) & \iff \hat{x}_i = 1 \land \hat{y}_i = 0 \\
        x^{\prime\prime}_i \in \left (\frac{3}{4}, 1 \right ) & \iff \hat{x}_i = 1 \land \hat{y}_i = 1 \\
    \end{split}
\end{equation}

Let $e_{\hat{\vx}i}(\vx) = gt \left (x_i, \frac{1}{2} \right )$. Let:
\begin{equation}
    e_{\hat{\vy}i}(\vx) = or \left (open \left (x_i, \frac{1}{4}, \frac{1}{2} \right ), open \left (x_i, \frac{3}{4}, 1 \right ) \right )
\end{equation}

Note that $e_{\hat{\vx}i}(x^{\prime\prime}_i)$ returns the truth value of $\hat{x}_i$ and $e_{\hat{\vy}i}(x^{\prime\prime}_i)$ returns the truth value of $\hat{y}_i$ (as long as the input is within one of the ranges described in \Cref{eqn:truthEncoding}).

\paragraph{Invalid Encodings} All the encodings other than the ones described in \Cref{eqn:truthEncoding} are not valid.
We define $inv_F$ as follows:
\begin{equation}
    inv_F(\vx) = or_{i = 1, \dots, n}or(out(x_i), edge(x_i))
\end{equation}

where $out(x_i) = or(leq(x_i, 0), geq(x_i, 1))$ and
\begin{equation}
    edge(x_i) = or \left (eq \left (x_i, \frac{1}{4} \right), eq \left (x_i, \frac{1}{2} \right ),  eq \left (x_i, \frac{3}{4} \right ) \right)
\end{equation}

On the other hand, we define $inv_T$ as follows:
\begin{equation}
    inv_T(\vx) = or_{i = 1, \dots, n}eq \left (x_i, \frac{1}{2} \right )
\end{equation}

\paragraph{Definition of $h$} Let $g$ be a Boolean formula defined over $e^{(x)}(\vx)$ and $e^{(y)}(\vx)$ that returns the value of $R$ (using the same technique as $cnf^\prime_3$). 

We define $h$ as a two-class classifier, where:
\begin{equation}
    h_1(\vx) = or(inv_T(\vx), and(not(inv_F(\vx)), g(\vx)))
\end{equation}

and $h_0(\vx) = not(h_1(\vx))$.

Note that:
\begin{itemize}
    \item If $x_i = \frac{1}{2}$ for some $i$, the top class is 1; therefore, $h(\vx^{(s)}) = 1$;
    \item Otherwise, if $\vx$ is not a valid encoding, the top class is 0;
    \item Otherwise, the top class is 1 if $R(e^{(x)}(\vx), e^{(y)}(\vx))$ is true and 0 otherwise.
\end{itemize}

\begin{lemma}
$\vz \in co\Pi_2{}3SAT \implies q(\vz) \in CCA_\infty$
\end{lemma}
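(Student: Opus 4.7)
Given $\vz \in co\Pi_2{}3SAT$, we have a witness $\hat{\vx}^* \in \{0,1\}^n$ such that $\forall \hat{\vy} . \lnot R(\hat{\vx}^*, \hat{\vy})$. The plan is to exhibit an explicit $\vx^\prime$ that satisfies \Cref{eqn:ccaExplicitEquation} by placing every coordinate at the extreme of the unit interval dictated by $\hat{\vx}^*$: set $x^\prime_j = \hat{x}^*_j$ for each $j$, so $\vx^\prime \in \{0,1\}^n$. Then $\|\vx^\prime - \vx^{(s)}\|_\infty = 1/2$, so $\vx^\prime \in B_\infty(\vx^{(s)}, 1/2)$. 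Since each $x^\prime_j \in \{0,1\}$ triggers $out$, we have $inv_F(\vx^\prime) = 1$ and $inv_T(\vx^\prime) = 0$, hence $h_1(\vx^\prime) = 0$ and so $h(\vx^\prime) = 0$. Because $h(\vx^{(s)}) = 1$ (every coordinate of $\vx^{(s)}$ equals $1/2$, triggering $inv_T$), the first conjunct $h(\vx^\prime) \neq h(\vx^{(s)})$ holds, with $h(\vx^\prime) = 0 \in C_u(\vx^{(s)})$.

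The substantial step is to verify the universal part: $\forall \vx^{\prime\prime} \in B_\infty(\vx^\prime, \gamma).\, h(\vx^{\prime\prime}) = 0$. Fix any such $\vx^{\prime\prime}$. Because $x^\prime_j \in \{0,1\}$ and $\gamma < 1/2$, each coordinate $x^{\prime\prime}_j$ lies in $[-\gamma, \gamma]$ or $[1 - \gamma, 1 + \gamma]$ according to whether $\hat{x}^*_j$ is $0$ or $1$. In either case $x^{\prime\prime}_j \neq 1/2$ strictly, so $inv_T(\vx^{\prime\prime}) = 0$, which rules out the only way the $inv_T$ branch of $h_1$ can fire.

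From here I would split on whether $\vx^{\prime\prime}$ is a valid encoding. If $inv_F(\vx^{\prime\prime}) = 1$ (some coordinate lies outside $[0,1]$ or on an edge point in $\{1/4, 3/4\}$; the value $1/2$ is already excluded), then $h_1(\vx^{\prime\prime}) = 0$ directly. If instead $inv_F(\vx^{\prime\prime}) = 0$, every $x^{\prime\prime}_j$ lies in one of the four open quartile intervals of $(0,1)$; the $\gamma < 1/2$ bound guarantees $x^{\prime\prime}_j$ remains on the same side of $1/2$ as $x^\prime_j$, so $e_{\hat{\vx}j}(\vx^{\prime\prime}) = \hat{x}^*_j$ for all $j$. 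Letting $\hat{\vy}^{\prime\prime} = e^{(y)}(\vx^{\prime\prime})$, we get $g(\vx^{\prime\prime}) = R(\hat{\vx}^*, \hat{\vy}^{\prime\prime}) = 0$ by the defining property of $\hat{\vx}^*$. Hence $h_1(\vx^{\prime\prime}) = 0$ again, and $h(\vx^{\prime\prime}) = 0 = h(\vx^\prime)$.

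The main obstacle is really this last sub-case: we must ensure that whenever $\vx^{\prime\prime}$ does reach the interior regions where the formula $g$ is actually consulted, the induced $\hat{x}$-encoding is still $\hat{\vx}^*$, so that the universal guarantee on $R(\hat{\vx}^*, \cdot)$ can be invoked irrespective of the $\hat{y}$-encoding $\hat{\vy}^{\prime\prime}$. Placing $\vx^\prime$ at $\{0,1\}^n$ is what makes this work, since the distance from any coordinate to $1/2$ is exactly $1/2 > \gamma$, giving both needed properties at once: $1/2$ is never hit, and no coordinate can cross $1/2$ inside the ball. Combined with the three bullet points at the end of the hardness argument, this establishes $co\Pi_2{}3SAT \leq CCA_\infty$.
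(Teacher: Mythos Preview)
Your proof is correct and follows essentially the same approach as the paper: set $\vx^\prime = \hat{\vx}^* \in \{0,1\}^n$, observe that $h(\vx^\prime)=0$ because $out$ (hence $inv_F$) fires while $inv_T$ does not, and then handle the $\gamma$-ball by noting that $\gamma<\tfrac12$ keeps every coordinate away from $\tfrac12$ (so $inv_T$ never fires) and on the same side of $\tfrac12$ (so $e^{(x)}(\vx^{\prime\prime})=\hat{\vx}^*$ whenever the encoding is valid), after which the universal falsity of $R(\hat{\vx}^*,\cdot)$ finishes the argument. Your write-up is in fact slightly more explicit than the paper's in separating the $inv_T$ exclusion from the $inv_F$/valid-encoding case split, but the underlying argument is identical.
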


\begin{proof}
If $\vz \in co\Pi_2{}3SAT$, then there exists a Boolean vector $\vx^*$ such that $\forall \hat{\vy}. \lnot R(\vx^*, \hat{\vy})$.

We now prove that setting $\vx^\prime = \vx^*$ satisfies \Cref{eqn:definitionCCA}.
First, note that $h(\vx^*) = 0$, which satisfies $h(\vx^\prime) \neq h(\vx)$. Then we need to verify that $\forall \vx^{\prime\prime} \in B_\infty(\vx^*, \gamma) . h(\vx) = 0$.

For every $\vx^{\prime\prime} \in B_\infty(\vx^*, \gamma)$, we know that $\vx^{\prime\prime} \in \left ([-\gamma, \gamma] \cup [1 - \gamma, 1 + \gamma] \right )^n$. There are thus two cases:
\begin{itemize}
    \item $\vx^{\prime\prime}$ is not a valid encoding, i.e. $x^{\prime\prime}_i \leq 0 \lor x^{\prime\prime}_i \geq 1 \lor x^{\prime\prime}_i \in \left \{ \frac{1}{4}, \frac{3}{4} \right \}$ for some $i$. Then $h(\vx^{\prime\prime}) = 0$. Note that, since $\gamma < \frac{1}{2}$, $\frac{1}{2} \not \in [-\gamma, \gamma] \cup [1 - \gamma, 1 + \gamma]$, so it is not possible for $\vx^{\prime\prime}$ to be an invalid encoding that is classified as 1;
    \item $\vx^{\prime\prime}$ is a valid encoding. Then, since $\gamma < \frac{1}{2}$, $e^{(x)}(\vx^{\prime\prime}) = \vx^*$. Since $h(\vx^{\prime\prime}) = 1$ iff $R(e^{(x)}(\vx^{\prime\prime}),e^{(y)}(\vx^{\prime\prime}))$ is true and since $R(\vx^*, \hat{\vy})$ is false for all choices of $\hat{\vy}$, $h(\vx^{\prime\prime}) = 0$.
\end{itemize}

Therefore, $\vx^*$ satisfies \Cref{eqn:ccaExplicitEquation} and thus $q(\vz) \in CCA_\infty$.

\end{proof}

\begin{lemma}
$q(\vz) \in CCA_\infty \implies \vz \in co\Pi_2{}3SAT$
\end{lemma}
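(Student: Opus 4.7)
The plan is to extract from any witness of $q(\vz) \in CCA_\infty$ a Boolean vector that serves as the outer-quantifier witness for $co\Pi_2{}3SAT$. First I would fix $\vx^* \in B_\infty(\vx^{(s)}, \tfrac{1}{2})$ satisfying $h(\vx^*) \neq h(\vx^{(s)})$ and $\forall \vx'' \in B_\infty(\vx^*, \gamma).\; h(\vx'') = h(\vx^*)$. Because $h(\vx^{(s)}) = 1$ by construction, this forces $h(\vx^*) = 0$, which by the definition of $h$ rules out both $inv_T(\vx^*) = 1$ and $inv_F(\vx^*) = 1$; hence each $x^*_i$ lies in exactly one of the four open quarter-intervals of $(0,1)$, making $\hat{\vx}^* := e^{(x)}(\vx^*) \in \{0,1\}^n$ well defined. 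This $\hat{\vx}^*$ is my candidate witness, and the goal reduces to proving $\forall \hat{\vy}.\; \lnot R(\hat{\vx}^*, \hat{\vy})$.

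The central step is a coordinatewise reachability argument: for an arbitrary $\hat{\vy} \in \{0,1\}^n$, I would construct $\vx'' \in B_\infty(\vx^*, \gamma)$ that is itself a valid encoding with $e^{(x)}(\vx'') = \hat{\vx}^*$ and $e^{(y)}(\vx'') = \hat{\vy}$. The encoding in \Cref{eqn:truthEncoding} assigns each pair $(\hat{x}^*_i, \hat{y}_i)$ a specific open quarter of $(0,1)$; since $x^*_i$ already occupies the correct half dictated by $\hat{x}^*_i$, I only need to slide within that half to land in the quarter matching $\hat{y}_i$. The worst case is a same-half quarter flip, which requires crossing the threshold $\tfrac{1}{4}$ or $\tfrac{3}{4}$; the minimal such displacement is strictly less than $\tfrac{1}{4}$, so $\gamma > \tfrac{1}{4}$ guarantees that $x''_i$ can be picked just past the threshold while staying within distance $\gamma$ of $x^*_i$. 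Applying the hypothesis to this $\vx''$ gives $h(\vx'') = 0$, and since $\vx''$ is a valid encoding this forces $R(\hat{\vx}^*, \hat{\vy}) = \mathrm{false}$. Ranging over $\hat{\vy}$ yields $\vz \in co\Pi_2{}3SAT$.

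The main obstacle I expect is the boundary bookkeeping in the coordinatewise construction. I must verify that $\vx''$ avoids every forbidden value in $\{0, \tfrac{1}{4}, \tfrac{1}{2}, \tfrac{3}{4}, 1\}$ at each coordinate (so $inv_F$ and $inv_T$ do not flip $h$ back to $1$), which is handled by the openness of the quarter-intervals — any interior choice works. I must also ensure the perturbation does not cross $\tfrac{1}{2}$ and thereby corrupt $\hat{\vx}^*$; this is precisely where the upper bound $\gamma < \tfrac{1}{2}$ built into $q(\vz)$ is used, since $x^*_i$ is strictly interior to its half and a displacement of magnitude less than $\tfrac{1}{2}$ can always be chosen to remain in that half. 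The bounds $\tfrac{1}{4} < \gamma < \tfrac{1}{2}$ are thus tight for the reachability argument, and beyond carrying out the symmetric case analysis over the four possible quarters of $x^*_i$, no further difficulty is anticipated.
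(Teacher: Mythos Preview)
Your overall strategy is sound, but one step is incorrect as written. From $h(\vx^*) = 0$ you deduce that both $inv_T(\vx^*) = 0$ and $inv_F(\vx^*) = 0$; however, since $h_1(\vx) = or(inv_T(\vx), and(not(inv_F(\vx)), g(\vx)))$, the condition $h_1(\vx^*) = 0$ only yields $inv_T(\vx^*) = 0$ together with $(inv_F(\vx^*) = 1 \lor g(\vx^*) = 0)$. So you cannot conclude that each $x^*_i$ sits in an open quarter; all you get directly is $x^*_i \neq \tfrac{1}{2}$. Fortunately this weaker fact already suffices for your reachability argument: from any $x^*_i \in [0,1] \setminus \{\tfrac12\}$, both target quarters in the same half are reachable within $L^\infty$ distance $\gamma$ whenever $\gamma > \tfrac14$ (the extremal case is $x^*_i \in \{0,1\}$, where one must cover just over $\tfrac14$). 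After replacing the false $inv_F$ claim with this direct check, the proof goes through. Your remark about $\gamma < \tfrac12$ is also misplaced in this direction: you choose $\vx''$ explicitly in the correct half, so no bound on $\gamma$ from above is needed here.

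The paper takes a different route to localize $\vx^*$. Instead of reading constraints off $h(\vx^*) = 0$ alone, it exploits the full robustness hypothesis: if some $x^*_i$ lay in $[\tfrac12 - \gamma, \tfrac12 + \gamma]$, one could set that coordinate to $\tfrac12$ and obtain a point in $B_\infty(\vx^*, \gamma)$ with $inv_T = 1$, hence $h = 1$, contradicting robustness. This forces $x^*_i \in [0, \tfrac12 - \gamma) \cup (\tfrac12 + \gamma, 1]$, i.e.\ the outer quarters (since $\gamma > \tfrac14$), after which reachability of every $\hat{\vy}$ is immediate. Your approach is more elementary in that it skips this contradiction step, trading it for a slightly broader case analysis in the coordinatewise construction.
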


\begin{proof}
Since $q(\vz) \in CCA$, there exists a $\vx^* \in B \left(\vx^{(s)}, \frac{1}{2} \right)$ such that $h(\vx^*) \neq h(\vx^{(s)})$ and $\forall \vx^{\prime\prime} \in B_\infty(\vx^*, \gamma) . h(\vx^{\prime\prime}) = h(\vx^\prime)$.
We will prove that $e^{(x)}(\vx^*)$ is a solution to $co\Pi_2{}3SAT$.

Since $h(\vx^{(s)}) = 1$, $h(\vx^*) = 0$, which means that $\forall \vx^{\prime\prime} \in B_\infty(\vx^*, \gamma) . h(\vx^{\prime\prime}) = 0$.

We know that $\vx^* \in B_\infty \left (\vx^{(s)}, \frac{1}{2} \right ) = [0, 1]^n$.
We first prove by contradiction that $\vx^* \in \left ( [0, \frac{1}{2} - \gamma) \cup (\frac{1}{2} + \gamma, 1] \right )^n$.
If $x^*_i \in [\frac{1}{2} - \gamma, \frac{1}{2} + \gamma ]$ for some $i$, then the vector $\vx^{(w)}$ defined as follows:
\begin{equation}
    x^{(w)}_j = \begin{cases}
        \frac{1}{2} & i = j \\
        x^*_i & \text{otherwise}
    \end{cases}
\end{equation}
is such that $\vx^{(w)} \in B_\infty(x^*_i, \gamma)$ and $h \left (\vx^{(w)} \right ) = 1$ (since $inv_T \left (\vx^{(w)} \right ) = 1$). This contradicts the fact that $\forall \vx^{\prime\prime} \in B_p(\vx^*, \gamma) . h(\vx) = 0$.
Therefore, $\vx^* \in \left ( [0, \frac{1}{2} - \gamma) \cup (\frac{1}{2} + \gamma, 1] \right )^n$.

As a consequence, $\forall \vx^{\prime\prime} \in B_\infty(\vx^*, \gamma) . e^{(x)}(\vx^{\prime\prime}) = e^{(x)}(\vx^*)$.

We now prove that $\forall \hat{\vy}^* . \exists \vx^{\prime\prime *} \in B_\infty(\vx^*, \gamma)$ such that $e^{(y)}(\vx^{\prime\prime *}) = \hat{\vy}^*$. We can construct such $\vx^{\prime\prime *}$ as follows. For every $i$:
\begin{itemize}
    \item If $e^{(x)}(\vx^*) = 0$ and $e^{(y)}(\vx^*) = 0$, set $\vx^{\prime\prime *}_i$ equal to a value in $\left (0, \frac{1}{4} \right )$;
    \item If $e^{(x)}(\vx^*) = 0$ and $e^{(y)}(\vx^*) = 1$, set $\vx^{\prime\prime *}_i$ equal to a value in $\left (\frac{1}{4}, \gamma \right )$;
    \item If $e^{(x)}(\vx^*) = 1$ and $e^{(y)}(\vx^*) = 0$, set $\vx^{\prime\prime *}_i$ equal to a value in $\left (1 - \gamma, \frac{3}{4} \right )$;
    \item If $e^{(x)}(\vx^*) = 1$ and $e^{(y)}(\vx^*) = 1$, set $\vx^{\prime\prime *}_i$ equal to a value in $\left (\frac{3}{4}, 1 \right )$.
\end{itemize}

By doing so, we have obtained a $\vx^{\prime\prime *}$ such that $\vx^{\prime\prime *} \in B_\infty(\vx^*, \gamma)$ and $e^{(y)}(\vx^{\prime\prime *}) = \hat{\vy}^*$.

Since:
\begin{itemize}
    \item $e^{(x)}(\vx^{\prime\prime}) = e^{(x)}(\vx^*) $ for all $\vx^{\prime\prime}$;
    \item $h(\vx^{\prime\prime}) = 0$ for all $\vx^{\prime\prime}$;
    \item $h(\vx^{\prime\prime}) = 1$ iff $R(e^{(x)}(\vx^{\prime\prime})), e^{(y)}(\vx^{\prime\prime}))$ is true;
\end{itemize}
$R(e^{(x)}(\vx^*), \hat{\vy}^*)$ is false for all choices of $\hat{\vy}^*$. In other words, $\hat{\vx}^*$ is a solution to \Cref{eqn:coForallExists3SatCondition} and thus $\vz \in co\Pi_2{}3SAT$.

\end{proof}

\noindent
Since:
\begin{itemize}
    \item $q(\vz)$ can be computed in polynomial time;
    \item $\vz \in co\Pi_2{}3SAT \implies q(\vz) \in CCA_\infty$;
    \item $q(\vz) \in CCA_\infty \implies \vz \in co\Pi_2{}3SAT$;
\end{itemize}
we can conclude that $co\Pi_2{}3SAT \leq CCA$.

\subsection{Proof of \Cref{cor:ccaLp}}

The proof of $CCA_p \in \Sigma_2^P$ is the same as the one for \Cref{the:foolingCA}.

For the hardness proof, we follow a more involved approach compared to those for \Cref{cor:lpAttack,cor:lpDefense}.

First, let $\varepsilon_{\rho_{p,n}}$ be the value of epsilon such that $\rho_{p, n}\left(\varepsilon_{\rho_{p,n}}\right) = \frac{1}{2}$. In other words, $B_p(\vx^{(s)}, \varepsilon_{\rho_{p,n}})$ is an $L^p$ ball that contains $[0, 1]^n$, while the intersection of the corresponding $L^p$ sphere and $[0, 1]^n$ is the set $\{0, 1\}^n$ (for $p < \infty$).

Let $inv^\prime_T(\vx)$ be defined as follows:
\begin{equation}
    inv^\prime_T(\vx) = or_{i = 1, \dots, n} \left (or \left (eq \left (x_i, \frac{1}{2} \right ), leq(x_i, 0), geq(x_i, 1) \right ) \right )
\end{equation}

Let $inv^\prime_F(\vx)$ be defined as follows:
\begin{equation}
    inv^\prime_F(\vx) = or_{i = 1, \dots, n} \left (or \left (eq \left (x_i, \frac{1}{4} \right ), eq \left (x_i, \frac{3}{4} \right ) \right) \right )
\end{equation}

We define $h^\prime$ as follows:
\begin{equation}
    h^\prime_1 = or(inv^\prime_T(\vx), and(not(inv^\prime_F(\vx)), g(\vx))
\end{equation}
with $h^\prime_0(\vx) = not(h^\prime_1(\vx))$.

Note that:
\begin{itemize}
    \item If $x_i \in (-\infty, 0] \cup \{\frac{1}{2}\} \cup [1, \infty)$ for some $i$, then the top class is 1;
    \item Otherwise, if $\vx$ is not a valid encoding, the top class is 0;
    \item Otherwise, the top class is 1 if $R(e^{(x)}(\vx), e^{(y)}(\vx))$ is true and 0 otherwise.
\end{itemize}

Finally, let $\frac{1}{8} < \gamma^\prime < \frac{1}{4}$.
Our query is thus:
\begin{equation}
    q(\vz) = \langle \vx^{(s)}, \gamma^\prime, \frac{1}{2}, C_u, h^\prime \rangle
\end{equation}

\paragraph{Proof of $\vz \in co\Pi_2{}3SAT \implies q(\vz) \in CCA_p$} If $\vz \in co\Pi_2{}3SAT$, then $\exists \vx^*. \forall \hat{\vy} . \lnot R(\vx^*, \hat{\vy})$. Let $\vx^{**}$ be defined as follows:
\begin{equation}
    x^{**}_i = \begin{cases} \frac{1}{4} & x^*_i = 0 \\
    \frac{3}{4} & x^*_i = 1\end{cases}
\end{equation}

Note that:
\begin{itemize}
    \item $\vx^{**} \in B_p \left (\vx^{(s)}, \varepsilon_{\rho_{p,n}} \right )$;
    \item $e^{(x)}(\vx^{**}) = \vx^*$;
    \item $f(\vx^{**}) = 0$, since $\vx^{**} \in \{\frac{1}{4}, \frac{3}{4}\}^n$;
    \item Since $\gamma^\prime < \frac{1}{4}$, there is no $i$ such that $\exists \vx^{\prime\prime} \in B_p(\vx^{**}, \gamma^\prime) . x^{\prime\prime}_i \in (-\infty, 0] \cup \left \{\frac{1}{2} \right \} \cup [1, \infty)$;
    \item For all $\vx^{\prime\prime} \in B_p(\vx^{**}, \gamma^\prime)$:
    \begin{itemize}
        \item If $\vx^{\prime\prime}$ is not a valid encoding (i.e. $x^{\prime\prime}_i \in \{\frac{1}{4}, \frac{3}{4}\}$ for some $i$), then $h^\prime(\vx^{\prime\prime}) = 0$;
        \item Otherwise, $h^\prime(\vx^{\prime\prime}) = 1$ iff $R(e^{(x)}(\vx^{\prime\prime}), e^{(y)}(\vx^{\prime\prime}))$ is true.
    \end{itemize}
\end{itemize}

Therefore, since $\forall \hat{\vy}. \lnot R(\vx^*, \hat{\vy})$, we know that $\forall \vx^{\prime\prime} \in B_p(\vx^{**}, \gamma^\prime) . f(\vx^{\prime\prime}) = 0$. In other words, $\vx^{**}$ is a solution to \Cref{eqn:definitionCCA}.

\paragraph{Proof of $q(\vz) \in CCA_p \implies \vz \in co\Pi_2{}3SAT$} If $q(\vz) \in CCA_p$, then we know that $\exists \vx^* \in B_p \left (\vx^{(s)}, \varepsilon_{\rho_{p,n}} \right ) .\left ( h^\prime(\vx^*) \neq h(\vx^{(s)}) \land \forall \vx^{\prime\prime} \in B_p(\vx^*, \gamma^\prime) . h^\prime(\vx^{\prime\prime}) = h^\prime(\vx^*) \right )$. In other words, $\exists \vx^* \in B_p \left (\vx^{(s)}, \varepsilon_{\rho_{p,n}} \right ) . \left ( h^\prime(\vx^*) = 0 \land \forall \vx^{\prime\prime} \in B_p(\vx^*, \gamma^\prime) . h^\prime(\vx^{\prime\prime}) = 0 \right )$.

We will first prove by contradiction that $\vx^* \in \left ((\gamma^\prime, \frac{1}{2} - \gamma^\prime) \cup (\frac{1}{2} + \gamma^\prime, 1 - \gamma^\prime) \right )^n$.

First, suppose that $x^*_i \in (-\infty, 0) \cup (1, \infty)$ for some $i$. Then $h^\prime(x^*) = 0$ due to the fact that $inv_T(x^*) = 1$.

Second, suppose that $x^*_i \in [0, \gamma^\prime] \cup [1 - \gamma^\prime, 1]$ for some $i$. Then $\vx^{(w)}$, defined as follows:
\begin{equation}
    x^{(w)}_j = \begin{cases} 0 & i = j \land x^*_i \in [0, \gamma^\prime] \\
    1 & i = j \land x^*_i \in [1 - \gamma^\prime, 1] \\
    x^*_j & j \neq i\end{cases}
\end{equation}
is such that $\vx^{(w)} \in B_p(\vx^*, \gamma^\prime)$ but $h^\prime(\vx^{(w)}) = 1$.

Finally, suppose that $x^*_i \in [\frac{1}{2} - \gamma, \frac{1}{2} + \gamma]$ for some $i$. Then $\vx^{(w)}$, defined as follows:
\begin{equation}
    x^{(w)}_j = \begin{cases} \frac{1}{2} & i = j \\
    x^*_j & \text{otherwise} \end{cases}
\end{equation}
is such that $\vx^{(w)} \in B_p(\vx^*, \gamma^\prime)$ but $h^\prime(\vx^{(w)}) = 1$.

Therefore, $\vx^* \in \left ((\gamma^\prime, \frac{1}{2} - \gamma^\prime) \cup (\frac{1}{2} + \gamma^\prime, 1 - \gamma^\prime) \right )^n$.

As a consequence $\forall \vx^{\prime\prime} \in B_p(\vx^*, \gamma^\prime) . e^{(x)}(\vx^{\prime\prime}) = e^{(x)}(\vx^\prime)$.

From this, due to the fact that $\gamma^\prime > \frac{1}{8}$ and that $p > 0$, we can conclude that for all $\hat{\vy}$, there exists a $\vx^{\prime\prime} \in B_p(\vx^*, \gamma^\prime)$ such that:
\begin{align}
\begin{split}
    x^{\prime\prime}_i \in \left (0, \frac{1}{4} \right ) & \text{ for } x^*_i \in \left (\gamma^\prime, \frac{1}{2} - \gamma^\prime \right ), \hat{y}_i = 0 \\
    x^{\prime\prime}_i \in \left (\frac{1}{4}, \frac{1}{2} \right) & \text{ for } x^*_i \in \left (\gamma^\prime, \frac{1}{2} - \gamma^\prime \right ), \hat{y}_i = 1 \\
    x^{\prime\prime}_i \in \left (\frac{1}{2}, \frac{3}{4} \right ) & \text{ for } x^*_i \in \left (\frac{1}{2} + \gamma^\prime, 1 - \gamma^\prime \right ), \hat{y}_i = 0 \\
    x^{\prime\prime}_i \in \left (\frac{3}{4}, 1 \right) & \text{ for } x^*_i \in \left (\frac{1}{2} + \gamma^\prime, 1 - \gamma^\prime \right ), \hat{y}_i = 1 
\end{split}
\end{align}

In other words, for all $\hat{\vy}$ there exists a corresponding $\vx^{\prime\prime} \in B_p(\vx^*, \gamma^\prime)$ such that $e^{(y)}(\vx^{\prime\prime}) = \hat{\vy}$.

Therefore, since $h^\prime(\vx^{\prime\prime}) = 1$ iff $R(e^{(x)}(\vx^{\prime\prime}), e^{(y)}(\vx^{\prime\prime}))$ is true and since $\forall \vx^{\prime\prime} \in B_p(\vx^*, \gamma^\prime) . h^\prime(\vx^{\prime\prime}) = 0$, we can conclude that $\forall \hat{\vy}. \lnot R(e^{(x)}(\vx^*), \hat{\vy})$. In other words, $\vz \in co\Pi_2{}3SAT$.

\subsection{Proof of \Cref{cor:foolingPolynomial}}

Similarly to the proof of \Cref{cor:attackPolynomial}, it follows from the fact that ReLU classifiers are polynomial-time classifiers (w.r.t. the size of the tuple).


\clearpage
\section{Relation with Robust Transductive Learning}
\label{sec:transductiveLearning}

In this section, we outline the similarities between transductive approaches to robust learning (taking the work of \citet{chen2021towards} as an example) and CA: the former fixes the input and adapts the model at inference time, while the latter fixes the model and solves an optimization problem in the input space. In particular, the approach by Chen et al. involves adapting the model at test time to a set $U$ of user-provided (and potentially adversarially corrupted) inputs. For the sake of clarity, we rewrite the adaptation task as follows:
\begin{equation}
    \label{eqn:adaptation}
    \argmin_\vtheta \mathcal{L}_d(f_\vtheta, U)
\end{equation}

where $\mathcal{L}_d$ is an unsupervised adaptation loss, and define $\Gamma(U) = f_{\vtheta^*}$, where $\vtheta^*$ is the solution to \Cref{eqn:adaptation}.
Attacking this technique thus involves solving the following constrained optimization problem (adapted from Equation 6 of the original paper):
\begin{equation}
    \argmax_{U^\prime \in N(U)} \mathcal{L}_a(f_{\vtheta^*, U^\prime}) \text{ s.t. } \vtheta^* = \argmin_\vtheta \mathcal{L}_d(f_\vtheta, U^\prime)
\end{equation}

where $\mathcal{L}_a$ is the loss for the adversarial objective. Chen then provides an alternative formulation (Equation 8 of the original paper) that is more tractable from a formal point of view; however, our adapted equation is a good starting point for our comparison.
In particular, as in our work, attacking the approach by Chen et al. requires solving a problem that involves nested optimization, and therefore the same 
 ``core'' of complexity. With this formulation, the connections between Chen et al. and CA become clear:
\begin{compactitem}
    \item Both approaches use an optimization problem at inference time that is parameterized over the input (thus avoiding the second informal asymmetry mentioned in \Cref{sec:additionalSources});
    \item Attacks against both approaches lead to nested optimization problems.
\end{compactitem}

We therefore conjecture that it should be possible to extend the result from our \Cref{the:foolingCA} to the approach by Chen et. al.
However, there are some differences between the work of Chen et al. and ours that will likely need to be addressed in a formal proof:
\begin{compactitem}
    \item The former is designed with transductive learning in mind, while the latter is intended for ``regular'' classification (i.e. where the model is fixed);
    \item The former is meant to be used with arbitrarily large sets of inputs, while the latter only deals with one input at the time;
    \item The former uses two different losses ($\mathcal{L}_d$ and $\mathcal{L}_a$), which can potentially make theoretical analyses more complex;
    \item There are several possible ways to adapt a model to a given $U$, and a proof would likely have to consider a sufficiently ``interesting'' subset of such techniques.
\end{compactitem}

We hope that our theoretical findings will encourage research into such areas.
\clearpage
\section{Full Experimental Setup}
\label{app:fullExperimentalSetup}




All our code is written in Python + PyTorch \citep{pytorch}, with the exception of the MIPVerify interface, which is written in Julia. When possible, most experiments were run in parallel, in order to minimize execution times.

\paragraph{Models}

All models were trained using Adam \citep{kingma2014adam} and dataset augmentation. We performed a manual hyperparameter and architecture search to find a suitable compromise between accuracy and MIPVerify convergence. The process required approximately 4 months. When performing adversarial training, following \citep{madry2018towards} we used the final adversarial example found by the Projected Gradient Descent attack, instead of the closest. To maximize uniformity, we used for each configuration the same training and pruning hyperparameters (when applicable), which we report in \Cref{tab:hyperparameters}. We report the chosen architectures in \Cref{tab:architecturesMnist,tab:architecturesCifar10}, while \Cref{tab:accuracies} outlines their accuracies and parameter counts.

\paragraph{UG100}
The first 250 samples of the test set of each dataset were used for hyperparameter tuning and were thus not considered in our analysis. For our G100 dataset, we sampled uniformly across each ground truth label and removed the examples for which MIPVerify crashed. \Cref{tab:groundTruths} details the composition of the dataset by ground truth label.


\paragraph{Attacks} For the Basic Iterative Method (BIM), the Fast Gradient Sign Method (FGSM) and the Projected Gradient Descent (PGD) attack, we used the implementations provided by the AdverTorch library \citep{ding2019advertorch}. For the Brendel \& Bethge (B\&B) attack and the Deepfool (DF) attack, we used the implementations provided by the Foolbox Native library \citep{rauber2020foolbox}. The Carlini \& Wagner and the uniform noise attacks were instead implemented by the authors. We modified the attacks that did not return the closest adversarial example found (i.e. BIM, Carlini \& Wagner, Deepfool, FGSM and PGD) to do so. For the attacks that accept $\varepsilon$ as a parameter (i.e. BIM, FGSM, PGD and uniform noise), for each example we first performed an initial search with a decaying value of $\varepsilon$, followed by a binary search.
In order to pick the attack parameters, we first selected the strong set by performing an extensive manual search. The process took approximately 3 months. We then modified the strong set in order to obtain the balanced parameter set. We report the parameters of both sets (as well as the parameters of the binary and $\varepsilon$ decay searches) in \Cref{tab:heuristicParameters}.






\paragraph{MIPVerify} We ran MIPVerify using the Julia library MIPVerify.jl and Gurobi \citep{gurobi}. Since MIPVerify can be sped up by providing a distance upper bound, we used the same pool of adversarial examples utilized throughout the paper. For CIFAR10 we used the strong parameter set, while for MNIST we used the strong parameter set with some differences (reported in \Cref{tab:mnistMipParameterDifferences}). Since numerical issues might cause the distance upper bound computed by the heuristic attacks to be slightly different from the one computed by MIPVerify, we ran a series of \emph{exploratory runs}, each with a different correction factor (1.05, 1.25, 1.5, 2), and picked the first factor that caused MIPVerify to find a feasible (but not necessarily optimal) solution. If the solution was not optimal, we then performed a \emph{main run} with a higher computational budget. We provide the parameters of MIPVerify in \Cref{tab:mipVerifyParameters}.
We also report in \Cref{tab:tightness} the percentage of tight bounds for each combination.

\begin{table}[]
\caption{Training and pruning hyperparameters.}
\centering
\begin{tabular}{lcc}
\toprule
\multicolumn{1}{l}{\multirow{2}{*}{\textbf{Parameter Name}}} & \multicolumn{2}{c}{\textbf{Value}} \\ \cmidrule{2-3} 
\multicolumn{1}{l}{} & \multicolumn{1}{c}{\textbf{MNIST}} & \textbf{CIFAR10} \\ \midrule
\multicolumn{3}{c}{Common Hyperparameters} \\ \midrule
\multicolumn{1}{l}{Epochs} & \multicolumn{2}{c}{425} \\
\multicolumn{1}{l}{Learning Rate} & \multicolumn{2}{c}{1e-4} \\
\multicolumn{1}{l}{Batch Size} & \multicolumn{1}{c}{32} & \multicolumn{1}{c}{128} \\
\multicolumn{1}{l}{Adam $\beta$} & \multicolumn{2}{c}{(0.9, 0.999)} \\
\multicolumn{1}{l}{Flip \%} & \multicolumn{2}{c}{50\%} \\
\multicolumn{1}{l}{Translation Ratio} & \multicolumn{2}{c}{0.1} \\
\multicolumn{1}{l}{Rotation (deg.)} & \multicolumn{2}{c}{15°} \\ \midrule
\multicolumn{3}{c}{Adversarial Hyperparameters (Adversarial and ReLU only)} \\ \midrule
\multicolumn{1}{l}{Attack} & \multicolumn{2}{c}{PGD} \\
\multicolumn{1}{l}{Attack \#Iterations} & \multicolumn{2}{c}{200} \\
\multicolumn{1}{l}{Attack Learning Rate} & \multicolumn{2}{c}{0.1} \\
\multicolumn{1}{l}{Adversarial Ratio} & \multicolumn{2}{c}{1} \\
\multicolumn{1}{l}{$\varepsilon$} & \multicolumn{1}{c}{0.05} & 2/255 \\ \midrule
\multicolumn{3}{c}{ReLU Hyperparameters (ReLU only)} \\ \midrule
\multicolumn{1}{l}{L1 Regularization Coeff.} & \multicolumn{1}{c}{2e-5} & 1e-5 \\
\multicolumn{1}{l}{RS Loss Coeff.} & \multicolumn{1}{c}{1.2e-4} & 1e-3 \\
\multicolumn{1}{l}{Weight Pruning Threshold} & \multicolumn{2}{c}{1e-3} \\
\multicolumn{1}{l}{ReLU Pruning Threshold} & \multicolumn{2}{c}{90\%} \\
\bottomrule \end{tabular}
\label{tab:hyperparameters}
\end{table}

\begin{table}[]
\caption{MNIST Architectures.}
\centering
\subfloat[MNIST A]{
\begin{tabular}{|c|}
\hline
Input \\ \hline
Flatten \\ \hline
Linear (in = 784, out = 100) \\ \hline
ReLU \\ \hline
Linear (in = 100, out = 10) \\ \hline
Output \\
\hline \end{tabular}
}

\subfloat[MNIST B]{
\begin{tabular}{|c|}
\hline
Input \\ \hline
Conv2D (in = 1, out = 4, 5x5 kernel, stride = 3, padding = 0) \\ \hline
ReLU \\ \hline
Flatten \\ \hline
Linear (in = 256, out = 10) \\ \hline
Output \\
\hline \end{tabular}
}

\subfloat[MNIST C]{
\begin{tabular}{|c|}
\hline
Input \\ \hline
Conv2D (in = 1, out = 8, 5x5 kernel, stride = 4, padding = 0) \\ \hline
ReLU \\ \hline
Flatten \\ \hline
Linear (in = 288, out = 10) \\ \hline
Output \\
\hline \end{tabular}
}

\label{tab:architecturesMnist}

\end{table}

\begin{table}[]
\caption{CIFAR10 architectures.}
\centering
\subfloat[CIFAR10 A]{
\begin{tabular}{|c|}
\hline
Input \\ \hline
Conv2D (in = 3, out = 8, 3x3 kernel, stride = 2, padding = 0) \\ \hline
ReLU \\ \hline
Flatten \\ \hline
Linear (in = 1800, out = 10) \\ \hline
Output \\
\hline \end{tabular}
}

\subfloat[CIFAR10 B]{
\begin{tabular}{|c|}
\hline
Input \\ \hline
Conv2D (in = 3, out = 20, 5x5 kernel, stride = 4, padding = 0) \\ \hline
ReLU \\ \hline
Flatten \\ \hline
Linear (in = 980, out = 10) \\ \hline
Output \\
\hline \end{tabular}
}

\subfloat[CIFAR10 C]{
\begin{tabular}{|c|}
\hline
Input \\ \hline
Conv2D (in = 3, out = 8, 5x5 kernel, stride = 4, padding = 0) \\ \hline
ReLU \\ \hline
Conv2D (in = 8, out = 8, 3x3 kernel, stride = 2, padding = 0) \\ \hline
ReLU \\ \hline
Flatten \\ \hline
Linear (in = 72, out = 10) \\ \hline
Output \\
\hline \end{tabular}
}
\label{tab:architecturesCifar10}
\end{table}

\begin{table}[]
\caption{Parameter counts and accuracies of trained models.}
\centering
\begin{tabular}{llll}
\toprule
\textbf{Architecture} & \textbf{\#Parameters} & \textbf{Training} & \textbf{Accuracy} \\ \midrule
\multirow{3}{*}{MNIST A} & \multirow{3}{*}{79510} & Standard & 95.87\% \\
 & & Adversarial & 94.24\% \\
 & & ReLU & 93.57\% \\ \midrule
\multirow{3}{*}{MNIST B} & \multirow{3}{*}{2674} & Standard & 89.63\% \\
 & & Adversarial & 84.54\% \\
 & & ReLU & 83.69\% \\ \midrule
\multirow{3}{*}{MNIST C} & \multirow{3}{*}{3098} & Standard & 90.71\% \\
 & & Adversarial & 87.35\% \\
 & & ReLU & 85.67\% \\ \midrule
\multirow{3}{*}{CIFAR10 A} & \multirow{3}{*}{18234} & Standard & 53.98\% \\ 
 & & Adversarial & 50.77\% \\
 & & ReLU & 32.85\% \\ \midrule
\multirow{3}{*}{CIFAR10 B} & \multirow{3}{*}{11330} & Standard & 55.81\% \\
 & & Adversarial & 51.35\% \\
 & & ReLU & 37.33\% \\ \midrule
\multirow{3}{*}{CIFAR10 C} & \multirow{3}{*}{1922} & Standard & 47.85\% \\
 & & Adversarial & 45.19\% \\
 & & ReLU & 32.27\% \\
\bottomrule \end{tabular}
\label{tab:accuracies}
\end{table}

\begin{table}[]
\caption{Ground truth labels of the UG100 dataset.}
\centering
\hfill
\subfloat[MNIST]{
\begin{tabular}{lll}
\midrule
\multicolumn{1}{c}{\textbf{Ground Truth}} & \multicolumn{1}{c}{\textbf{Count}} & \multicolumn{1}{c}{\textbf{\%}} \\ \midrule
0 & 219 & 9.77\% \\
1 & 228 & 10.17\% \\
2 & 225 & 10.04\% \\
3 & 225 & 10.04\% \\
4 & 225 & 10.04\% \\
5 & 220 & 9.82\% \\
6 & 227 & 10.13\% \\
7 & 221 & 9.86\% \\
8 & 225 & 10.04\% \\
9 & 226 & 10.08\% \\
\bottomrule \end{tabular}
}
\hfill
\subfloat[CIFAR10]{
\begin{tabular}{lll}
\midrule
\multicolumn{1}{c}{\textbf{Ground Truth}} & \multicolumn{1}{c}{\textbf{Count}} & \multicolumn{1}{c}{\textbf{\%}} \\ \midrule
Airplane & 228 & 10.05\% \\
Automobile & 227 & 10.00\% \\
Bird & 228 & 10.05\% \\
Cat & 228 & 10.05\% \\
Deer & 226 & 9.96\% \\
Dog & 227 & 10.00\% \\
Frog & 227 & 10.00\% \\
Horse & 227 & 10.00\% \\
Ship & 225 & 9.92\% \\
Truck & 226 & 9.96\% \\
\bottomrule \end{tabular}
}
\hfill
\label{tab:groundTruths}
\end{table}

\begin{table}[]
\caption{Parameters of heuristic attacks.}
\centering
\begin{tabular}{llclll}
\toprule
\multicolumn{1}{c}{\multirow{2}{*}{\textbf{Attack}}} & \multicolumn{1}{c}{\multirow{2}{*}{\textbf{Parameter Name}}} & \multicolumn{2}{c}{\textbf{MNIST}} & \multicolumn{2}{c}{\textbf{CIFAR10}} \\ \cmidrule{3-6} 
\multicolumn{1}{c}{} & \multicolumn{1}{c}{} & \multicolumn{1}{c}{\textbf{Strong}} & \multicolumn{1}{c}{\textbf{Balanced}} & \multicolumn{1}{c}{\textbf{Strong}} & \multicolumn{1}{c}{\textbf{Balanced}} \\ \midrule
\multirow{5}{*}{BIM} & Initial Search Factor & \multicolumn{4}{c}{0.75} \\
 & Initial Search Steps & \multicolumn{4}{c}{30} \\
 & Initial Search Factor & \multicolumn{4}{c}{0.75} \\
 & Binary Search Steps & \multicolumn{4}{c}{20} \\
 & \#Iterations & \multicolumn{1}{l}{2k} & \multicolumn{1}{l}{200} & \multicolumn{1}{l}{5k} & 200 \\ 
 & Learning Rate & \multicolumn{1}{l}{1e-3} & \multicolumn{1}{l}{1e-2} & \multicolumn{1}{l}{1e-5} & 1e-3 \\
\midrule
\multirow{7}{*}{Brendel \& Bethge} & Initial Attack & \multicolumn{4}{c}{Blended Noise} \\
 & Overshoot & \multicolumn{4}{c}{1.1} \\ 
 & LR Decay & \multicolumn{4}{c}{0.75} \\
 & LR Decay Every n Steps & \multicolumn{4}{c}{50} \\
 & \#Iterations & \multicolumn{1}{l}{5k} & \multicolumn{1}{l}{200} & \multicolumn{1}{l}{5k} & 200 \\ 
 & Learning Rate & \multicolumn{1}{l}{1e-3} & \multicolumn{1}{l}{1e-3} & \multicolumn{1}{l}{1e-5} & 1e-3 \\
 & Momentum & \multicolumn{4}{c}{0.8} \\
 & Initial Directions & \multicolumn{4}{c}{1000} \\
 & Init Steps & \multicolumn{4}{c}{1000} \\
\midrule
\multirow{14}{*}{Carlini \& Wagner} & Minimum $\tau$ & \multicolumn{4}{c}{1e-5} \\
 & Initial $\tau$ & \multicolumn{4}{c}{1} \\
 & $\tau$ Factor & \multicolumn{1}{l}{0.95} & \multicolumn{1}{l}{0.9} & \multicolumn{1}{l}{0.99} & 0.9 \\ 
 & Initial Const & \multicolumn{4}{c}{1e-5} \\
 & Const Factor & \multicolumn{4}{c}{2} \\
 & Maximum Const & \multicolumn{4}{c}{20} \\
 & Reduce Const & \multicolumn{4}{c}{False} \\
 & Warm Start & \multicolumn{4}{c}{True} \\
 & Abort Early & \multicolumn{4}{c}{True} \\
 & Learning Rate & \multicolumn{1}{l}{1e-2} & \multicolumn{1}{l}{1e-2} & \multicolumn{1}{l}{1e-5} & 1e-4 \\
 & Max Iterations & \multicolumn{1}{l}{1k} & \multicolumn{1}{l}{100} & \multicolumn{1}{l}{5k} & 100 \\
 & $\tau$ Check Every n Steps & \multicolumn{4}{c}{1} \\
 & Const Check Every n Steps & \multicolumn{4}{c}{5} \\
\midrule
\multirow{4}{*}{Deepfool} & \#Iterations & \multicolumn{4}{c}{5k} \\
 & Candidates & \multicolumn{4}{c}{10} \\
 & Overshoot & \multicolumn{4}{c}{1e-5} \\
\midrule
\multirow{3}{*}{FGSM} & Initial Search Factor & \multicolumn{4}{c}{0.75} \\
 & Initial Search Steps & \multicolumn{4}{c}{30} \\
 & Initial Search Factor & \multicolumn{4}{c}{0.75} \\
 & Binary Search Steps & \multicolumn{4}{c}{20} \\
\midrule
\multirow{6}{*}{PGD} & Initial Search Factor & \multicolumn{4}{c}{0.75} \\
 & Initial Search Steps & \multicolumn{4}{c}{30} \\
 & Initial Search Factor & \multicolumn{4}{c}{0.75} \\
 & Binary Search Steps & \multicolumn{4}{c}{20} \\
 & \#Iterations & \multicolumn{1}{l}{5k} & \multicolumn{1}{l}{200} & \multicolumn{1}{l}{5k} & 200 \\
 & Learning Rate & \multicolumn{1}{l}{1e-4} & \multicolumn{1}{l}{1e-3} & \multicolumn{1}{l}{1e-4} & 1e-3 \\
\midrule
\multirow{4}{*}{Uniform Noise} & Initial Search Factor & \multicolumn{4}{c}{0.75} \\
 & Initial Search Steps & \multicolumn{4}{c}{30} \\
 & Initial Search Factor & \multicolumn{4}{c}{0.75} \\
 & Binary Search Steps & \multicolumn{4}{c}{20} \\
 & Runs & \multicolumn{1}{l}{8k} & \multicolumn{1}{l}{200} & \multicolumn{1}{l}{8k} & 200 \\
\bottomrule \end{tabular}
\label{tab:heuristicParameters}
\end{table}

\begin{table}[]
\caption{Parameter set used to initialize MIPVerify for MNIST. All other parameters are identical to the strong MNIST attack parameter set.}
\centering
\begin{tabular}{lll}
\toprule
\textbf{Attack Name} & \textbf{Parameter Name} & \textbf{Value} \\ \midrule
\multirow{2}{*}{BIM} & \#Iterations & 5k \\
 & Learning Rate & 1e-5 \\ \midrule
Brendel \& Bethge & Learning Rate & 1e-3 \\ \midrule
\multirow{3}{*}{Carlini \& Wagner} & Tau Factor & 0.99 \\
 & Learning Rate & 1e-3 \\
 & \#Iterations & 5k \\
\bottomrule \end{tabular}

\label{tab:mnistMipParameterDifferences}
\end{table}

\begin{table}[]
\caption{Parameters of MIPVerify.}
\centering
\begin{tabular}{lcl}
\toprule
\multicolumn{1}{c}{\multirow{2}{*}{\textbf{Parameter Name}}} & \multicolumn{2}{c}{\textbf{Value}} \\ \cmidrule{2-3} 
\multicolumn{1}{c}{} & \multicolumn{1}{l}{\textbf{Exploration}} & \textbf{Main} \\ \midrule
Absolute Tolerance & \multicolumn{2}{c}{1e-5} \\
Relative Tolerance & \multicolumn{2}{c}{1e-10} \\
Threads & \multicolumn{2}{c}{1} \\
Timeout (s) & \multicolumn{1}{l}{120} & 7200 \\
Tightening Absolute Tolerance & \multicolumn{2}{c}{1e-4} \\
Tightening Relative Tolerance & \multicolumn{2}{c}{1e-10} \\
Tightening Timeout (s) & \multicolumn{1}{l}{20} & 240 \\
Tightening Threads & \multicolumn{2}{c}{1} \\
\bottomrule \end{tabular}
\label{tab:mipVerifyParameters}
\end{table}

\begin{table}[]
\caption{MIPVerify bound tightness statistics.}
\centering
\begin{tabular}{lll}
\toprule
\textbf{Architecture} & \textbf{Training} & \textbf{\% Tight} \\ \midrule
\multirow{3}{*}{MNIST A} & Standard & 95.40\% \\
 & Adversarial & 99.60\% \\
 & ReLU & 82.46\% \\ \midrule
\multirow{3}{*}{MNIST B} & Standard & 74.61\% \\
 & Adversarial & 85.68\% \\
 & ReLU & 75.55\% \\ \midrule
\multirow{3}{*}{MNIST C} & Standard & 86.21\% \\
 & Adversarial & 97.28\% \\ 
 & ReLU & 95.63\% \\ \midrule
\multirow{3}{*}{CIFAR10 A} & Standard & 81.18\% \\
 & Adversarial & 82.50\% \\
 & ReLU & 92.73\% \\ \midrule
\multirow{3}{*}{CIFAR10 B} & Standard & 56.32\% \\
 & Adversarial & 58.88\% \\ 
 & ReLU & 81.67\% \\ \midrule
\multirow{3}{*}{CIFAR10 C} & Standard & 100.00\% \\
 & Adversarial & 100.00\% \\
 & ReLU & 100.00\% \\
\bottomrule \end{tabular}
\label{tab:tightness}
\end{table}
\clearpage
\section{Quantile-Based Calibration}
\label{app:quantileCalibration}

The error correction model in CA can be empirically calibrated so as to control the chance of false positives (i.e. inputs wrongly reported as not robust) and false negatives (i.e. non-robust inputs reported as being robust).

Given the strong correlation that we observed between the distance of heuristic adversarial examples and the true decision boundary distance, using a linear model $b_\alpha$ seems a reasonable choice. Under this assumption, the correction model depends only on the distance between the original example and the adversarial one, i.e. on $\|\vx, a(\vx)\|$. This property allows us to rewrite the main check performed by CA as:
\begin{equation}
    ||\vx - a(\vx))||_p - b(\vx) = \alpha_1 ||\vx - a(\vx)||_p + \alpha_0 \leq \varepsilon
\end{equation}
where $a(\vx)$ is the adversarial example found by the attack $a$ for the input $\vx$.
The parameters $\alpha_1, \alpha_0$ can then be obtained via quantile regression \citep{koenker1978regression} by using the true decision boundary distance (i.e. $d_p^*(x)$) as a target.

The approach provides a simple, interpretable mechanism to control how conservative the detection check should be: with a small quantile, CA will tend to underestimate the decision boundary distance, leading to fewer missed detections, but more false alarms; using a high quantile will lead to the opposite behavior.

We test this type of buffer using 5-fold cross-validation on each configuration. Specifically, we calibrate the model using 1\%, 50\% and 99\% as quantiles. \Cref{tab:calibrationResultsMnistStrong,tab:calibrationResultsMnistBalanced,tab:calibrationResultsCifar10Strong,tab:calibrationResultsCifar10Balanced} provide a comparison between the expected quantile and the average true quantile of each configuration on the validation folds. Additionally, we plot in \Cref{fig:f1MnistA,fig:f1MnistB,fig:f1MnistC,fig:f1Cifar10A,fig:f1Cifar10B,fig:f1Cifar10C} the mean $F_1$ score in relation to the choice of $\varepsilon$.

\begin{table}[]
\caption{Expected vs true quantile for MNIST strong with 5-fold cross validation.}
\centering
\begin{tabular}{llll}
\toprule
\textbf{Architecture} & \textbf{Training} & \textbf{Expected Quantile} & \textbf{True Quantile} \\ \midrule
\multirow{9}{*}{A} & \multirow{3}{*}{Standard} & 1.00\% & 0.99\textpm1.02\% \\
 &  & 50.00\% & 49.93\textpm2.35\% \\
 &  & 99.00\% & 95.60\textpm3.77\% \\ \cmidrule{2-4} 
 & \multirow{3}{*}{Adversarial} & 1.00\% & 1.11\textpm0.53\% \\
 &  & 50.00\% & 50.25\textpm1.58\% \\
 &  & 99.00\% & 89.84\textpm6.42\% \\ \cmidrule{2-4} 
 & \multirow{3}{*}{ReLU} & 1.00\% & 1.11\textpm0.45\% \\
 &  & 50.00\% & 50.02\textpm1.72\% \\
 &  & 99.00\% & 91.95\textpm5.64\% \\ \midrule
\multirow{9}{*}{B} & \multirow{3}{*}{Standard} & 1.00\% & 1.07\textpm0.48\% \\
 &  & 50.00\% & 49.80\textpm0.76\% \\
 &  & 99.00\% & 97.76\textpm0.71\% \\ \cmidrule{2-4} 
 & \multirow{3}{*}{Adversarial} & 1.00\% & 1.22\textpm1.01\% \\
 &  & 50.00\% & 49.88\textpm4.63\% \\
 &  & 99.00\% & 98.10\textpm0.36\% \\ \cmidrule{2-4} 
 & \multirow{3}{*}{ReLU} & 1.00\% & 1.04\textpm0.77\% \\
 &  & 50.00\% & 49.98\textpm3.17\% \\
 &  & 99.00\% & 97.69\textpm1.41\% \\ \midrule
\multirow{9}{*}{C} & \multirow{3}{*}{Standard} & 1.00\% & 1.07\textpm0.37\% \\
 &  & 50.00\% & 50.17\textpm1.64\% \\
 &  & 99.00\% & 98.73\textpm0.42\% \\ \cmidrule{2-4} 
 & \multirow{3}{*}{Adversarial} & 1.00\% & 1.05\textpm0.29\% \\
 &  & 50.00\% & 49.87\textpm3.58\% \\
 &  & 99.00\% & 99.00\textpm0.47\% \\ \cmidrule{2-4} 
 & \multirow{3}{*}{ReLU} & 1.00\% & 1.06\textpm0.67\% \\
 &  & 50.00\% & 50.02\textpm1.85\% \\
 &  & 99.00\% & 93.99\textpm3.51\% \\ \bottomrule
\end{tabular}
\label{tab:calibrationResultsMnistStrong}
\end{table}

\begin{table}[]
\caption{Expected vs true quantile for MNIST balanced with 5-fold cross validation.}
\centering
\begin{tabular}{llll}
\toprule
\textbf{Architecture} & \textbf{Training} & \textbf{Expected Quantile} & \textbf{True Quantile} \\ \midrule
\multirow{9}{*}{A} & \multirow{3}{*}{Standard} & 1.00\% & 1.30\textpm0.79\% \\
 &  & 50.00\% & 49.98\textpm3.10\% \\
 &  & 99.00\% & 93.99\textpm2.59\% \\ \cmidrule{2-4} 
 & \multirow{3}{*}{Adversarial} & 1.00\% & 0.97\textpm0.40\% \\
 &  & 50.00\% & 50.12\textpm1.14\% \\
 &  & 99.00\% & 90.44\textpm1.90\% \\ \cmidrule{2-4} 
 & \multirow{3}{*}{ReLU} & 1.00\% & 1.02\textpm0.31\% \\
 &  & 50.00\% & 50.02\textpm1.05\% \\
 &  & 99.00\% & 95.10\textpm2.82\% \\ \midrule
\multirow{9}{*}{B} & \multirow{3}{*}{Standard} & 1.00\% & 1.03\textpm0.36\% \\
 &  & 50.00\% & 49.98\textpm0.70\% \\
 &  & 99.00\% & 98.88\textpm0.45\% \\ \cmidrule{2-4} 
 & \multirow{3}{*}{Adversarial} & 1.00\% & 1.17\textpm0.97\% \\
 &  & 50.00\% & 50.17\textpm4.54\% \\
 &  & 99.00\% & 98.69\textpm0.59\% \\ \cmidrule{2-4} 
 & \multirow{3}{*}{ReLU} & 1.00\% & 1.04\textpm0.49\% \\
 &  & 50.00\% & 50.34\textpm2.49\% \\
 &  & 99.00\% & 98.73\textpm0.53\% \\ \midrule
\multirow{9}{*}{C} & \multirow{3}{*}{Standard} & 1.00\% & 1.07\textpm0.33\% \\
 &  & 50.00\% & 49.98\textpm0.91\% \\
 &  & 99.00\% & 98.88\textpm0.55\% \\ \cmidrule{2-4} 
 & \multirow{3}{*}{Adversarial} & 1.00\% & 1.10\textpm0.37\% \\
 &  & 50.00\% & 50.12\textpm4.15\% \\
 &  & 99.00\% & 99.00\textpm0.35\% \\ \cmidrule{2-4} 
 & \multirow{3}{*}{ReLU} & 1.00\% & 1.06\textpm0.67\% \\
 &  & 50.00\% & 50.12\textpm2.67\% \\
 &  & 99.00\% & 98.62\textpm0.50\% \\
\bottomrule
\end{tabular}
\label{tab:calibrationResultsMnistBalanced}
\end{table}

\begin{table}[]
\caption{Expected vs true quantile for CIFAR10 strong with 5-fold cross validation.}
\centering
\begin{tabular}{llll}
\toprule
\textbf{Architecture} & \textbf{Training} & \textbf{Expected Quantile} & \textbf{True Quantile} \\ \midrule
\multirow{9}{*}{A} & \multirow{3}{*}{Standard} & 1.00\% & 1.09\textpm0.86\% \\
 &  & 50.00\% & 50.09\textpm1.84\% \\
 &  & 99.00\% & 98.82\textpm0.63\% \\ \cmidrule{2-4} 
 & \multirow{3}{*}{Adversarial} & 1.00\% & 1.05\textpm0.23\% \\
 &  & 50.00\% & 49.86\textpm3.59\% \\
 &  & 99.00\% & 98.90\textpm0.62\% \\ \cmidrule{2-4} 
 & \multirow{3}{*}{ReLU} & 1.00\% & 0.97\textpm0.41\% \\
 &  & 50.00\% & 49.93\textpm3.42\% \\
 &  & 99.00\% & 97.66\textpm1.35\% \\ \midrule
\multirow{9}{*}{B} & \multirow{3}{*}{Standard} & 1.00\% & 0.98\textpm0.18\% \\
 &  & 50.00\% & 49.91\textpm1.18\% \\
 &  & 99.00\% & 98.84\textpm0.56\% \\ \cmidrule{2-4} 
 & \multirow{3}{*}{Adversarial} & 1.00\% & 0.91\textpm0.48\% \\
 &  & 50.00\% & 50.00\textpm3.58\% \\
 &  & 99.00\% & 98.69\textpm0.72\% \\ \cmidrule{2-4} 
 & \multirow{3}{*}{ReLU} & 1.00\% & 1.10\textpm0.72\% \\
 &  & 50.00\% & 49.98\textpm2.21\% \\
 &  & 99.00\% & 98.85\textpm0.61\% \\ \midrule
\multirow{9}{*}{C} & \multirow{3}{*}{Standard} & 1.00\% & 0.93\textpm0.60\% \\
 &  & 50.00\% & 50.00\textpm1.86\% \\
 &  & 99.00\% & 98.71\textpm0.71\% \\ \cmidrule{2-4} 
 & \multirow{3}{*}{Adversarial} & 1.00\% & 1.09\textpm0.17\% \\
 &  & 50.00\% & 50.14\textpm2.63\% \\
 &  & 99.00\% & 98.27\textpm0.81\% \\ \cmidrule{2-4} 
 & \multirow{3}{*}{ReLU} & 1.00\% & 1.01\textpm0.62\% \\
 &  & 50.00\% & 50.02\textpm2.09\% \\
 &  & 99.00\% & 96.17\textpm2.40\% \\
\bottomrule
\end{tabular}
\label{tab:calibrationResultsCifar10Strong}
\end{table}

\begin{table}[]
\caption{Expected vs true quantile for CIFAR10 balanced with 5-fold cross validation.}
\centering
\begin{tabular}{llll}
\toprule
\textbf{Architecture} & \textbf{Training} & \textbf{Expected Quantile} & \textbf{True Quantile} \\ \midrule
\multirow{9}{*}{A} & \multirow{3}{*}{Standard} & 1.00\% & 0.95\textpm0.61\% \\
 &  & 50.00\% & 50.32\textpm2.38\% \\
 &  & 99.00\% & 98.87\textpm0.59\% \\ \cmidrule{2-4} 
 & \multirow{3}{*}{Adversarial} & 1.00\% & 1.05\textpm0.23\% \\
 &  & 50.00\% & 50.23\textpm2.65\% \\
 &  & 99.00\% & 98.81\textpm0.96\% \\ \cmidrule{2-4} 
 & \multirow{3}{*}{ReLU} & 1.00\% & 4.14\textpm5.32\% \\
 &  & 50.00\% & 50.37\textpm1.02\% \\
 &  & 99.00\% & 94.62\textpm2.87\% \\ \midrule
\multirow{9}{*}{B} & \multirow{3}{*}{Standard} & 1.00\% & 1.07\textpm0.46\% \\
 &  & 50.00\% & 49.91\textpm2.78\% \\
 &  & 99.00\% & 98.93\textpm0.73\% \\ \cmidrule{2-4} 
 & \multirow{3}{*}{Adversarial} & 1.00\% & 1.13\textpm0.57\% \\
 &  & 50.00\% & 50.18\textpm2.05\% \\
 &  & 99.00\% & 98.82\textpm0.71\% \\ \cmidrule{2-4} 
 & \multirow{3}{*}{ReLU} & 1.00\% & 1.23\textpm0.38\% \\
 &  & 50.00\% & 50.11\textpm0.38\% \\
 &  & 99.00\% & 98.77\textpm0.51\% \\ \midrule
\multirow{9}{*}{C} & \multirow{3}{*}{Standard} & 1.00\% & 0.98\textpm0.50\% \\
 &  & 50.00\% & 50.09\textpm2.21\% \\
 &  & 99.00\% & 98.85\textpm0.43\% \\ \cmidrule{2-4} 
 & \multirow{3}{*}{Adversarial} & 1.00\% & 1.09\textpm0.26\% \\
 &  & 50.00\% & 49.96\textpm2.72\% \\
 &  & 99.00\% & 98.86\textpm0.32\% \\ \cmidrule{2-4} 
 & \multirow{3}{*}{ReLU} & 1.00\% & 1.01\textpm0.36\% \\
 &  & 50.00\% & 49.93\textpm1.60\% \\
 &  & 99.00\% & 97.93\textpm0.63\% \\
\bottomrule
\end{tabular}
\label{tab:calibrationResultsCifar10Balanced}
\end{table}

\newlength{\calibrationFigureSize}
\setlength{\calibrationFigureSize}{0.32\textwidth}

\begin{figure}
\centering
\subfloat[MNIST A Standard Strong]{
    \begin{adjustbox}{width=\calibrationFigureSize}
    \includegraphics{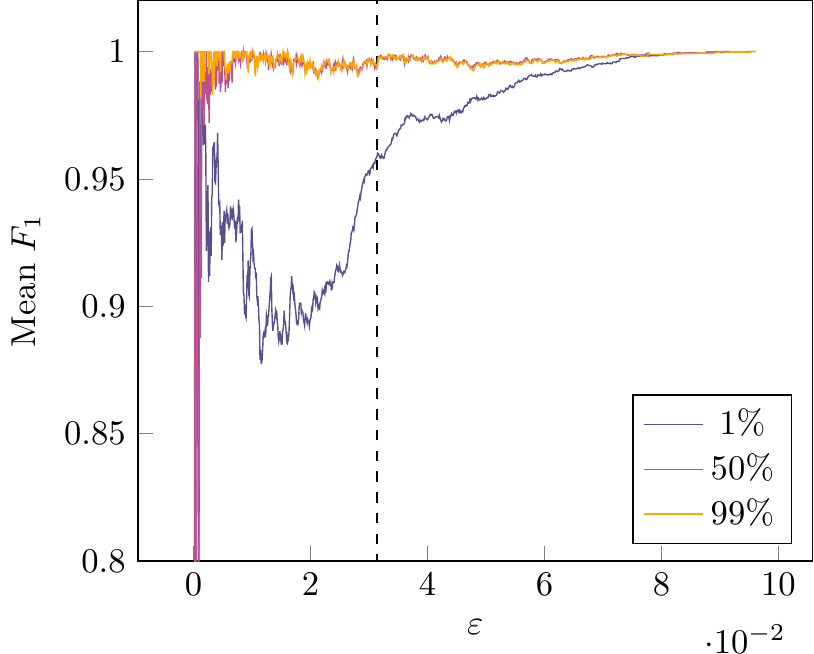}
    \end{adjustbox}
}
\hfill
\subfloat[MNIST A Standard Balanced]{
    \begin{adjustbox}{width=\calibrationFigureSize}
    \includegraphics{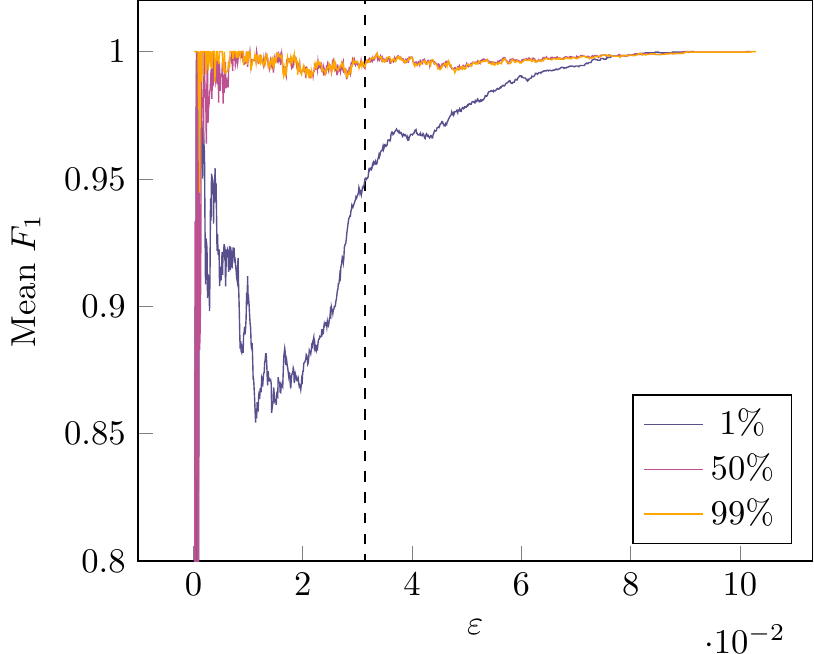}
    \end{adjustbox}
}
\hfill
\subfloat[MNIST A Adversarial Strong]{
    \begin{adjustbox}{width=\calibrationFigureSize}
    \includegraphics{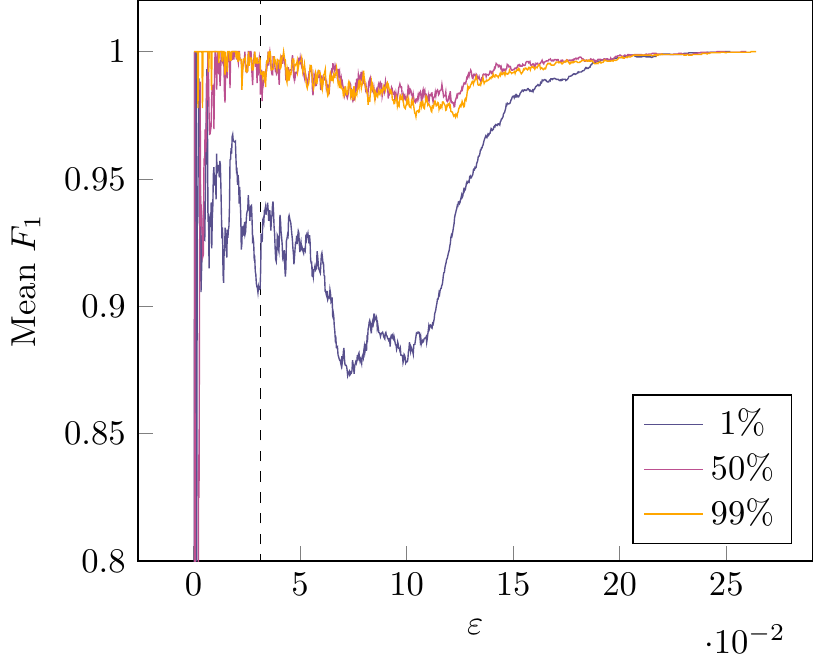}
    \end{adjustbox}
}

\subfloat[MNIST A Adversarial Balanced]{
    \begin{adjustbox}{width=\calibrationFigureSize}
    \includegraphics{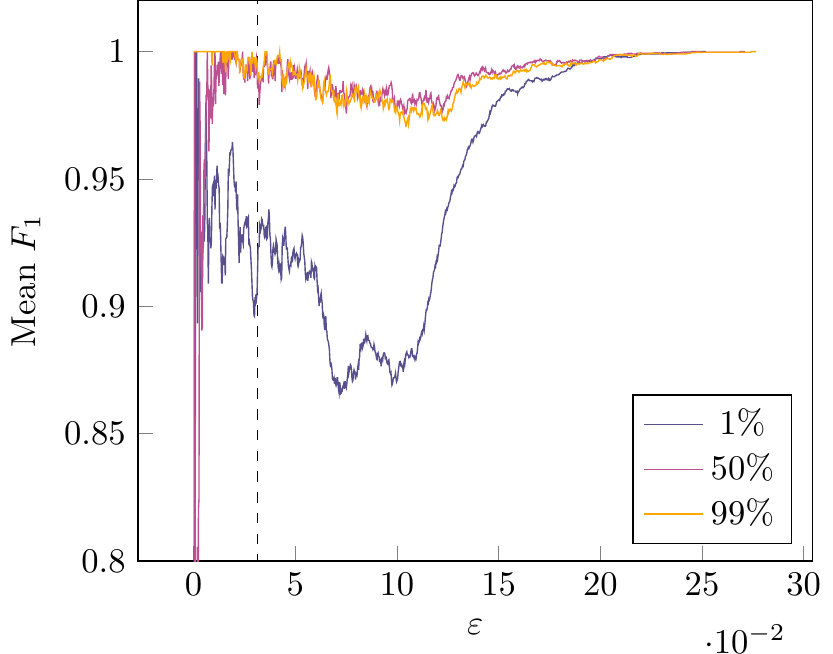}
    \end{adjustbox}
}
\hfill
\subfloat[MNIST A ReLU Strong]{
    \begin{adjustbox}{width=\calibrationFigureSize}
    \includegraphics{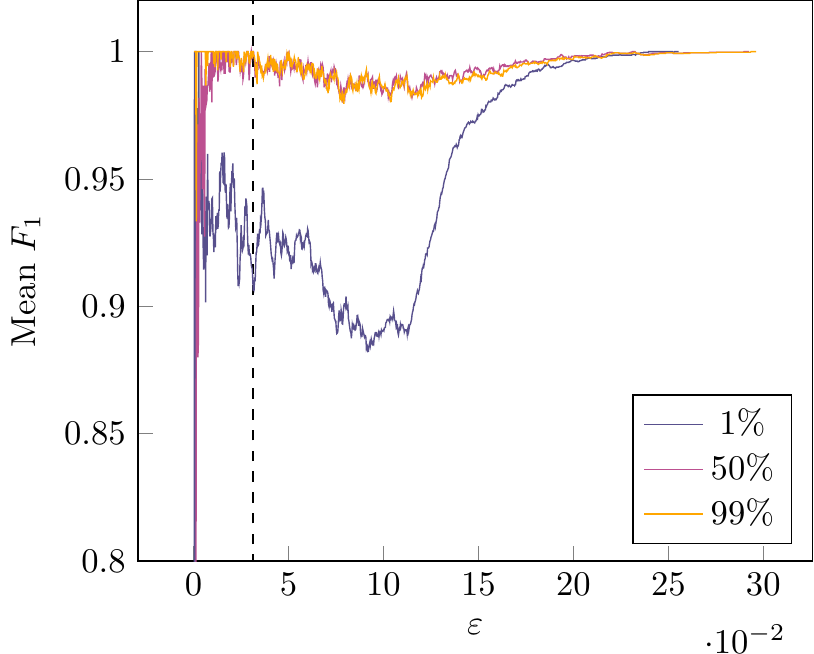}
    \end{adjustbox}
}
\hfill
\subfloat[MNIST A ReLU Balanced]{
    \begin{adjustbox}{width=\calibrationFigureSize}
    \includegraphics{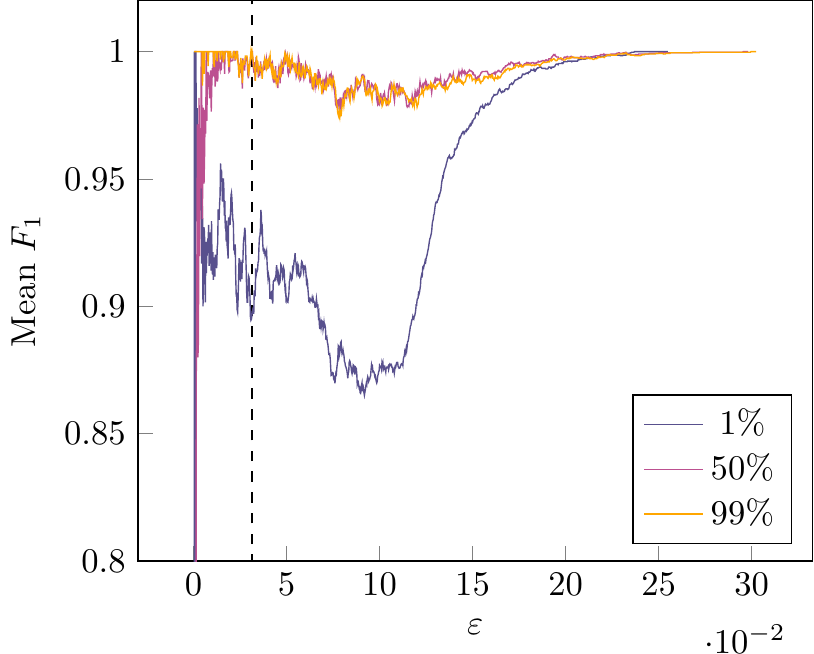}
    \end{adjustbox}
}

\caption{$F_1$ scores in relation to $\varepsilon$ for MNIST A for each considered percentile. For ease of visualization, we set the graph cutoff at $F_1 = 0.8$. We also mark 8/255 (a common choice for $\varepsilon$) with a dotted line.} 
   
\label{fig:f1MnistA}
\end{figure}

\begin{figure}
\centering
\subfloat[MNIST B Standard Strong]{
    \begin{adjustbox}{width=\calibrationFigureSize}
    \includegraphics{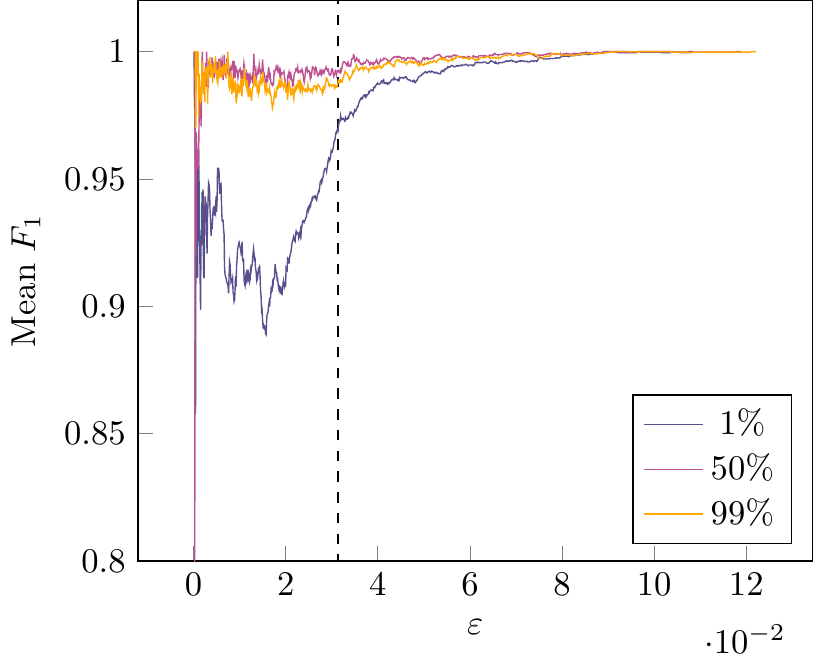}
    \end{adjustbox}
}
\hfill
\subfloat[MNIST B Standard Balanced]{
    \begin{adjustbox}{width=\calibrationFigureSize}
    \includegraphics{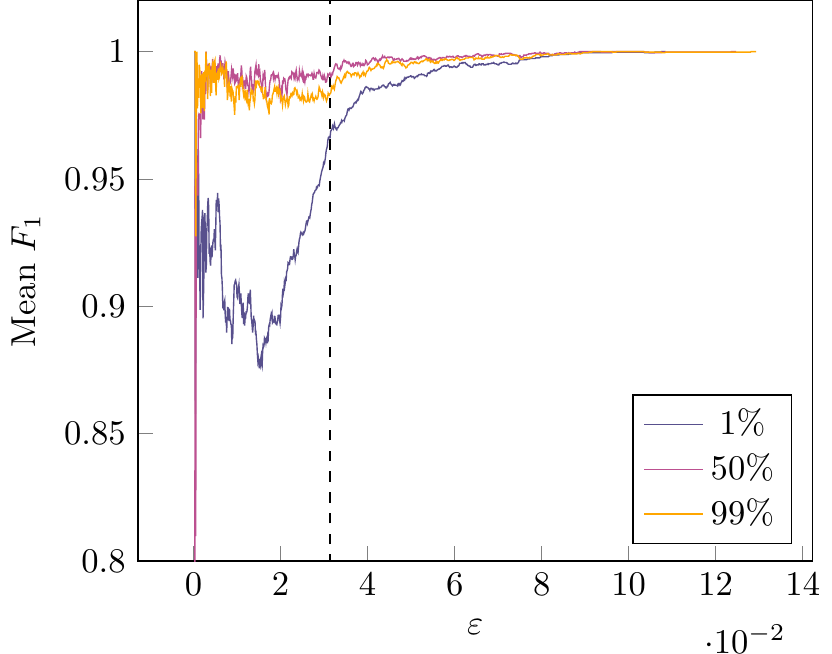}
    \end{adjustbox}
}
\hfill
\subfloat[MNIST B Adversarial Strong]{
    \begin{adjustbox}{width=\calibrationFigureSize}
    \includegraphics{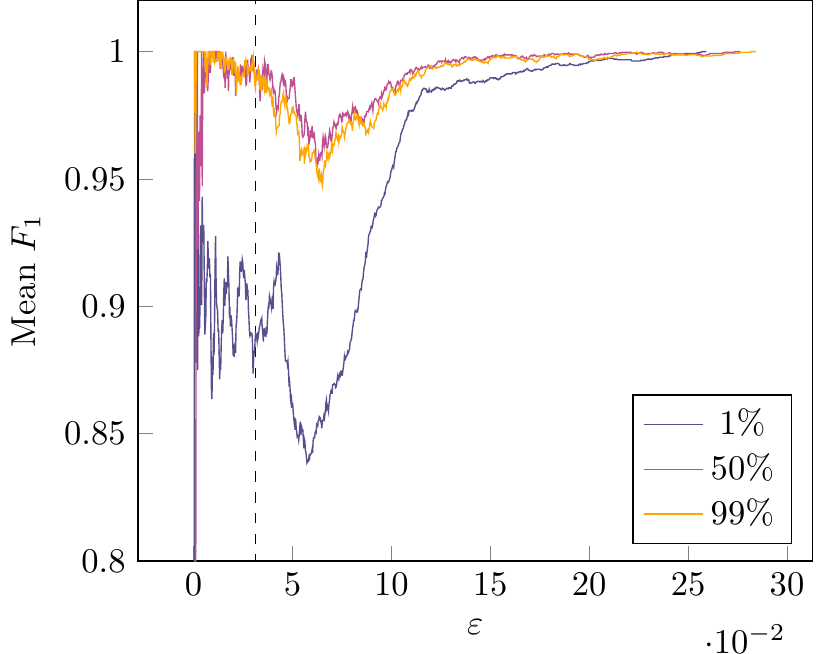}
    \end{adjustbox}
}

\subfloat[MNIST B Adversarial Balanced]{
    \begin{adjustbox}{width=\calibrationFigureSize}
    \includegraphics{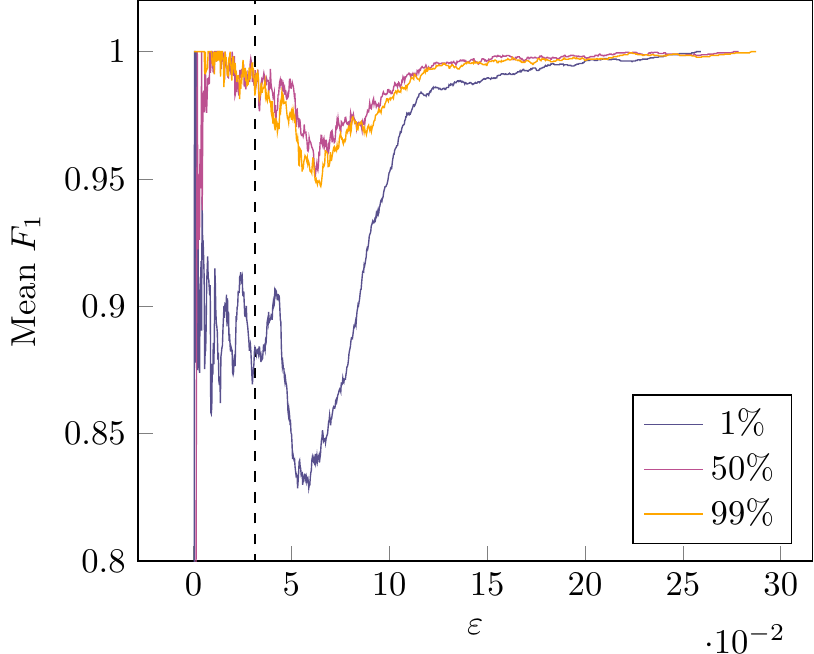}
    \end{adjustbox}
}
\hfill
\subfloat[MNIST B ReLU Strong]{
    \begin{adjustbox}{width=\calibrationFigureSize}
    \includegraphics{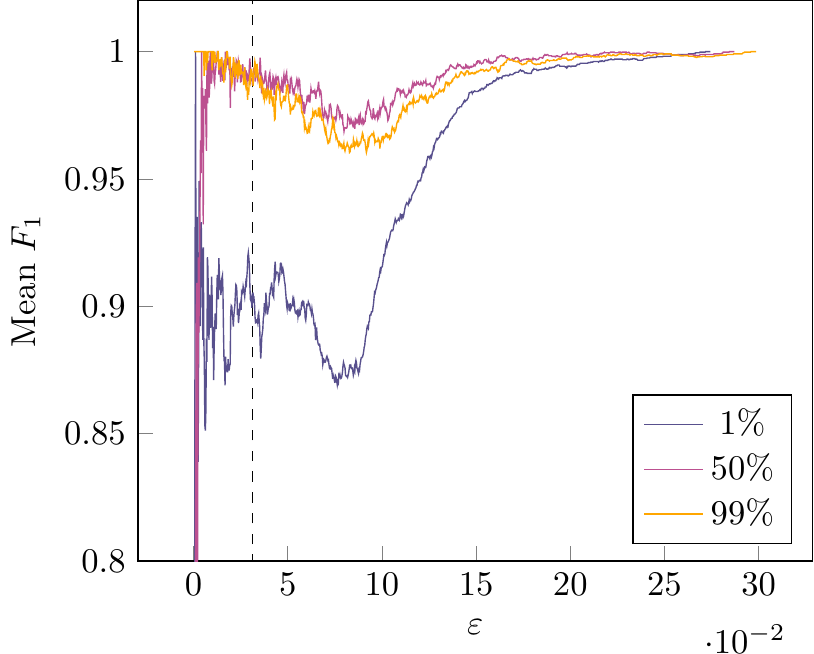}
    \end{adjustbox}
}
\hfill
\subfloat[MNIST B ReLU Balanced]{
    \begin{adjustbox}{width=\calibrationFigureSize}
    \includegraphics{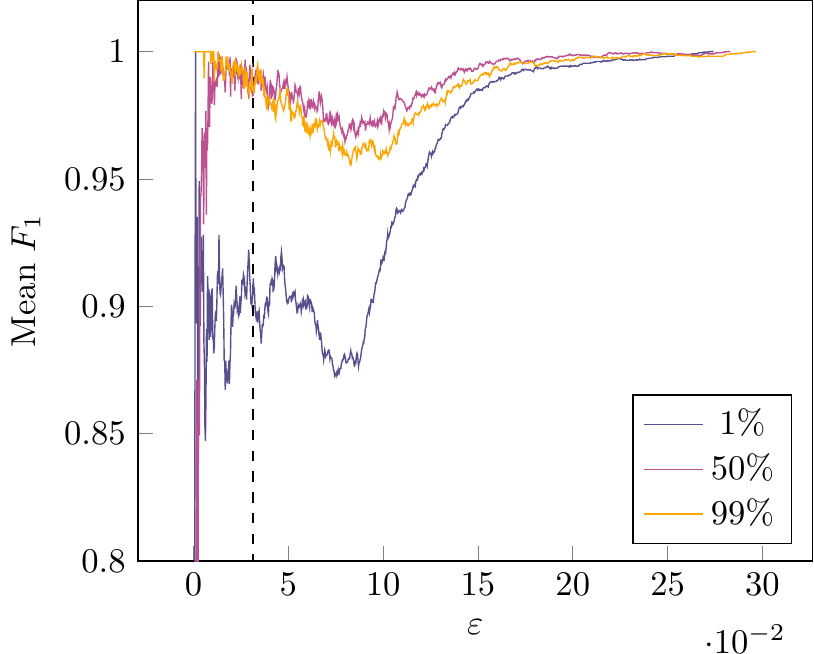}
    \end{adjustbox}
}

\caption{$F_1$ scores in relation to $\varepsilon$ for MNIST B for each considered percentile. For ease of visualization, we set the graph cutoff at $F_1 = 0.8$. We also mark 8/255 (a common choice for $\varepsilon$) with a dotted line.} 
   
\label{fig:f1MnistB}
\end{figure}

\begin{figure}
\centering
\subfloat[MNIST C Standard Strong]{
    \begin{adjustbox}{width=\calibrationFigureSize}
    \includegraphics{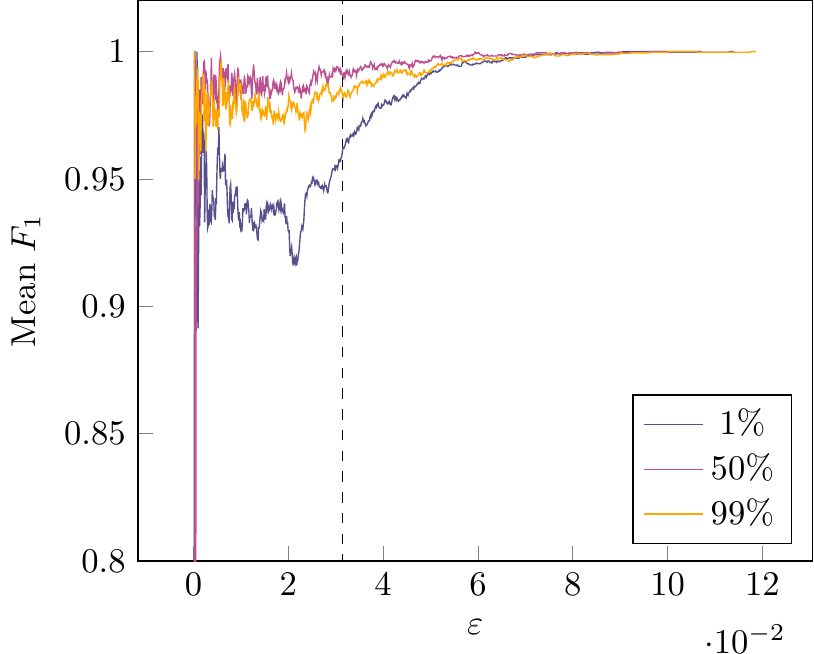}
    \end{adjustbox}
}
\hfill
\subfloat[MNIST C Standard Balanced]{
    \begin{adjustbox}{width=\calibrationFigureSize}
    \includegraphics{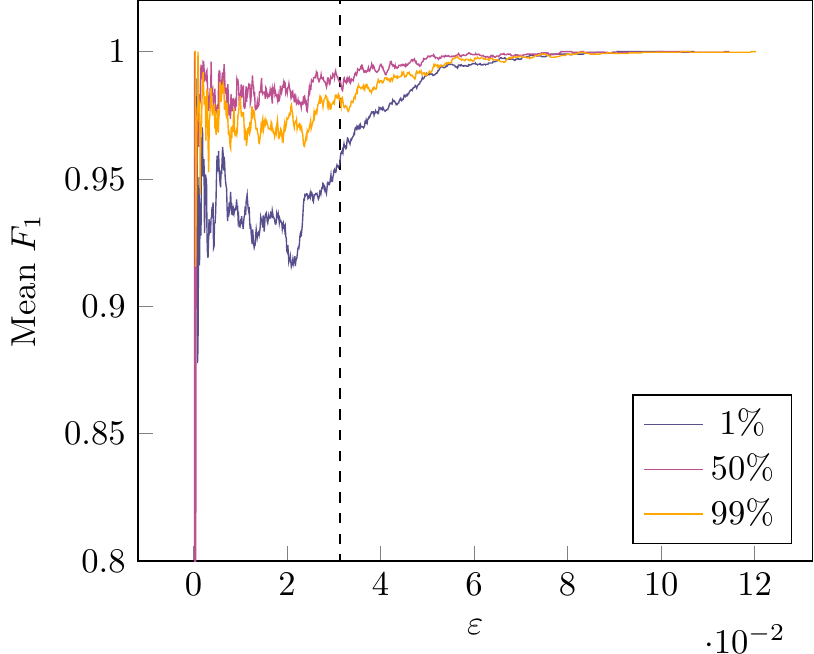}
    \end{adjustbox}
}
\hfill
\subfloat[MNIST C Adversarial Strong]{
    \begin{adjustbox}{width=\calibrationFigureSize}
    \includegraphics{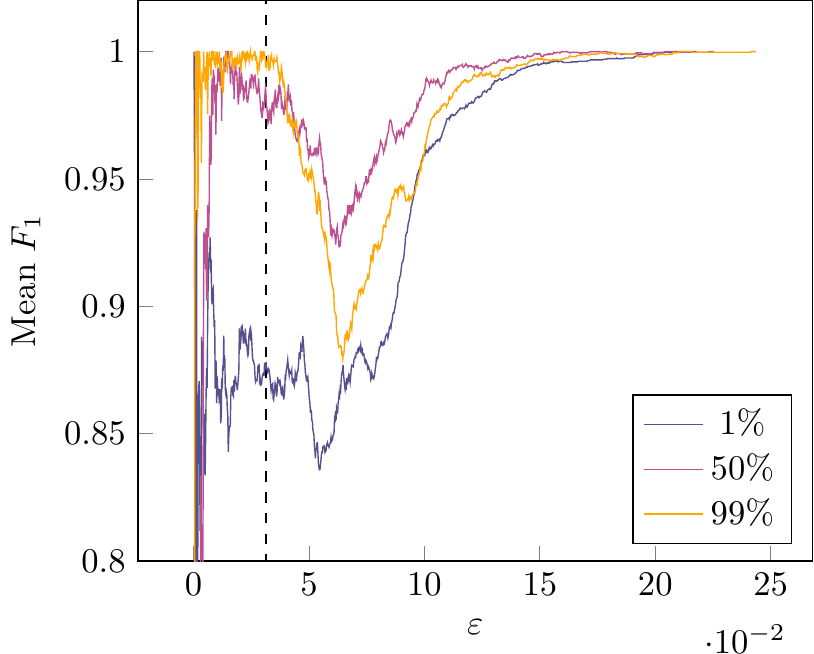}
    \end{adjustbox}
}

\subfloat[MNIST C Adversarial Balanced]{
    \begin{adjustbox}{width=\calibrationFigureSize}
    \includegraphics{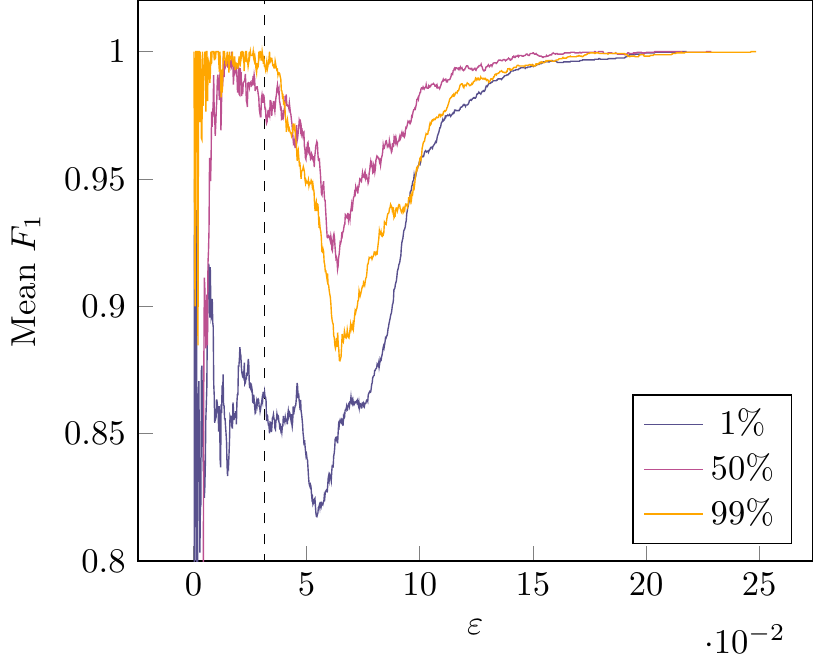}
    \end{adjustbox}
}
\hfill
\subfloat[MNIST C ReLU Strong]{
    \begin{adjustbox}{width=\calibrationFigureSize}
    \includegraphics{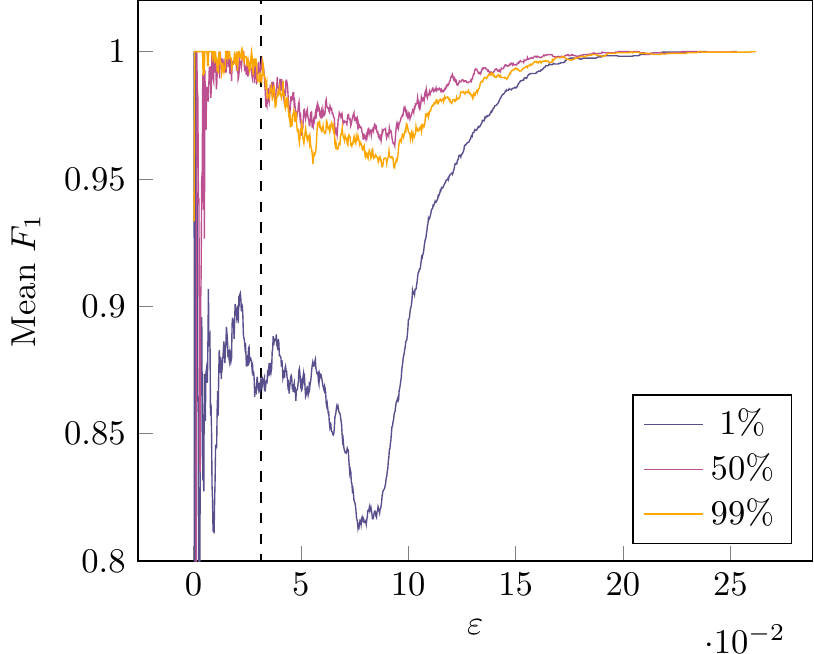}
    \end{adjustbox}
}
\hfill
\subfloat[MNIST C ReLU Balanced]{
    \begin{adjustbox}{width=\calibrationFigureSize}
    \includegraphics{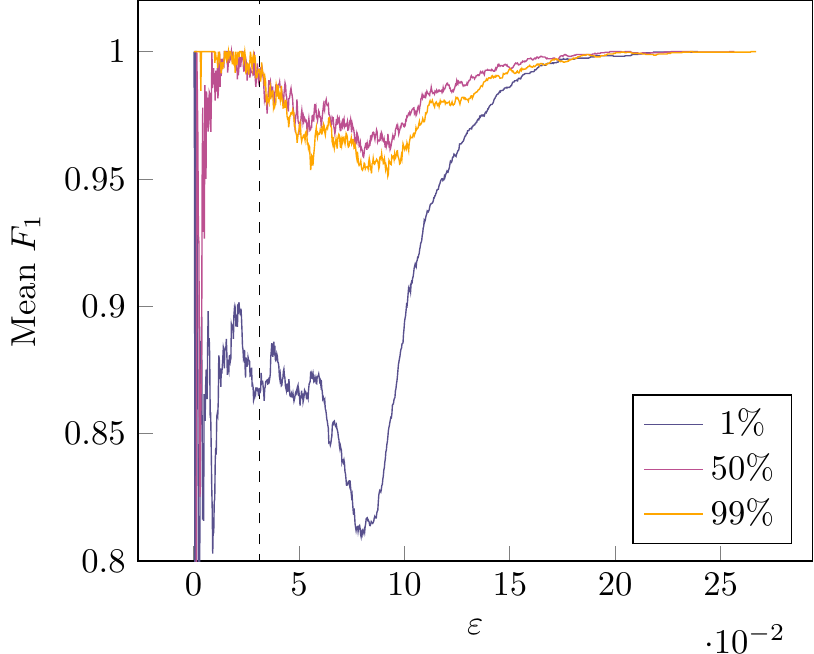}
    \end{adjustbox}
}

\caption{$F_1$ scores in relation to $\varepsilon$ for MNIST C for each considered percentile. For ease of visualization, we set the graph cutoff at $F_1 = 0.8$. We also mark 8/255 (a common choice for $\varepsilon$) with a dotted line.} 
   
\label{fig:f1MnistC}
\end{figure}

\begin{figure}
\centering
\subfloat[CIFAR10 A Standard Strong]{
    \begin{adjustbox}{width=\calibrationFigureSize}
    \includegraphics{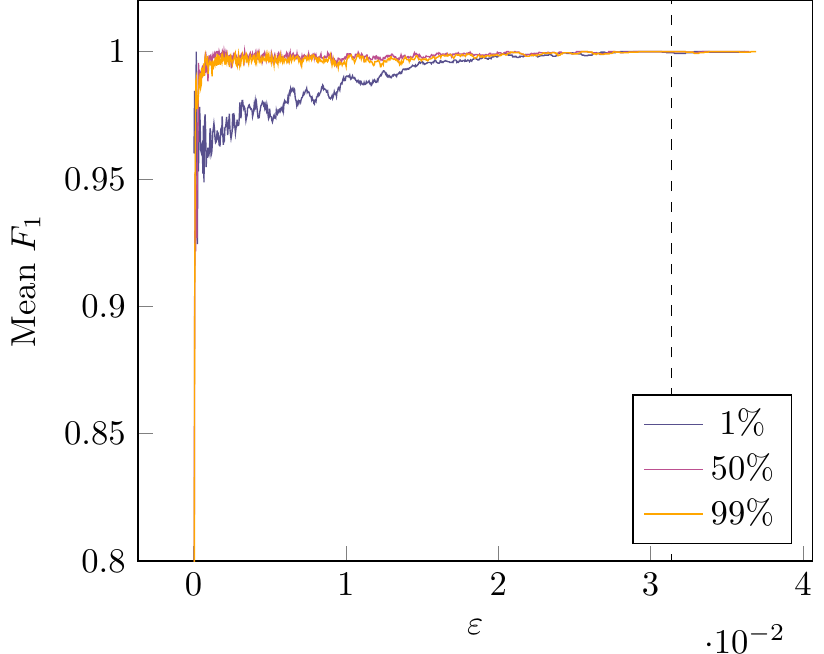}
    \end{adjustbox}
}
\hfill
\subfloat[CIFAR10 A Standard Balanced]{
    \begin{adjustbox}{width=\calibrationFigureSize}
    \includegraphics{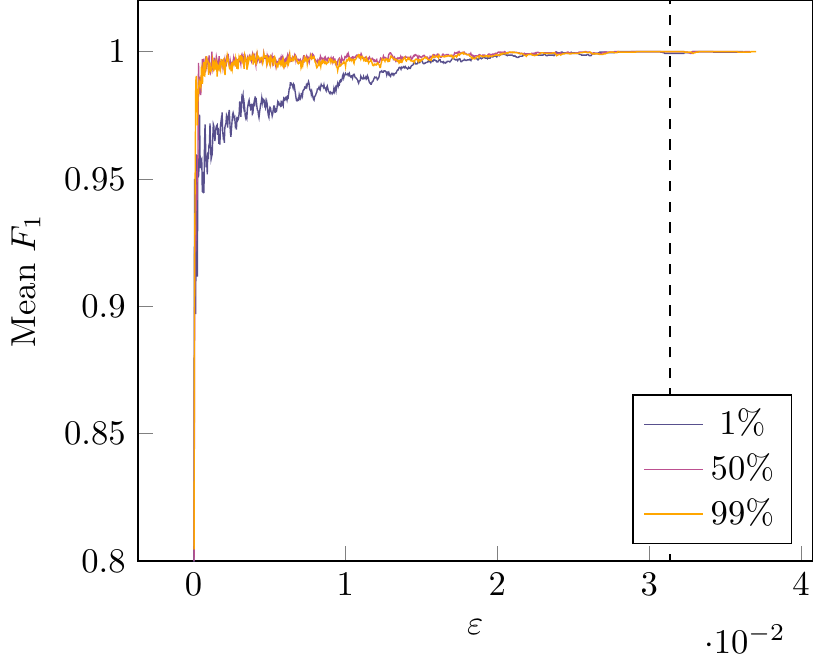}
    \end{adjustbox}
}
\hfill
\subfloat[CIFAR10 A Adversarial Strong]{
    \begin{adjustbox}{width=\calibrationFigureSize}
    \includegraphics{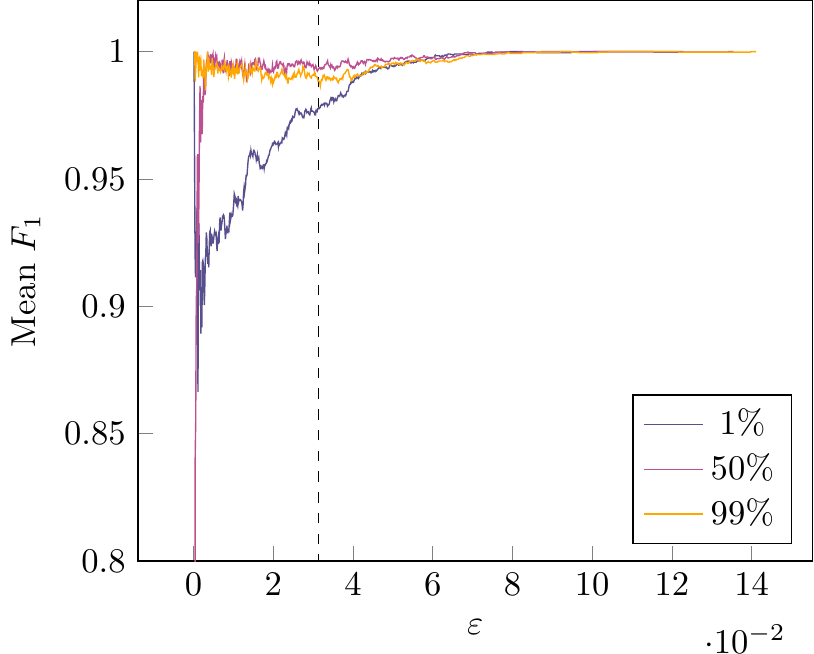}
    \end{adjustbox}
}

\subfloat[CIFAR10 A Adversarial Balanced]{
    \begin{adjustbox}{width=\calibrationFigureSize}
    \includegraphics{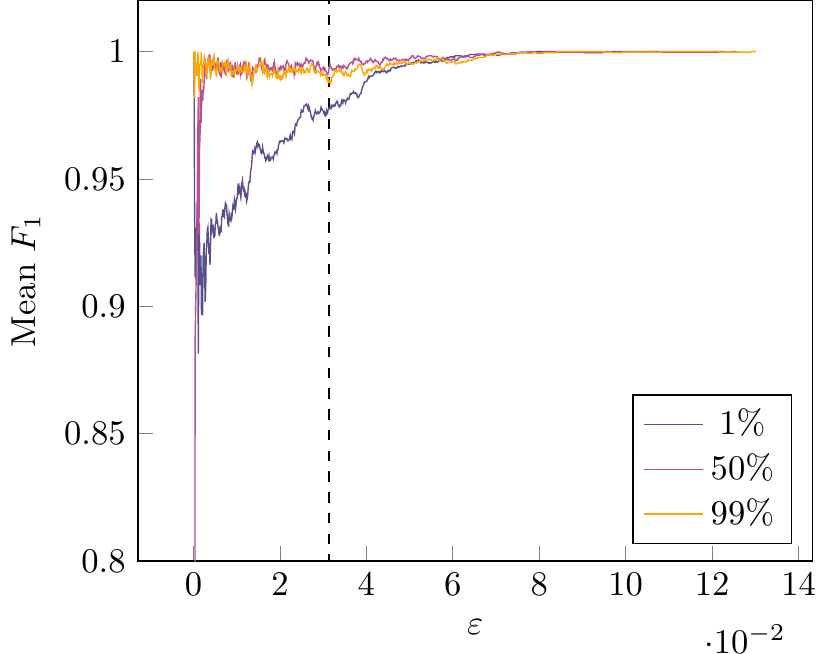}
    \end{adjustbox}
}
\hfill
\subfloat[CIFAR10 A ReLU Strong]{
    \begin{adjustbox}{width=\calibrationFigureSize}
    \includegraphics{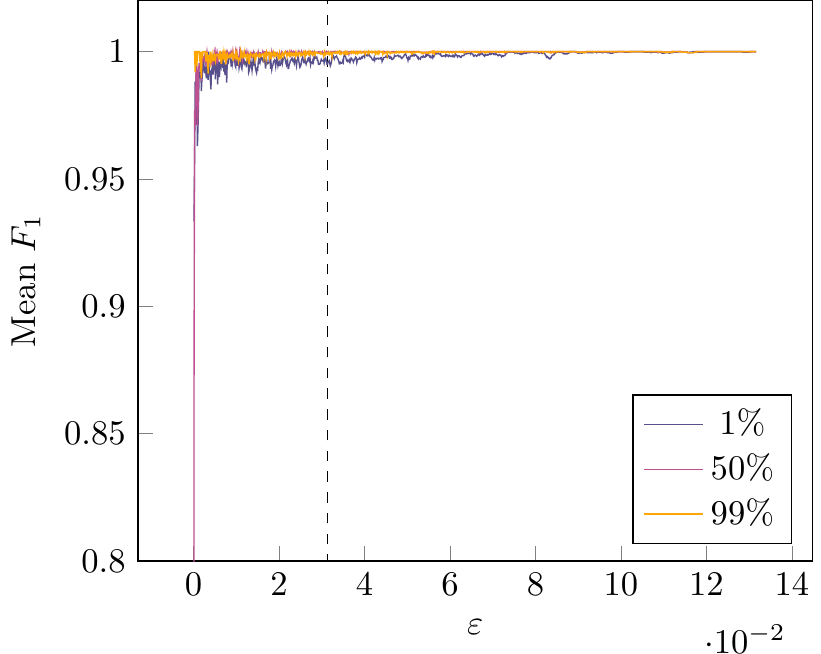}
    \end{adjustbox}
}
\hfill
\subfloat[CIFAR10 A ReLU Balanced]{
    \begin{adjustbox}{width=\calibrationFigureSize}
    \includegraphics{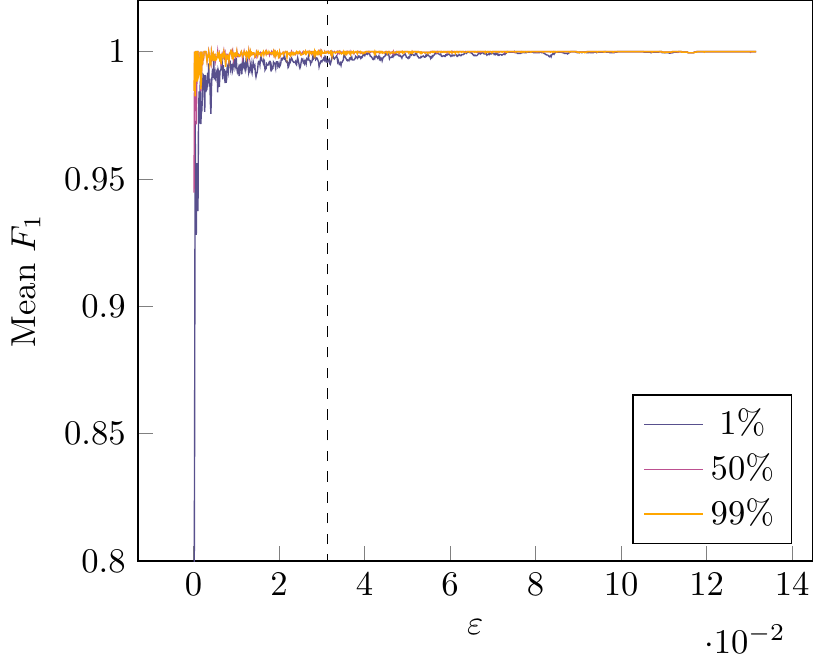}
    \end{adjustbox}
}

\caption{$F_1$ scores in relation to $\varepsilon$ for CIFAR10 A for each considered percentile. For ease of visualization, we set the graph cutoff at $F_1 = 0.8$. We also mark 8/255 (a common choice for $\varepsilon$) with a dotted line.} 
   
\label{fig:f1Cifar10A}
\end{figure}

\begin{figure}
\centering
\subfloat[CIFAR10 B Standard Strong]{
    \begin{adjustbox}{width=\calibrationFigureSize}
    \includegraphics{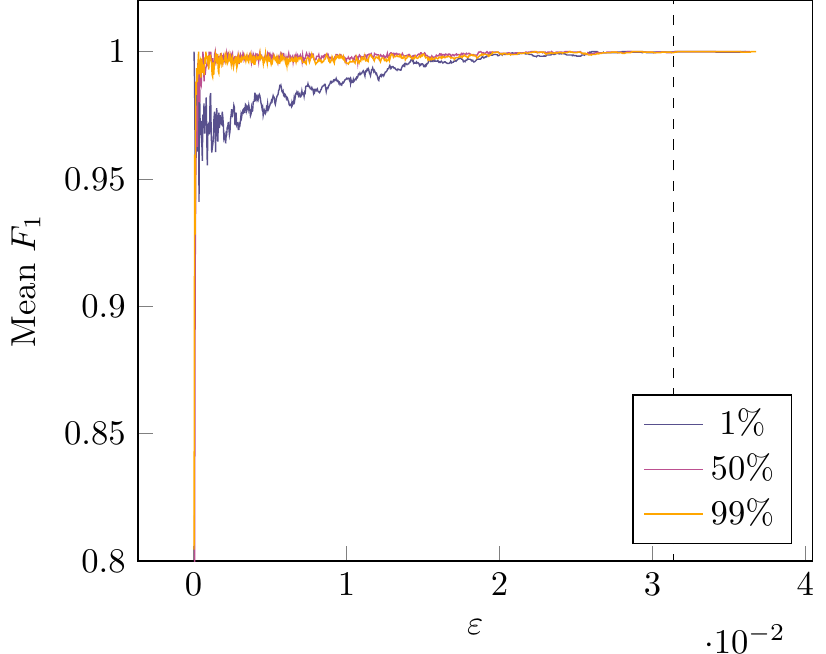}
    \end{adjustbox}
}
\hfill
\subfloat[CIFAR10 B Standard Balanced]{
    \begin{adjustbox}{width=\calibrationFigureSize}
    \includegraphics{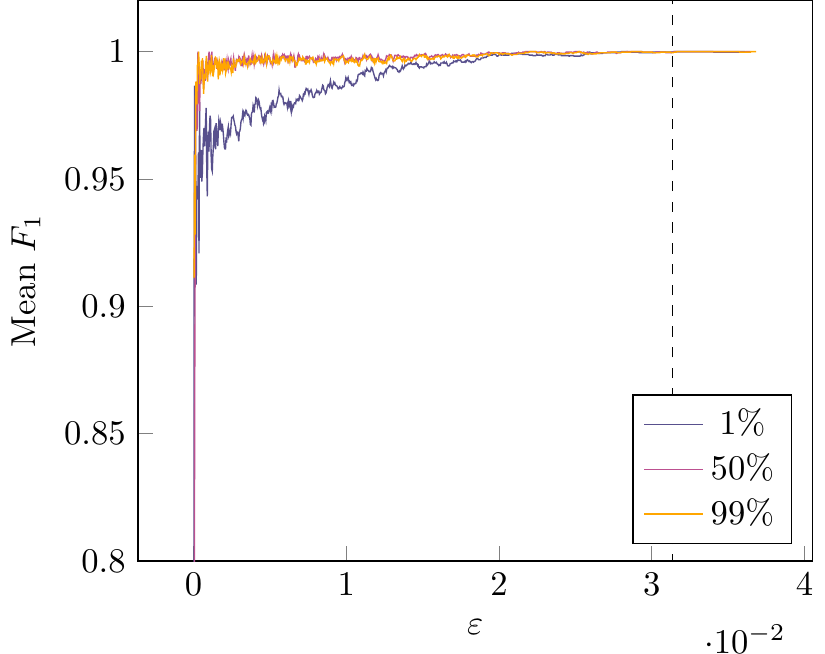}
    \end{adjustbox}
}
\hfill
\subfloat[CIFAR10 B Adversarial Strong]{
    \begin{adjustbox}{width=\calibrationFigureSize}
    \includegraphics{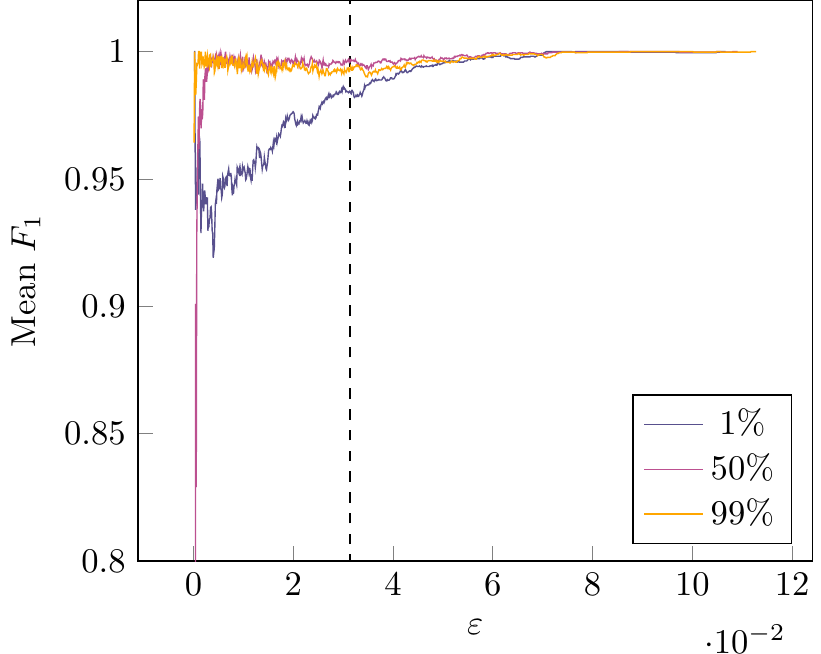}
    \end{adjustbox}
}

\subfloat[CIFAR10 B Adversarial Balanced]{
    \begin{adjustbox}{width=\calibrationFigureSize}
    \includegraphics{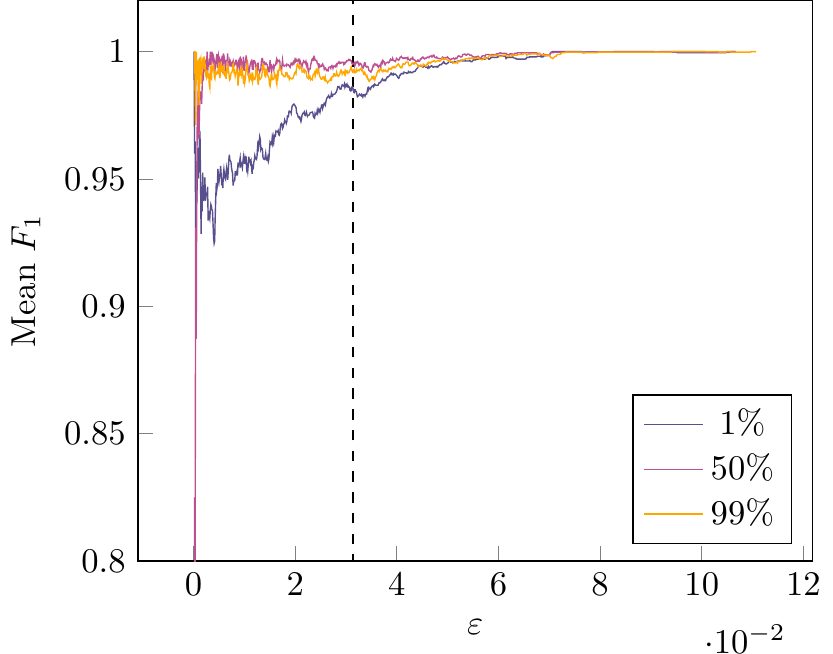}
    \end{adjustbox}
}
\hfill
\subfloat[CIFAR10 B ReLU Strong]{
    \begin{adjustbox}{width=\calibrationFigureSize}
    \includegraphics{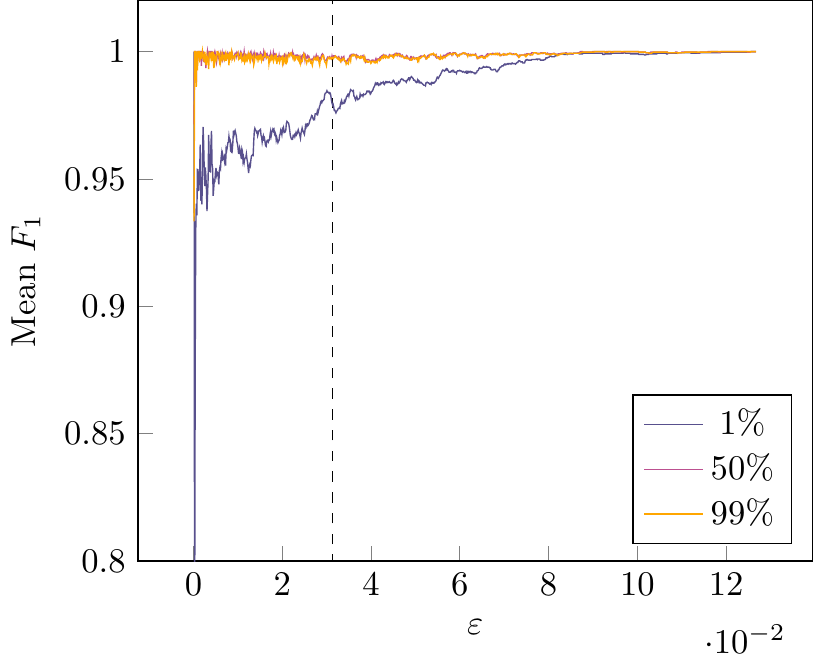}
    \end{adjustbox}
}
\hfill
\subfloat[CIFAR10 B ReLU Balanced]{
    \begin{adjustbox}{width=\calibrationFigureSize}
    \includegraphics{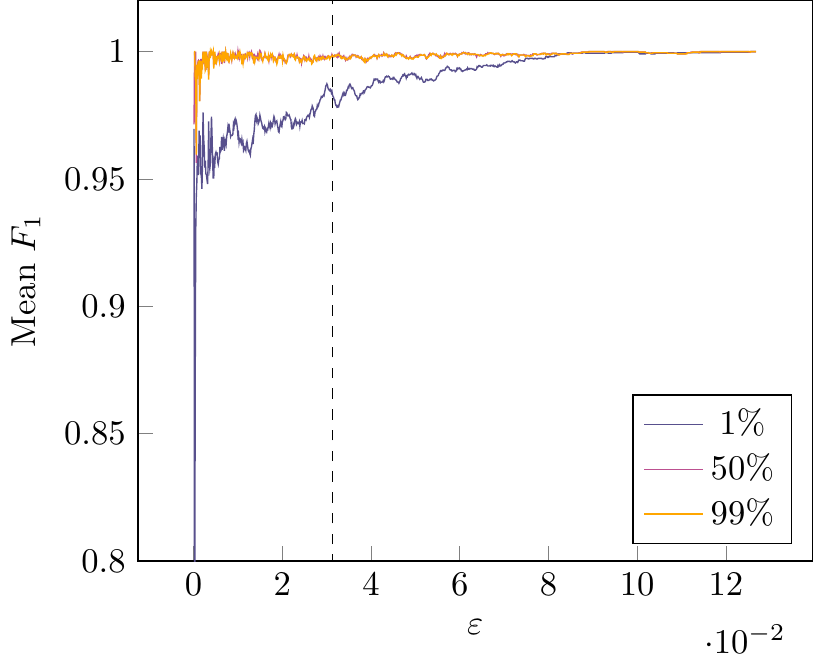}
    \end{adjustbox}
}

\caption{$F_1$ scores in relation to $\varepsilon$ for CIFAR10 B for each considered percentile. For ease of visualization, we set the graph cutoff at $F_1 = 0.8$. We also mark 8/255 (a common choice for $\varepsilon$) with a dotted line.} 
   
\label{fig:f1Cifar10B}
\end{figure}

\begin{figure}
\centering
\subfloat[CIFAR10 C Standard Strong]{
    \begin{adjustbox}{width=\calibrationFigureSize}
    \includegraphics{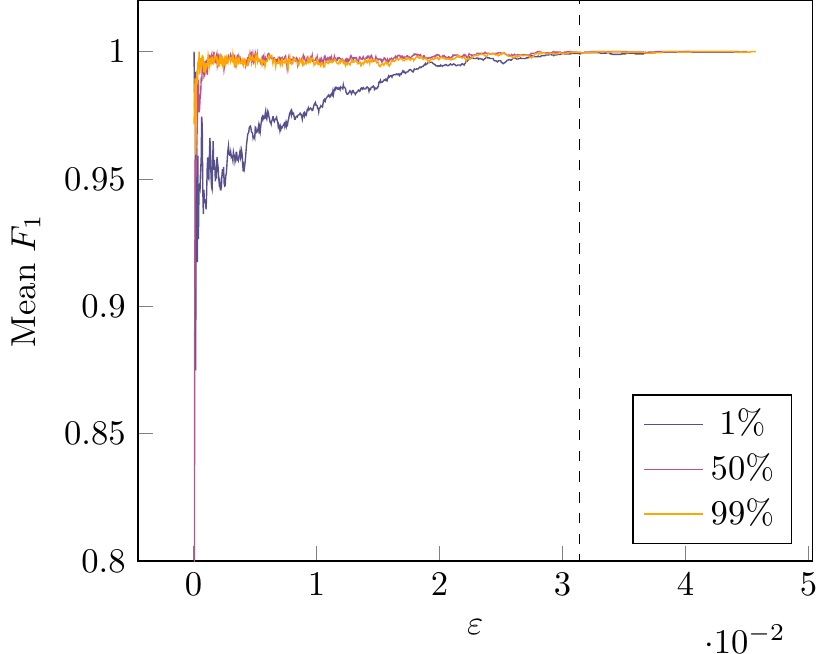}
    \end{adjustbox}
}
\hfill
\subfloat[CIFAR10 C Standard Balanced]{
    \begin{adjustbox}{width=\calibrationFigureSize}
    \includegraphics{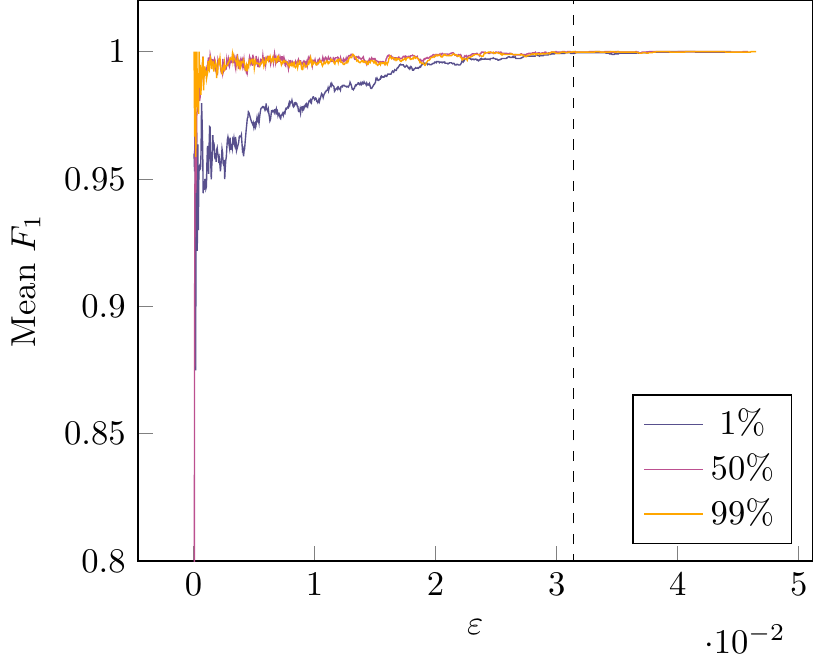}
    \end{adjustbox}
}
\hfill
\subfloat[CIFAR10 C Adversarial Strong]{
    \begin{adjustbox}{width=\calibrationFigureSize}
    \includegraphics{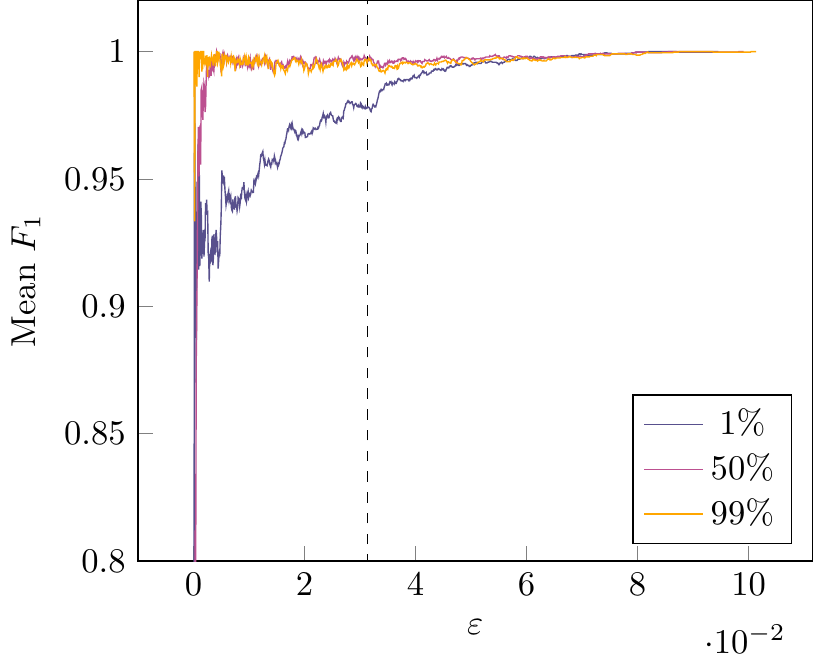}
    \end{adjustbox}
}

\subfloat[CIFAR10 C Adversarial Balanced]{
    \begin{adjustbox}{width=\calibrationFigureSize}
    \includegraphics{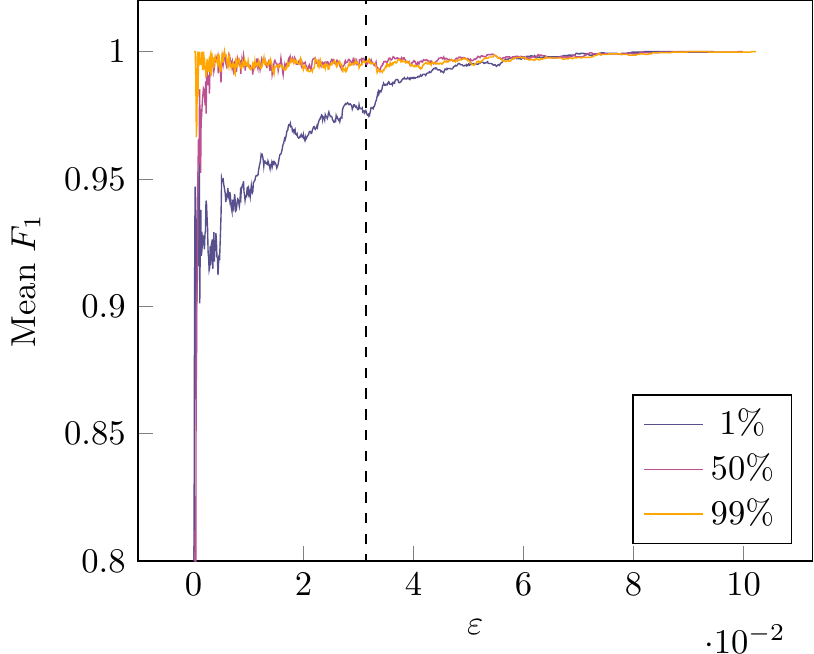}
    \end{adjustbox}
}
\hfill
\subfloat[CIFAR10 C ReLU Strong]{
    \begin{adjustbox}{width=\calibrationFigureSize}
    \includegraphics{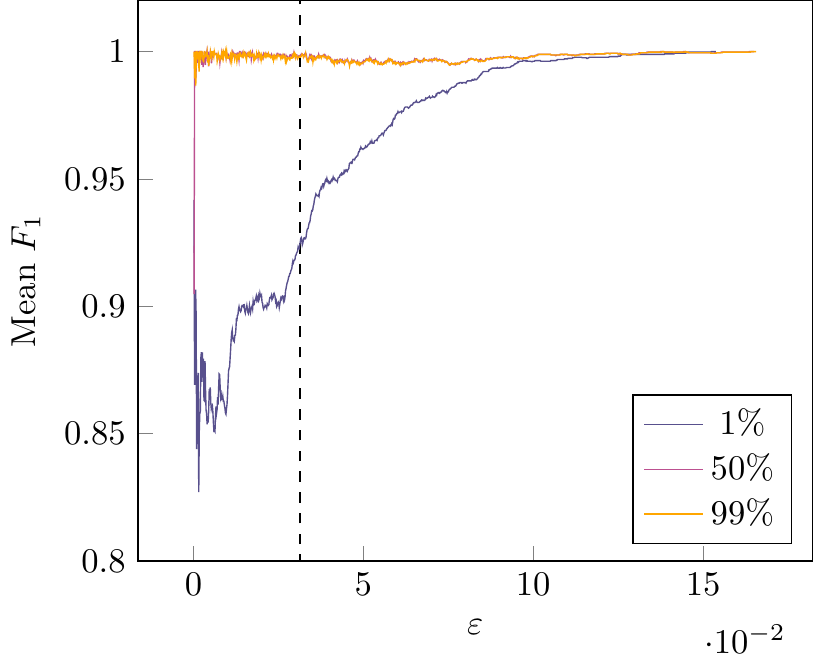}
    \end{adjustbox}
}
\hfill
\subfloat[CIFAR10 C ReLU Balanced]{
    \begin{adjustbox}{width=\calibrationFigureSize}
    \includegraphics{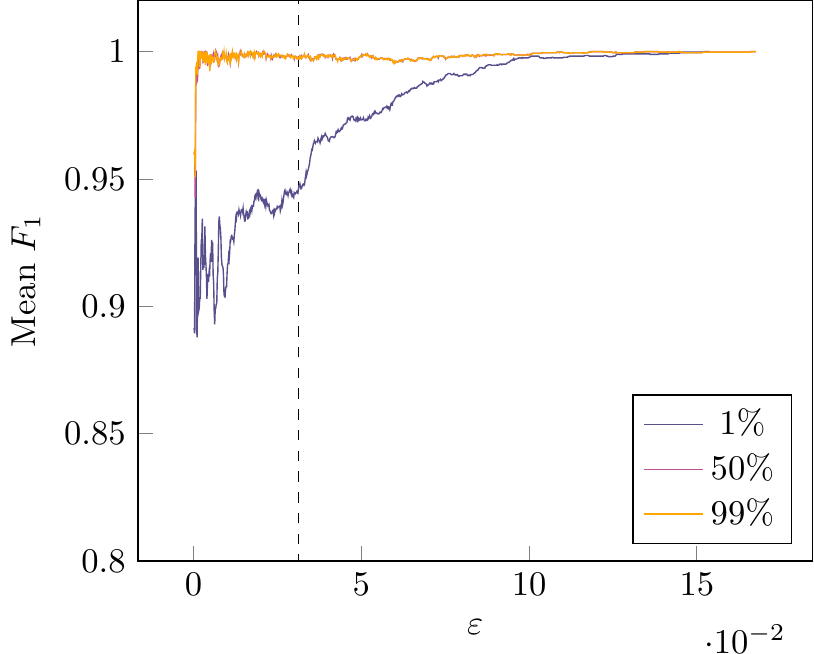}
    \end{adjustbox}
}

\caption{$F_1$ scores in relation to $\varepsilon$ for CIFAR10 C for each considered percentile. For ease of visualization, we set the graph cutoff at $F_1 = 0.8$. We also mark 8/255 (a common choice for $\varepsilon$) with a dotted line.} 
   
\label{fig:f1Cifar10C}
\end{figure}

\clearpage
\section{Additional Results}
\label{app:extraResults}

\Cref{tab:performanceMnistStrong,tab:performanceMnistBalanced,tab:performanceCifar10Strong,tab:performanceCifar10Balanced} detail the performance of the various attack sets on every combination, while \Cref{fig:scatterMnistA,fig:scatterMnistB,fig:scatterMnistC,fig:scatterCifar10A,fig:scatterCifar10B,fig:scatterCifar10C} showcase the relation between the true and estimated decision boundary distances.

\begin{table}[]
\caption{Performance of the strong attack set on MNIST.}
\centering
\begin{tabular}{llllll}
\midrule
\multicolumn{1}{c}{\textbf{Architecture}} & \multicolumn{1}{c}{\textbf{Training}} & \multicolumn{1}{c}{\textbf{Success Rate}} & \multicolumn{1}{c}{\textbf{Difference}} & \multicolumn{1}{c}{\textbf{\% Below 1/255}} & \multicolumn{1}{c}{$\mathbf{R^2}$} \\ \midrule
\multirow{3}{*}{MNIST A} & Standard & 100.00\% & 1.51\% & 98.16\% & 0.996 \\ 
 & Adversarial & 100.00\% & 2.48\% & 81.43\% & 0.994 \\ 
 & ReLU & 100.00\% & 2.14\% & 84.33\% & 0.995 \\ \midrule
\multirow{3}{*}{MNIST B} & Standard & 100.00\% & 3.38\% & 97.36\% & 0.995 \\ 
 & Adversarial & 100.00\% & 4.34\% & 75.09\% & 0.991 \\ 
 & ReLU & 100.00\% & 4.80\% & 68.02\% & 0.992 \\ \midrule
\multirow{3}{*}{MNIST C} & Standard & 100.00\% & 4.52\% & 96.92\% & 0.996 \\ 
 & Adversarial & 100.00\% & 8.76\% & 48.78\% & 0.981 \\ 
 & ReLU & 100.00\% & 4.84\% & 68.24\% & 0.988 \\
\bottomrule \end{tabular}
\label{tab:performanceMnistStrong}
\end{table}

\begin{table}[]
\caption{Performance of the balanced attack set on MNIST.}
\centering
\begin{tabular}{llllll}
\midrule
\multicolumn{1}{c}{\textbf{Architecture}} & \multicolumn{1}{c}{\textbf{Training}} & \multicolumn{1}{c}{\textbf{Success Rate}} & \multicolumn{1}{c}{\textbf{Difference}} & \multicolumn{1}{c}{\textbf{\% Below 1/255}} & \multicolumn{1}{c}{$\mathbf{R^2}$} \\ \midrule
\multirow{3}{*}{MNIST A} & Standard & 100.00\% & 1.68\% & 97.94\% & 0.995 \\ 
 & Adversarial & 100.00\% & 2.87\% & 77.64\% & 0.993 \\ 
 & ReLU & 100.00\% & 2.55\% & 80.86\% & 0.993 \\ \midrule
\multirow{3}{*}{MNIST B} & Standard & 100.00\% & 4.09\% & 96.55\% & 0.995 \\ 
 & Adversarial & 100.00\% & 4.90\% & 72.60\% & 0.988 \\ 
 & ReLU & 100.00\% & 5.53\% & 62.96\% & 0.989 \\ \midrule
\multirow{3}{*}{MNIST C} & Standard & 100.00\% & 5.43\% & 96.04\% & 0.995 \\ 
 & Adversarial & 100.00\% & 9.50\% & 48.43\% & 0.977 \\ 
 & ReLU & 100.00\% & 5.28\% & 66.96\% & 0.986 \\
\bottomrule \end{tabular}
\label{tab:performanceMnistBalanced}
\end{table}

\begin{table}[]
\caption{Performance of the strong attack set on CIFAR10.}
\centering
\begin{tabular}{llllll}
\midrule
\multicolumn{1}{c}{\textbf{Architecture}} & \multicolumn{1}{c}{\textbf{Training}} & \multicolumn{1}{c}{\textbf{Success Rate}} & \multicolumn{1}{c}{\textbf{Difference}} & \multicolumn{1}{c}{\textbf{\% Below 1/255}} & \multicolumn{1}{c}{$\mathbf{R^2}$} \\ \midrule
\multirow{3}{*}{CIFAR10 A} & Standard & 100.00\% & 1.62\% & 100.00\% & 0.999 \\ 
 & Adversarial & 100.00\% & 4.42\% & 95.88\% & 0.995 \\ 
 & ReLU & 100.00\% & 0.26\% & 100.00\% & 1.000 \\ \midrule
\multirow{3}{*}{CIFAR10 B} & Standard & 100.00\% & 1.44\% & 100.00\% & 0.999 \\ 
 & Adversarial & 100.00\% & 3.17\% & 97.69\% & 0.997 \\ 
 & ReLU & 100.00\% & 1.38\% & 98.81\% & 0.999 \\ \midrule
\multirow{3}{*}{CIFAR10 C} & Standard & 100.00\% & 2.11\% & 100.00\% & 0.999 \\ 
 & Adversarial & 100.00\% & 3.10\% & 97.14\% & 0.996 \\ 
 & ReLU & 100.00\% & 2.35\% & 96.12\% & 0.990 \\
\bottomrule \end{tabular}
\label{tab:performanceCifar10Strong}
\end{table}

\begin{table}[]
\caption{Performance of the balanced attack set on CIFAR10.}
\centering
\begin{tabular}{llllll}
\midrule
\multicolumn{1}{c}{\textbf{Architecture}} & \multicolumn{1}{c}{\textbf{Training}} & \multicolumn{1}{c}{\textbf{Success Rate}} & \multicolumn{1}{c}{\textbf{Difference}} & \multicolumn{1}{c}{\textbf{\% Below 1/255}} & \multicolumn{1}{c}{$\mathbf{R^2}$} \\ \midrule
\multirow{3}{*}{CIFAR10 A} & Standard & 100.00\% & 1.71\% & 100.00\% & 0.999 \\ 
 & Adversarial & 100.00\% & 4.18\% & 96.57\% & 0.995 \\ 
 & ReLU & 100.00\% & 0.18\% & 100.00\% & 1.000 \\ \midrule
\multirow{3}{*}{CIFAR10 B} & Standard & 100.00\% & 1.53\% & 100.00\% & 0.999 \\ 
 & Adversarial & 100.00\% & 2.92\% & 98.46\% & 0.996 \\ 
 & ReLU & 100.00\% & 1.19\% & 98.94\% & 0.999 \\ \midrule
\multirow{3}{*}{CIFAR10 C} & Standard & 100.00\% & 2.06\% & 100.00\% & 0.999 \\ 
 & Adversarial & 100.00\% & 3.12\% & 97.28\% & 0.996 \\ 
 & ReLU & 100.00\% & 1.45\% & 97.44\% & 0.995 \\
\bottomrule \end{tabular}
\label{tab:performanceCifar10Balanced}
\end{table}

\newlength{\resultFigureSize}
\setlength{\resultFigureSize}{0.275\textwidth}

\begin{figure}
\centering
\subfloat[MNIST A Standard Strong]{
    \begin{adjustbox}{width=\resultFigureSize}
    \includegraphics{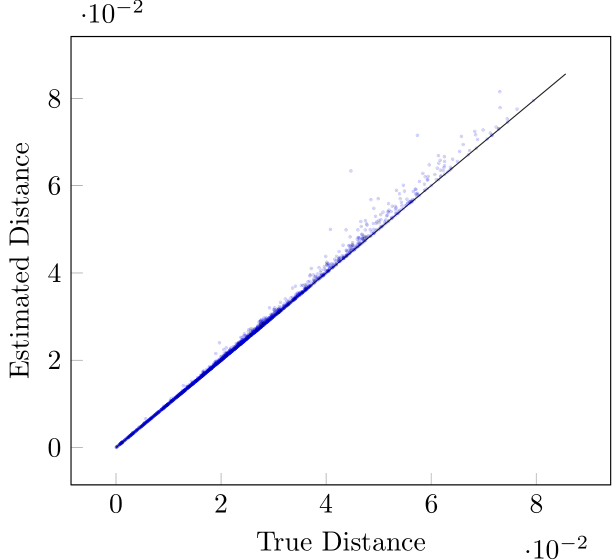}
    \end{adjustbox}
}
\hfill
\subfloat[MNIST A Standard Balanced]{
    \begin{adjustbox}{width=\resultFigureSize}
    \includegraphics{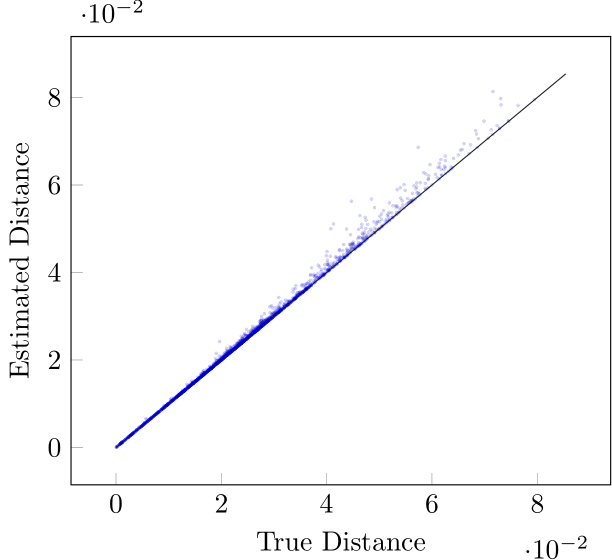}
    \end{adjustbox}
}
\hfill
\subfloat[MNIST A Adversarial Strong]{
    \begin{adjustbox}{width=\resultFigureSize}
    \includegraphics{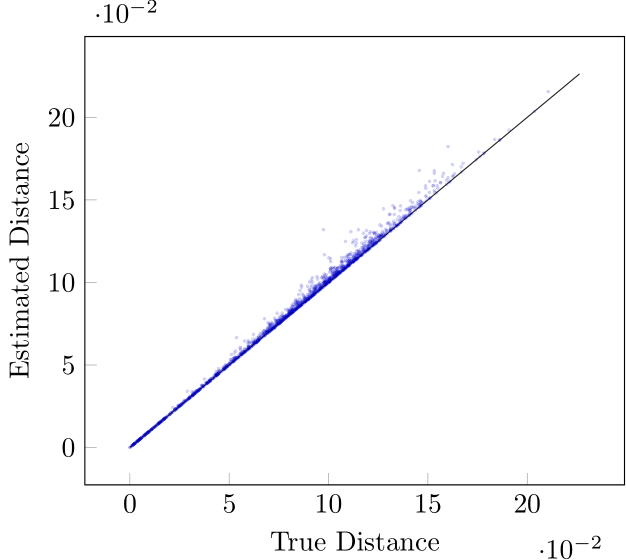}
    \end{adjustbox}
}

\subfloat[MNIST A Adversarial Balanced]{
    \begin{adjustbox}{width=\resultFigureSize}
    \includegraphics{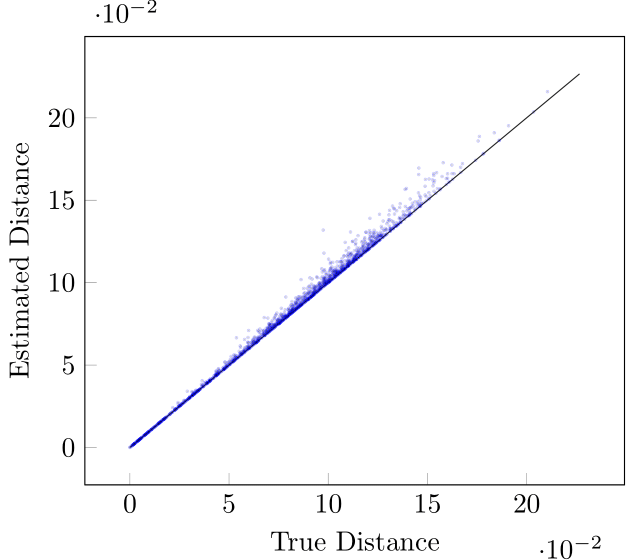}
    \end{adjustbox}
}
\hfill
\subfloat[MNIST A ReLU Strong]{
    \begin{adjustbox}{width=\resultFigureSize}
    \includegraphics{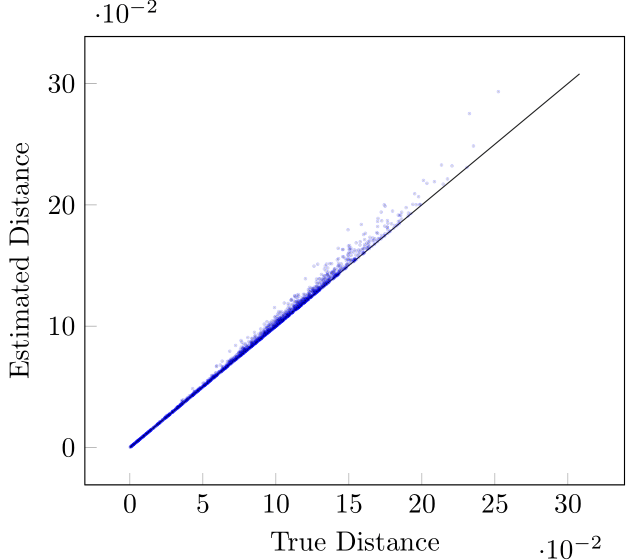}
    \end{adjustbox}
}
\hfill
\subfloat[MNIST A ReLU Balanced]{
    \begin{adjustbox}{width=\resultFigureSize}
    \includegraphics{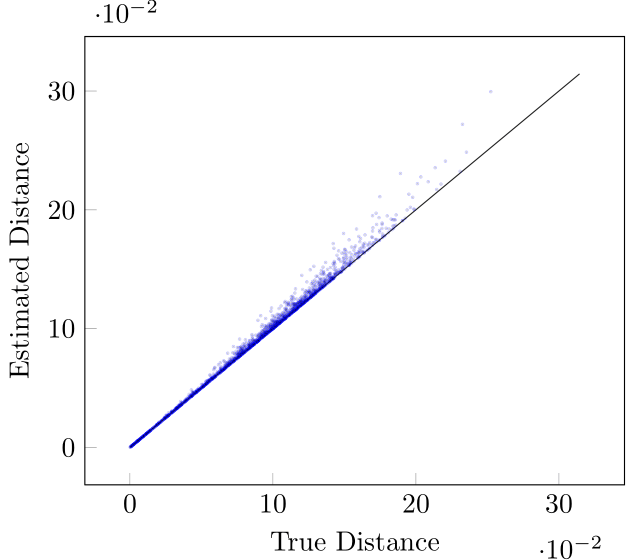}
    \end{adjustbox}
}

\caption{Decision boundary distances found by the attack pools compared to those found by MIPVerify on MNIST A. The black line represents the theoretical optimum. Note that no samples are below the black line.} 
   
\label{fig:scatterMnistA}
\end{figure}

\begin{figure}
\centering
\subfloat[MNIST B Standard Strong]{
    \begin{adjustbox}{width=\resultFigureSize}
    \includegraphics{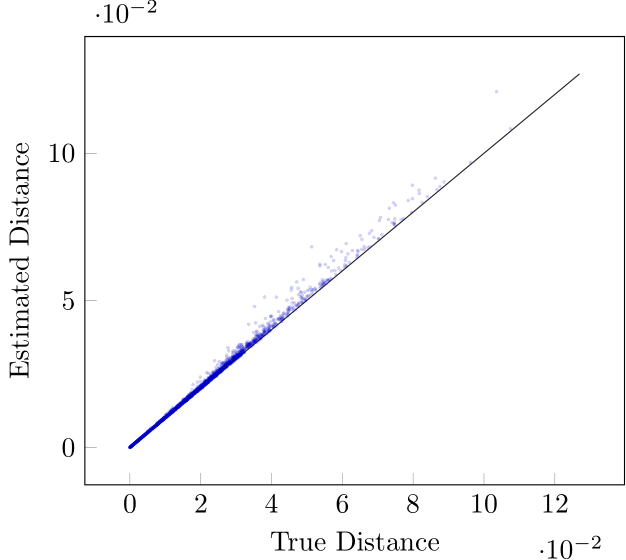}
    \end{adjustbox}
}
\hfill
\subfloat[MNIST B Standard Balanced]{
    \begin{adjustbox}{width=\resultFigureSize}
    \includegraphics{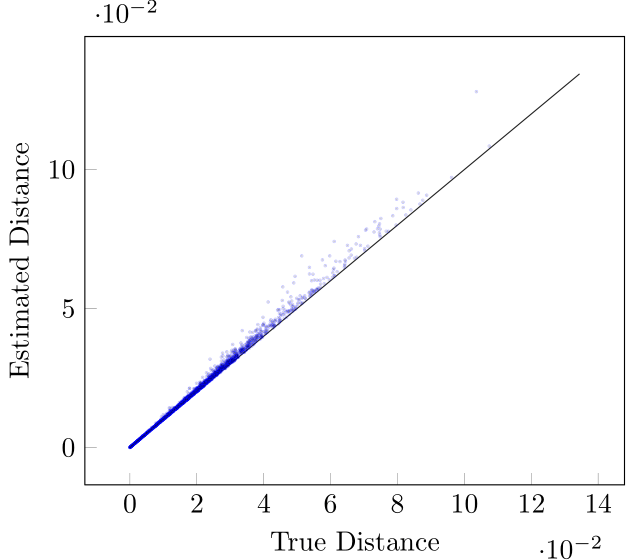}
    \end{adjustbox}
}
\hfill
\subfloat[MNIST B Adversarial Strong]{
    \begin{adjustbox}{width=\resultFigureSize}
    \includegraphics{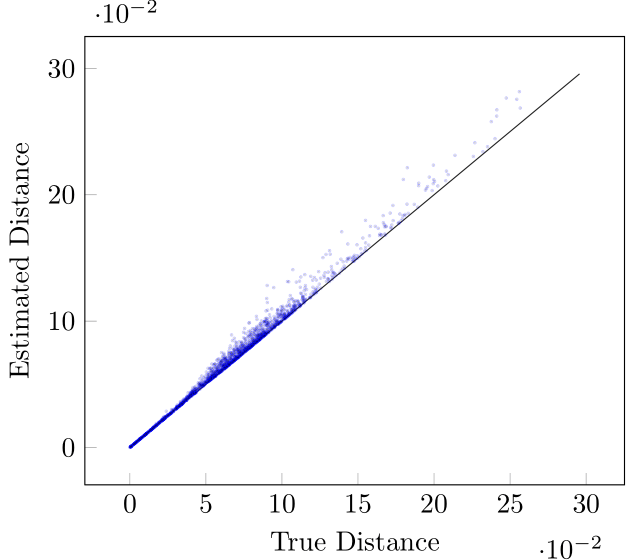}
    \end{adjustbox}
}

\subfloat[MNIST B Adversarial Balanced]{
    \begin{adjustbox}{width=\resultFigureSize}
    \includegraphics{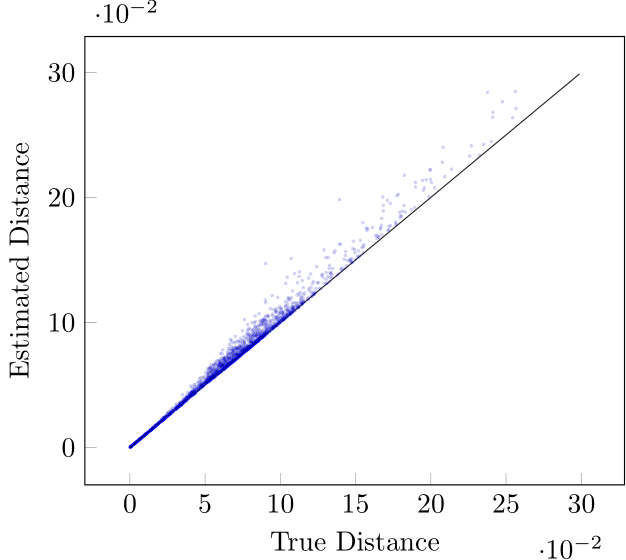}
    \end{adjustbox}
}
\hfill
\subfloat[MNIST B ReLU Strong]{
    \begin{adjustbox}{width=\resultFigureSize}
    \includegraphics{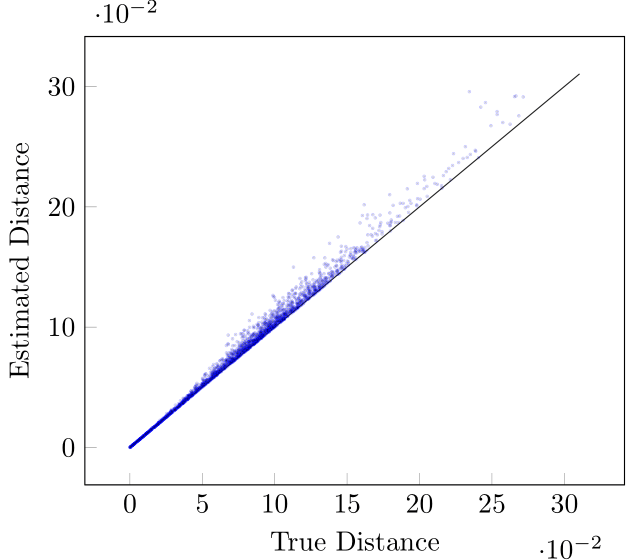}
    \end{adjustbox}
}
\hfill
\subfloat[MNIST B ReLU Balanced]{
    \begin{adjustbox}{width=\resultFigureSize}
    \includegraphics{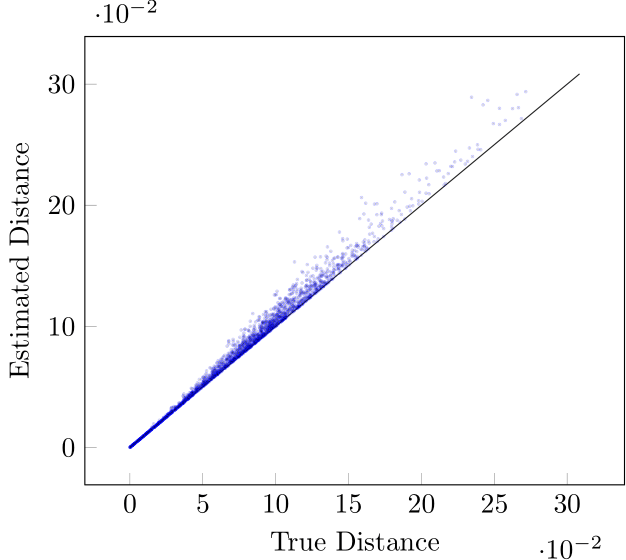}
    \end{adjustbox}
}

\caption{Decision boundary distances found by the attack pools compared to those found by MIPVerify on MNIST B. The black line represents the theoretical optimum. Note that no samples are below the black line.} 
   
\label{fig:scatterMnistB}
\end{figure}

\begin{figure}
\centering
\subfloat[MNIST C Standard Strong]{
    \begin{adjustbox}{width=\resultFigureSize}
    \includegraphics{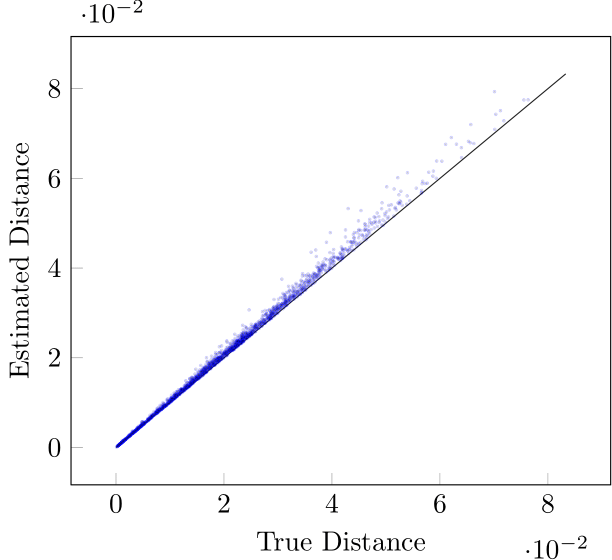}
    \end{adjustbox}
}
\hfill
\subfloat[MNIST C Standard Balanced]{
    \begin{adjustbox}{width=\resultFigureSize}
    \includegraphics{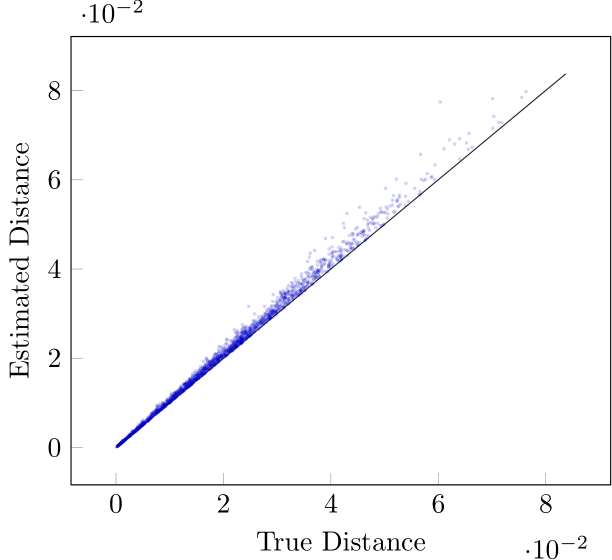}
    \end{adjustbox}
}
\hfill
\subfloat[MNIST C Adversarial Strong]{
    \begin{adjustbox}{width=\resultFigureSize}
    \includegraphics{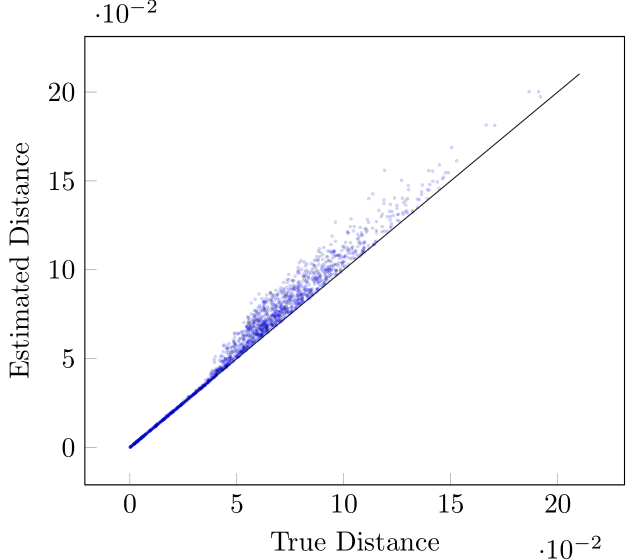}
    \end{adjustbox}
}

\subfloat[MNIST C Adversarial Balanced]{
    \begin{adjustbox}{width=\resultFigureSize}
    \includegraphics{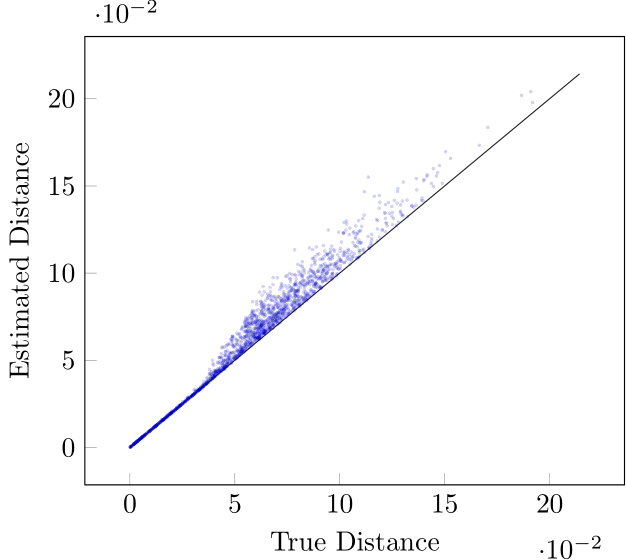}
    \end{adjustbox}
}
\hfill
\subfloat[MNIST C ReLU Strong]{
    \begin{adjustbox}{width=\resultFigureSize}
    \includegraphics{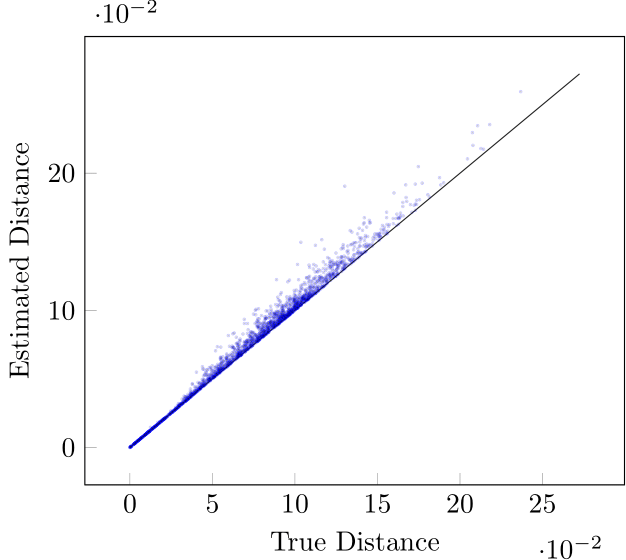}
    \end{adjustbox}
}
\hfill
\subfloat[MNIST C ReLU Balanced]{
    \begin{adjustbox}{width=\resultFigureSize}
    \includegraphics{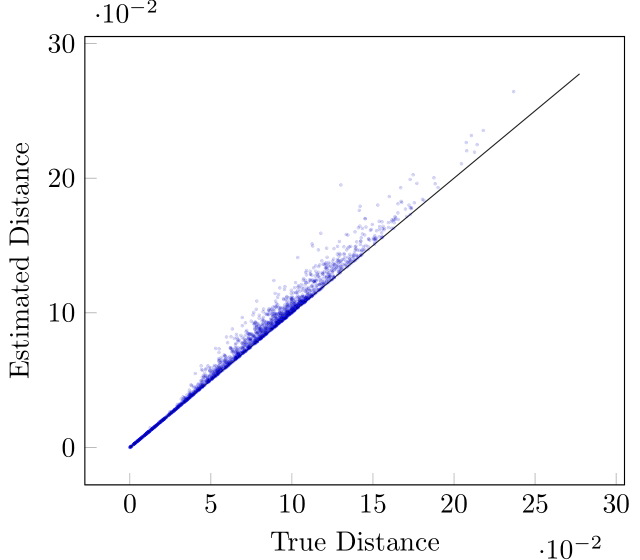}
    \end{adjustbox}
}

\caption{Decision boundary distances found by the attack pools compared to those found by MIPVerify on MNIST C. The black line represents the theoretical optimum. Note that no samples are below the black line.} 
   
\label{fig:scatterMnistC}
\end{figure}

\begin{figure}
\centering
\subfloat[CIFAR10 A Standard Strong]{
    \begin{adjustbox}{width=\resultFigureSize}
    \includegraphics{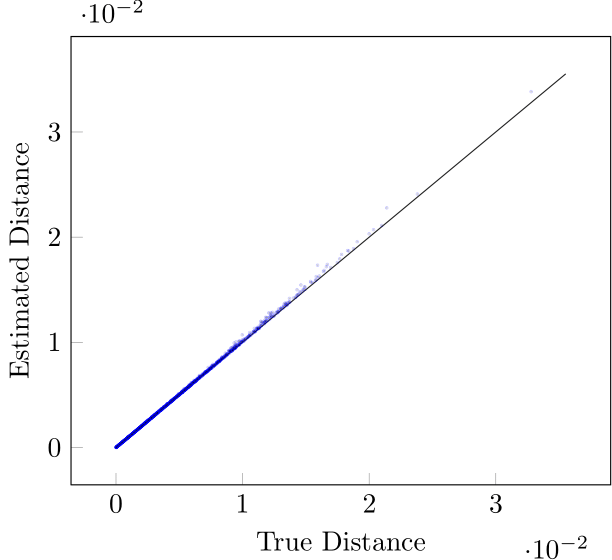}
    \end{adjustbox}
}
\hfill
\subfloat[CIFAR10 A Standard Balanced]{
    \begin{adjustbox}{width=\resultFigureSize}
    \includegraphics{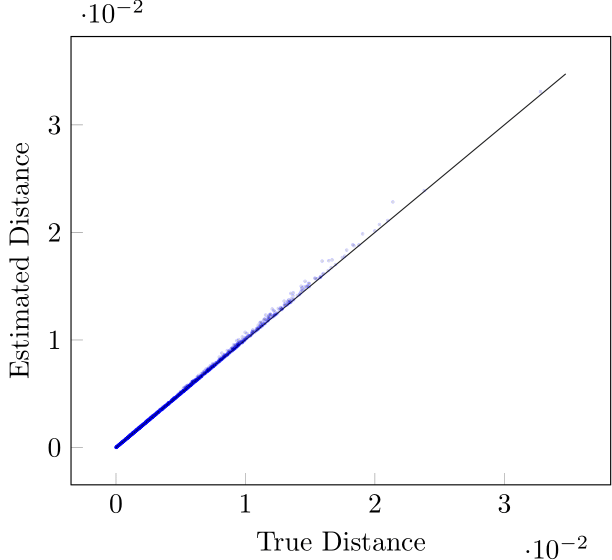}
    \end{adjustbox}
}
\hfill
\subfloat[CIFAR10 A Adversarial Strong]{
    \begin{adjustbox}{width=\resultFigureSize}
    \includegraphics{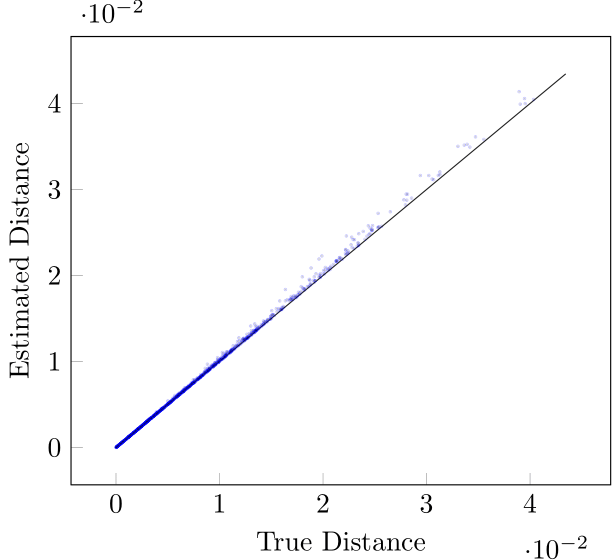}
    \end{adjustbox}
}

\subfloat[CIFAR10 A Adversarial Balanced]{
    \begin{adjustbox}{width=\resultFigureSize}
    \includegraphics{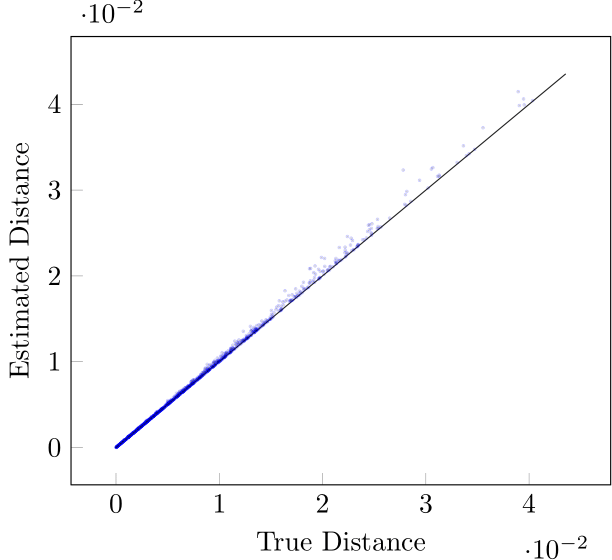}
    \end{adjustbox}
}
\hfill
\subfloat[CIFAR10 A ReLU Strong]{
    \begin{adjustbox}{width=\resultFigureSize}
    \includegraphics{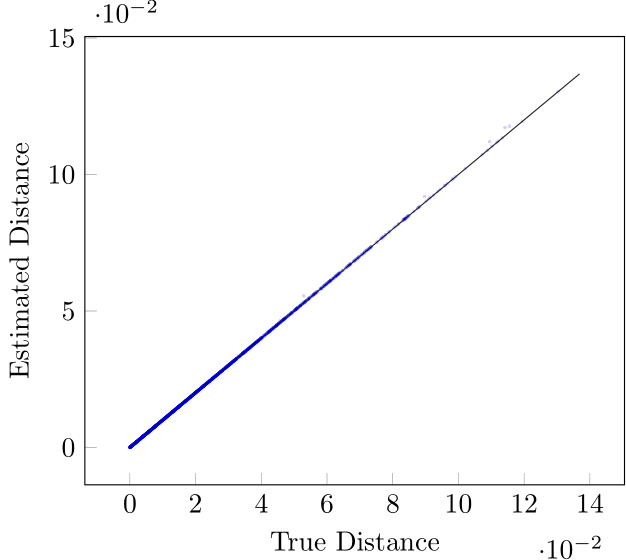}
    \end{adjustbox}
}
\hfill
\subfloat[CIFAR10 A ReLU Balanced]{
    \begin{adjustbox}{width=\resultFigureSize}
    \includegraphics{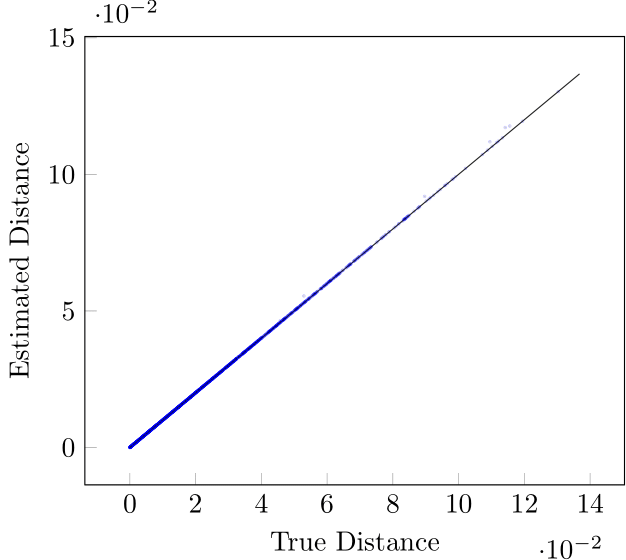}
    \end{adjustbox}
}

\caption{Decision boundary distances found by the attack pools compared to those found by MIPVerify on CIFAR10 A. The black line represents the theoretical optimum. Note that no samples are below the black line.} 
   
\label{fig:scatterCifar10A}
\end{figure}

\begin{figure}
\centering
\subfloat[CIFAR10 B Standard Strong]{
    \begin{adjustbox}{width=\resultFigureSize}
    \includegraphics{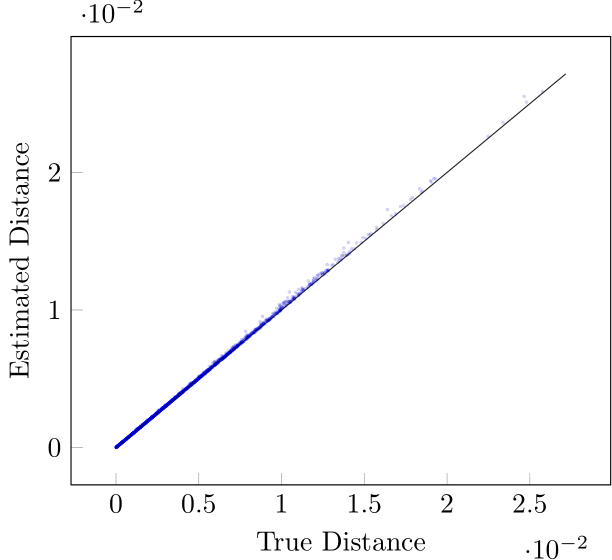}
    \end{adjustbox}
}
\hfill
\subfloat[CIFAR10 B Standard Balanced]{
    \begin{adjustbox}{width=\resultFigureSize}
    \includegraphics{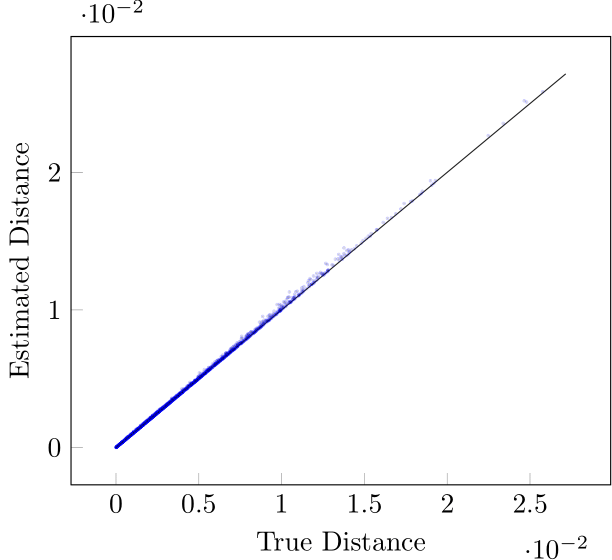}
    \end{adjustbox}
}
\hfill
\subfloat[CIFAR10 B Adversarial Strong]{
    \begin{adjustbox}{width=\resultFigureSize}
    \includegraphics{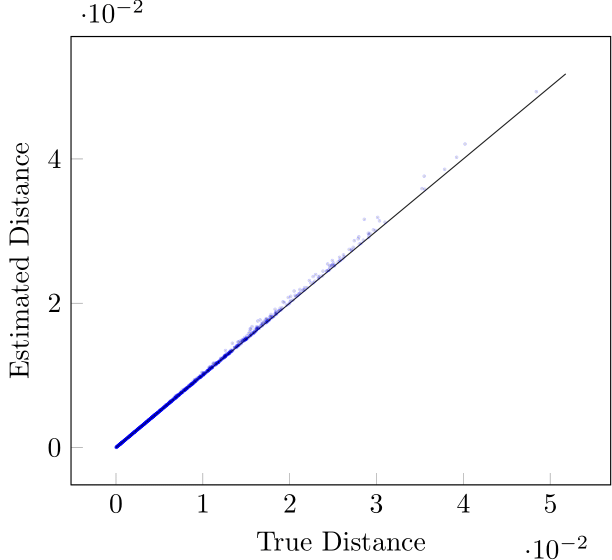}
    \end{adjustbox}
}

\subfloat[CIFAR10 B Adversarial Balanced]{
    \begin{adjustbox}{width=\resultFigureSize}
    \includegraphics{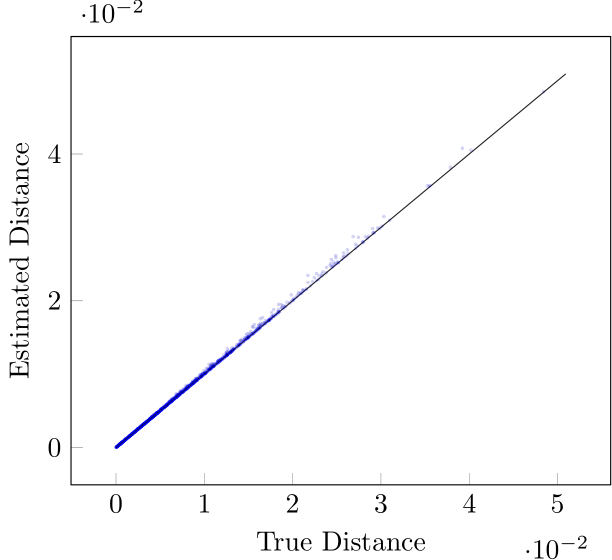}
    \end{adjustbox}
}
\hfill
\subfloat[CIFAR10 B ReLU Strong]{
    \begin{adjustbox}{width=\resultFigureSize}
    \includegraphics{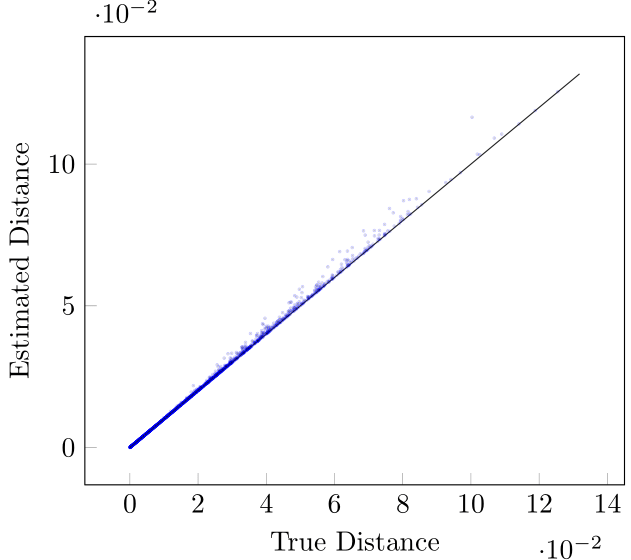}
    \end{adjustbox}
}
\hfill
\subfloat[CIFAR10 B ReLU Balanced]{
    \begin{adjustbox}{width=\resultFigureSize}
    \includegraphics{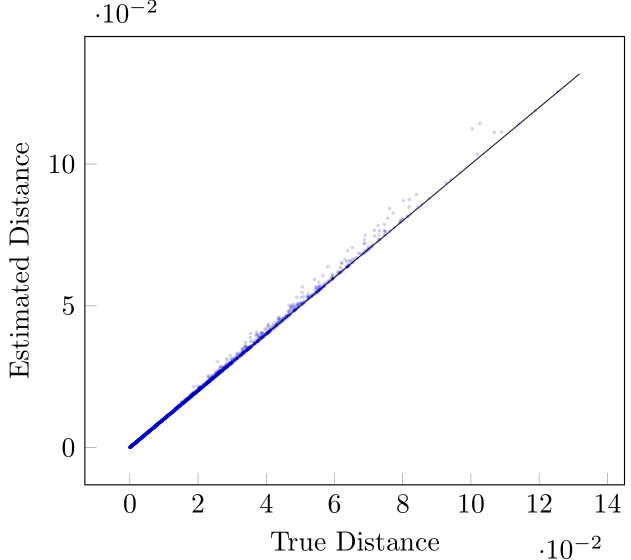}
    \end{adjustbox}
}

\caption{Decision boundary distances found by the attack pools compared to those found by MIPVerify on CIFAR10 B. The black line represents the theoretical optimum. Note that no samples are below the black line.} 
   
\label{fig:scatterCifar10B}
\end{figure}

\begin{figure}
\centering
\subfloat[CIFAR10 C Standard Strong]{
    \begin{adjustbox}{width=\resultFigureSize}
    \includegraphics{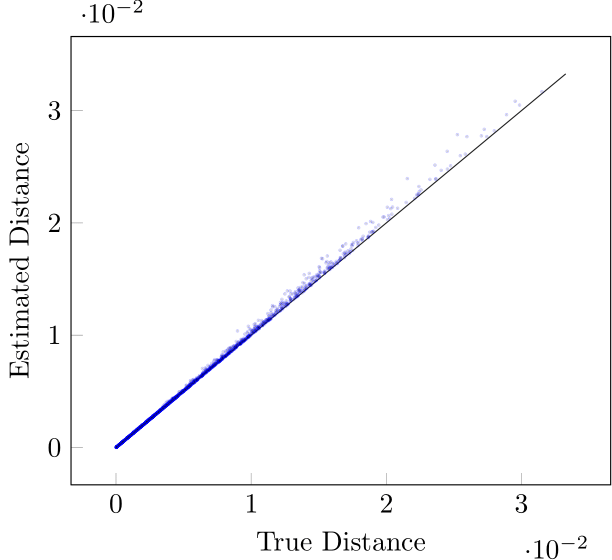}
    \end{adjustbox}
}
\hfill
\subfloat[CIFAR10 C Standard Balanced]{
    \begin{adjustbox}{width=\resultFigureSize}
    \includegraphics{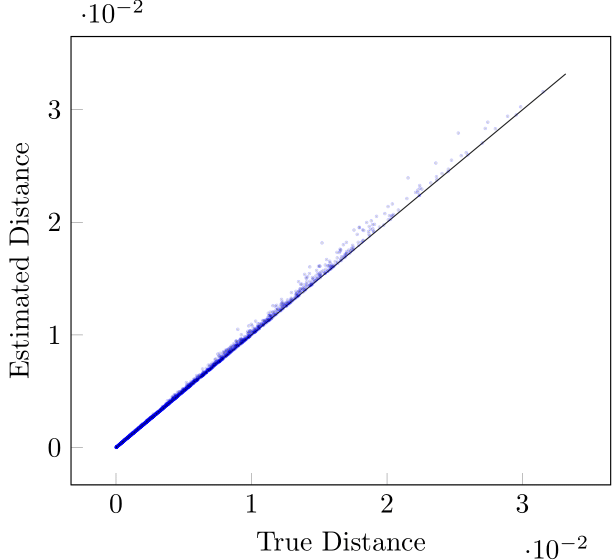}
    \end{adjustbox}
}
\hfill
\subfloat[CIFAR10 C Adversarial Strong]{
    \begin{adjustbox}{width=\resultFigureSize}
    \includegraphics{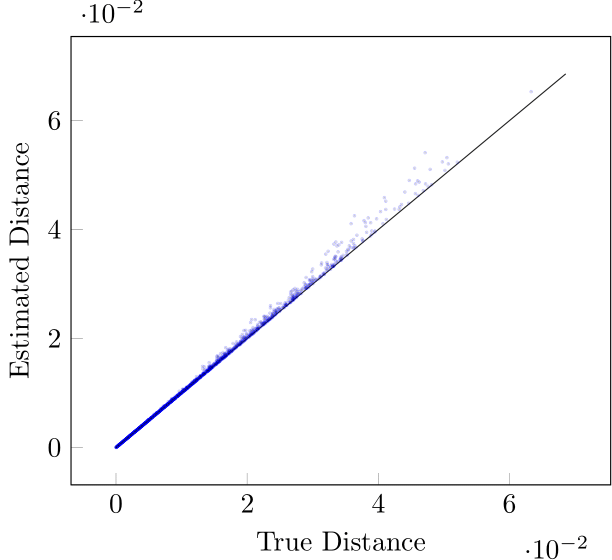}
    \end{adjustbox}
}

\subfloat[CIFAR10 C Adversarial Balanced]{
    \begin{adjustbox}{width=\resultFigureSize}
    \includegraphics{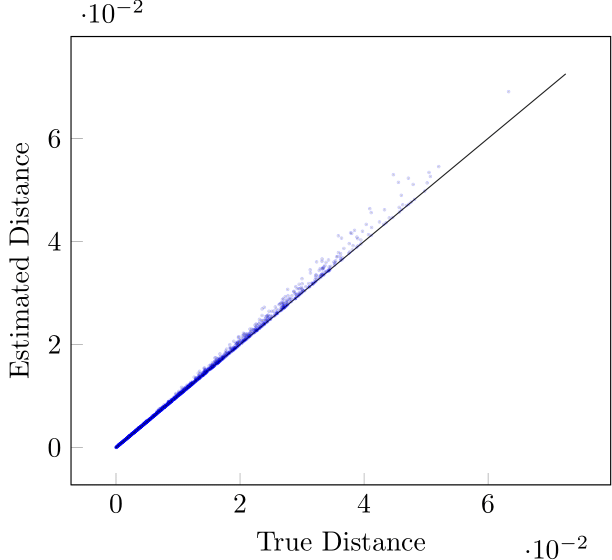}
    \end{adjustbox}
}
\hfill
\subfloat[CIFAR10 C ReLU Strong]{
    \begin{adjustbox}{width=\resultFigureSize}
    \includegraphics{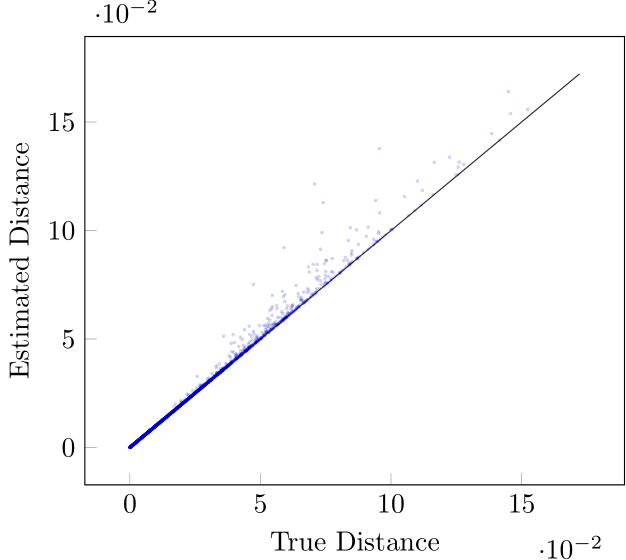}
    \end{adjustbox}
}
\hfill
\subfloat[CIFAR10 C ReLU Balanced]{
    \begin{adjustbox}{width=\resultFigureSize}
    \includegraphics{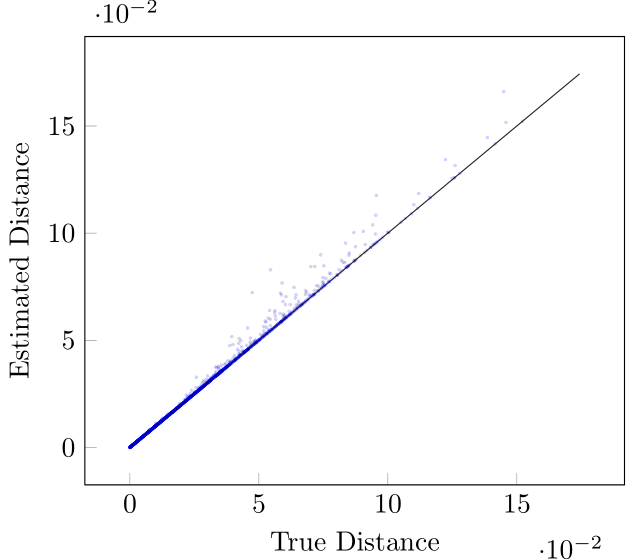}
    \end{adjustbox}
}

\caption{Decision boundary distances found by the attack pools compared to those found by MIPVerify on CIFAR10 C. The black line represents the theoretical optimum. Note that no samples are below the black line.} 
   
\label{fig:scatterCifar10C}
\end{figure}
\clearpage
\section{Ablation Study}
\label{app:ablation}

We outline the best attack pools by size in \Cref{tab:bestPoolsMnistStrong,tab:bestPoolsMnistBalanced,tab:bestPoolsCifar10Strong,tab:bestPoolsCifar10Balanced}. Additionally, we report the performance of pools composed of individual attacks in \Cref{tab:individualAttacksMnistStrong,tab:individualAttacksMnistBalanced,tab:individualAttacksCifar10Strong,tab:individualAttacksCifar10Balanced}. Finally, we detail the performance of dropping a specific attack in \Cref{tab:droppedAttacksMnistStrong,tab:droppedAttacksMnistBalanced,tab:droppedAttacksCifar10Strong,tab:droppedAttacksCifar10Balanced}.

\begin{table}[]
\caption{Best pools of a given size by success rate and $R^2$ for MNIST strong.}
\centering
\begin{tabular}{llllll}
\toprule
$\bm{n}$ & \textbf{Attacks}          & \textbf{Success Rate} & \textbf{Difference} & \textbf{$\mathbf{<}$ 1/255} & $\mathbf{R^2}$  \\ \midrule
1             & PGD                       & 100.00\textpm0.00\%   & 10.98\textpm4.41\%  & 51.83\textpm27.78\%     & 0.975\textpm0.010 \\
2             & C\&W, PGD                 & 100.00\textpm0.00\%   & 7.99\textpm3.31\%   & 60.68\textpm25.43\%     & 0.986\textpm0.005 \\
3             & B\&B, C\&W, PGD           & 100.00\textpm0.00\%   & 4.71\textpm1.97\%   & 77.97\textpm15.52\%     & 0.989\textpm0.004 \\
4             & B\&B, C\&W, DF, PGD & 100.00\textpm0.00\%   & 4.36\textpm2.03\%   & 79.02\textpm15.62\%     & 0.991\textpm0.005 \\
5             & No FGSM, Uniform  & 100.00\textpm0.00\%   & 4.09\textpm2.02\%   & 79.81\textpm15.70\%     & 0.992\textpm0.005 \\
6             & No Uniform        & 100.00\textpm0.00\%   & 4.09\textpm2.02\%   & 79.81\textpm15.70\%     & 0.992\textpm0.005 \\
7             & All                       & 100.00\textpm0.00\%   & 4.09\textpm2.02\%   & 79.81\textpm15.70\%     & 0.992\textpm0.005 \\ \bottomrule
\end{tabular}
\label{tab:bestPoolsMnistStrong}
\end{table}

\begin{table}[]
\caption{Best pools of a given size by success rate and $R^2$ for MNIST balanced.}
\centering
\begin{tabular}{llllll}
\toprule
$\bm{n}$ & \textbf{Attacks} & \textbf{Success Rate} & \textbf{Difference} & \textbf{$\mathbf{<}$ 1/255} & $\mathbf{R^2}$ \\ \midrule
1 & BIM & 100.00\textpm0.00\% & 11.72\textpm4.18\% & 50.92\textpm26.43\% & 0.965\textpm0.010 \\
2 & BIM, B\&B & 100.00\textpm0.00\% & 6.11\textpm2.28\% & 73.23\textpm15.90\% & 0.980\textpm0.007 \\
3 & BIM, B\&B, C\&W & 100.00\textpm0.00\% & 5.29\textpm2.06\% & 75.72\textpm16.10\% & 0.986\textpm0.005 \\
4 & BIM, B\&B, C\&W, DF & 100.00\textpm0.00\% & 4.85\textpm2.10\% & 77.33\textpm15.85\% & 0.989\textpm0.005 \\
5 & No FGSM, Uniform & 100.00\textpm0.00\% & 4.65\textpm2.16\% & 77.78\textpm16.08\% & 0.990\textpm0.006 \\
6 & No Uniform & 100.00\textpm0.00\% & 4.65\textpm2.16\% & 77.78\textpm16.08\% & 0.990\textpm0.006 \\
7 & All & 100.00\textpm0.00\% & 4.65\textpm2.16\% & 77.78\textpm16.08\% & 0.990\textpm0.006 \\ \bottomrule
\end{tabular}
\label{tab:bestPoolsMnistBalanced}
\end{table}

\begin{table}[]
\caption{Best pools of a given size by success rate and $R^2$ for CIFAR10 strong.}
\centering
\begin{tabular}{llllll}
\toprule
$\bm{n}$ & \textbf{Attacks} & \textbf{Success Rate} & \textbf{Difference} & \textbf{$\mathbf{<}$ 1/255} & $\mathbf{R^2}$ \\ \midrule
1 & DF & 100.00\textpm0.00\% & 6.11\textpm3.49\% & 95.06\textpm4.81\% & 0.989\textpm0.011 \\
2 & DF, PGD & 100.00\textpm0.00\% & 4.71\textpm2.37\% & 96.32\textpm3.56\% & 0.995\textpm0.007 \\
3 & C\&W, DF, PGD & 100.00\textpm0.00\% & 2.54\textpm1.30\% & 98.17\textpm2.00\% & 0.996\textpm0.006 \\
4 & B\&B, C\&W, DF, PGD & 100.00\textpm0.00\% & 2.21\textpm1.16\% & 98.40\textpm1.63\% & 0.997\textpm0.003 \\
5 & No FGSM, Uniform & 100.00\textpm0.00\% & 2.21\textpm1.16\% & 98.40\textpm1.63\% & 0.997\textpm0.003 \\
6 & No Uniform & 100.00\textpm0.00\% & 2.21\textpm1.16\% & 98.40\textpm1.63\% & 0.997\textpm0.003 \\
7 & All & 100.00\textpm0.00\% & 2.21\textpm1.16\% & 98.40\textpm1.63\% & 0.997\textpm0.003 \\ \bottomrule
\end{tabular}
\label{tab:bestPoolsCifar10Strong}
\end{table}

\begin{table}[]
\caption{Best pools of a given size by success rate and $R^2$ for CIFAR10 balanced.}
\centering
\begin{tabular}{llllll}
\toprule
$\bm{n}$ & \textbf{Attacks} & \textbf{Success Rate} & \textbf{Difference} & \textbf{$\mathbf{<}$ 1/255} & $\mathbf{R^2}$ \\ \midrule
1 & DF & 100.00\textpm0.00\% & 6.11\textpm3.49\% & 95.06\textpm4.81\% & 0.989\textpm0.011 \\
2 & B\&B, DF & 100.00\textpm0.00\% & 2.52\textpm1.51\% & 98.23\textpm1.81\% & 0.995\textpm0.004 \\
3 & BIM, B\&B, DF & 100.00\textpm0.00\% & 2.21\textpm1.25\% & 98.53\textpm1.52\% & 0.997\textpm0.002 \\
4 & BIM, B\&B, C\&W, DF & 100.00\textpm0.00\% & 2.06\textpm1.16\% & 98.73\textpm1.32\% & 0.998\textpm0.002 \\
5 & No FGSM, Uniform & 100.00\textpm0.00\% & 2.04\textpm1.13\% & 98.74\textpm1.29\% & 0.998\textpm0.002 \\
6 & No FGSM & 100.00\textpm0.00\% & 2.04\textpm1.13\% & 98.74\textpm1.29\% & 0.998\textpm0.002 \\
7 & All & 100.00\textpm0.00\% & 2.04\textpm1.13\% & 98.74\textpm1.29\% & 0.998\textpm0.002 \\ \bottomrule
\end{tabular}

\label{tab:bestPoolsCifar10Balanced}
\end{table}

\begin{table}[]
\caption{Performance of individual attacks for MNIST strong.}
\centering
\begin{tabular}{lllll}
\toprule
\textbf{Attack} & \textbf{Success Rate} & \textbf{Difference} & \textbf{$\mathbf{<}$ 1/255} & $\mathbf{R^2}$ \\ \midrule
BIM & 100.00\textpm0.00\% & 10.90\textpm4.42\% & 53.57\textpm28.07\% & 0.966\textpm0.012 \\
B\&B & 99.99\textpm0.01\% & 18.50\textpm7.09\% & 58.78\textpm9.91\% & 0.812\textpm0.044 \\
C\&W & 100.00\textpm0.00\% & 17.52\textpm2.74\% & 48.02\textpm21.28\% & 0.910\textpm0.024 \\
Deepfool & 100.00\textpm0.00\% & 21.59\textpm7.73\% & 44.15\textpm20.02\% & 0.923\textpm0.027 \\
FGSM & 99.72\textpm0.51\% & 44.43\textpm15.76\% & 28.20\textpm17.30\% & 0.761\textpm0.132 \\
PGD & 100.00\textpm0.00\% & 10.98\textpm4.41\% & 51.83\textpm27.78\% & 0.975\textpm0.010 \\
Uniform & 99.52\textpm0.91\% & 414.47\textpm140.54\% & 0.82\textpm0.55\% & 0.623\textpm0.138 \\ \bottomrule
\end{tabular}
\label{tab:individualAttacksMnistStrong}
\end{table}

\begin{table}[]
\caption{Performance of individual attacks for MNIST balanced.}
\centering
\begin{tabular}{lllll}
\toprule
\textbf{Attack} & \textbf{Success Rate} & \textbf{Difference} & \textbf{$\mathbf{<}$ 1/255} & $\mathbf{R^2}$ \\ \midrule
BIM & 100.00\textpm0.00\% & 11.72\textpm4.18\% & 50.92\textpm26.43\% & 0.965\textpm0.010 \\
B\&B & 99.99\textpm0.03\% & 18.65\textpm7.29\% & 58.43\textpm9.61\% & 0.812\textpm0.039 \\
C\&W & 100.00\textpm0.00\% & 22.55\textpm3.83\% & 38.95\textpm22.49\% & 0.904\textpm0.025 \\
Deepfool & 100.00\textpm0.00\% & 21.59\textpm7.73\% & 44.15\textpm20.02\% & 0.923\textpm0.027 \\
FGSM & 99.72\textpm0.51\% & 44.43\textpm15.76\% & 28.20\textpm17.30\% & 0.761\textpm0.132 \\
PGD & 100.00\textpm0.00\% & 16.23\textpm6.59\% & 48.08\textpm28.88\% & 0.905\textpm0.070 \\
Uniform & 98.66\textpm1.90\% & 521.61\textpm181.40\% & 0.57\textpm0.38\% & 0.484\textpm0.122 \\ \bottomrule
\end{tabular}
\label{tab:individualAttacksMnistBalanced}
\end{table}

\begin{table}[]
\caption{Performance of individual attacks for CIFAR10 strong.}
\centering
\begin{tabular}{lllll}
\toprule
\textbf{Attack} & \textbf{Success Rate} & \textbf{Difference} & \textbf{$\mathbf{<}$ 1/255} & $\mathbf{R^2}$ \\ \midrule
BIM & 91.96\textpm7.40\% & 19.97\textpm5.95\% & 80.32\textpm12.97\% & 0.934\textpm0.041 \\
B\&B & 100.00\textpm0.00\% & 508.66\textpm196.37\% & 42.74\textpm7.85\% & 0.174\textpm0.074 \\
C\&W & 99.98\textpm0.02\% & 10.67\textpm3.64\% & 90.09\textpm5.51\% & 0.926\textpm0.030 \\
Deepfool & 100.00\textpm0.00\% & 6.11\textpm3.49\% & 95.06\textpm4.81\% & 0.989\textpm0.011 \\
FGSM & 100.00\textpm0.00\% & 31.80\textpm11.12\% & 69.20\textpm17.72\% & 0.847\textpm0.123 \\
PGD & 100.00\textpm0.00\% & 19.36\textpm5.99\% & 77.23\textpm15.89\% & 0.952\textpm0.027 \\
Uniform & 99.99\textpm0.02\% & 1206.79\textpm277.68\% & 2.48\textpm0.88\% & 0.910\textpm0.044 \\ \bottomrule
\end{tabular}
\label{tab:individualAttacksCifar10Strong}
\end{table}

\begin{table}[]
\caption{Performance of individual attacks for CIFAR10 balanced.}
\centering
\begin{tabular}{lllll}
\toprule
\textbf{Attack} & \textbf{Success Rate} & \textbf{Difference} & \textbf{$\mathbf{<}$ 1/255} & $\mathbf{R^2}$ \\ \midrule
BIM & 100.00\textpm0.00\% & 19.23\textpm5.92\% & 77.33\textpm15.89\% & 0.954\textpm0.025 \\
B\&B & 100.00\textpm0.00\% & 50.64\textpm52.17\% & 81.20\textpm10.68\% & 0.615\textpm0.349 \\
C\&W & 99.89\textpm0.09\% & 17.44\textpm4.01\% & 84.82\textpm8.51\% & 0.923\textpm0.026 \\
Deepfool & 100.00\textpm0.00\% & 6.11\textpm3.49\% & 95.06\textpm4.81\% & 0.989\textpm0.011 \\
FGSM & 100.00\textpm0.00\% & 31.80\textpm11.12\% & 69.20\textpm17.72\% & 0.847\textpm0.123 \\
PGD & 100.00\textpm0.00\% & 20.18\textpm6.56\% & 76.97\textpm16.07\% & 0.947\textpm0.031 \\
Uniform & 99.85\textpm0.26\% & 1617.74\textpm390.50\% & 1.80\textpm0.67\% & 0.853\textpm0.068 \\ \bottomrule
\end{tabular}
\label{tab:individualAttacksCifar10Balanced}
\end{table}

\begin{table}[]
\caption{Performance of pools without a specific attack for MNIST strong.}
\centering
\begin{tabular}{lllll}
\toprule
\textbf{Dropped Attack} & \textbf{Success Rate} & \textbf{Difference} & \textbf{$\mathbf{<}$ 1/255} & $\mathbf{R^2}$ \\ \midrule
None & 100.00\textpm0.00\% & 4.09\textpm2.02\% & 79.81\textpm15.70\% & 0.992\textpm0.005 \\
BIM & 100.00\textpm0.00\% & 4.35\textpm2.03\% & 79.02\textpm15.62\% & 0.991\textpm0.005 \\
B\&B & 100.00\textpm0.00\% & 6.76\textpm3.31\% & 64.46\textpm25.01\% & 0.990\textpm0.005 \\
C\&W & 100.00\textpm0.00\% & 4.65\textpm2.20\% & 77.70\textpm16.02\% & 0.989\textpm0.006 \\
Deepfool & 100.00\textpm0.00\% & 4.33\textpm1.97\% & 79.04\textpm15.75\% & 0.990\textpm0.004 \\
FGSM & 100.00\textpm0.00\% & 4.09\textpm2.02\% & 79.81\textpm15.70\% & 0.992\textpm0.005 \\
PGD & 100.00\textpm0.00\% & 4.26\textpm1.99\% & 79.36\textpm15.59\% & 0.991\textpm0.004 \\
Uniform & 100.00\textpm0.00\% & 4.09\textpm2.02\% & 79.81\textpm15.70\% & 0.992\textpm0.005 \\ \bottomrule
\end{tabular}
\label{tab:droppedAttacksMnistStrong}
\end{table}

\begin{table}[]
\caption{Performance of pools without a specific attack for MNIST balanced.}
\centering
\begin{tabular}{lllll}
\toprule
\textbf{Dropped Attack} & \textbf{Success Rate} & \textbf{Difference} & \textbf{$\mathbf{<}$ 1/255} & $\mathbf{R^2}$ \\ \midrule
None & 100.00\textpm0.00\% & 4.65\textpm2.16\% & 77.78\textpm16.08\% & 0.990\textpm0.006 \\
BIM & 100.00\textpm0.00\% & 5.13\textpm2.27\% & 76.14\textpm15.98\% & 0.988\textpm0.007 \\
B\&B & 100.00\textpm0.00\% & 7.93\textpm3.69\% & 60.79\textpm25.99\% & 0.987\textpm0.006 \\
C\&W & 100.00\textpm0.00\% & 4.93\textpm2.22\% & 77.05\textpm15.96\% & 0.988\textpm0.006 \\
Deepfool & 100.00\textpm0.00\% & 5.03\textpm2.14\% & 76.34\textpm16.36\% & 0.988\textpm0.005 \\
FGSM & 100.00\textpm0.00\% & 4.65\textpm2.16\% & 77.78\textpm16.08\% & 0.990\textpm0.006 \\
PGD & 100.00\textpm0.00\% & 4.85\textpm2.10\% & 77.33\textpm15.85\% & 0.989\textpm0.005 \\
Uniform & 100.00\textpm0.00\% & 4.65\textpm2.16\% & 77.78\textpm16.08\% & 0.990\textpm0.006 \\ \bottomrule
\end{tabular}
\label{tab:droppedAttacksMnistBalanced}
\end{table}

\begin{table}[]
\caption{Performance of pools without a specific attack for CIFAR10 strong.}
\centering
\begin{tabular}{lllll}
\toprule
\textbf{Dropped Attack} & \textbf{Success Rate} & \textbf{Difference} & \textbf{$\mathbf{<}$ 1/255} & $\mathbf{R^2}$ \\ \midrule
None & 100.00\textpm0.00\% & 2.21\textpm1.16\% & 98.40\textpm1.63\% & 0.997\textpm0.003 \\
BIM & 100.00\textpm0.00\% & 2.21\textpm1.16\% & 98.40\textpm1.63\% & 0.997\textpm0.003 \\
B\&B & 100.00\textpm0.00\% & 2.54\textpm1.30\% & 98.17\textpm2.00\% & 0.996\textpm0.006 \\
C\&W & 100.00\textpm0.00\% & 3.83\textpm2.06\% & 96.84\textpm3.12\% & 0.996\textpm0.004 \\
Deepfool & 100.00\textpm0.00\% & 4.02\textpm1.19\% & 95.65\textpm3.10\% & 0.992\textpm0.005 \\
FGSM & 100.00\textpm0.00\% & 2.21\textpm1.16\% & 98.40\textpm1.63\% & 0.997\textpm0.003 \\
PGD & 100.00\textpm0.00\% & 2.50\textpm1.48\% & 98.11\textpm1.93\% & 0.995\textpm0.005 \\
Uniform & 100.00\textpm0.00\% & 2.21\textpm1.16\% & 98.40\textpm1.63\% & 0.997\textpm0.003 \\ \bottomrule
\end{tabular}
\label{tab:droppedAttacksCifar10Strong}
\end{table}

\begin{table}[]
\caption{Performance of pools without a specific attack for CIFAR10 balanced.}
\centering
\begin{tabular}{lllll}
\toprule
\textbf{Dropped Attack} & \textbf{Success Rate} & \textbf{Difference} & \textbf{$\mathbf{<}$ 1/255} & $\mathbf{R^2}$ \\ \midrule
None & 100.00\textpm0.00\% & 2.04\textpm1.13\% & 98.74\textpm1.29\% & 0.998\textpm0.002 \\
BIM & 100.00\textpm0.00\% & 2.07\textpm1.15\% & 98.72\textpm1.31\% & 0.998\textpm0.002 \\
B\&B & 100.00\textpm0.00\% & 4.08\textpm1.95\% & 97.26\textpm2.70\% & 0.996\textpm0.006 \\
C\&W & 100.00\textpm0.00\% & 2.18\textpm1.22\% & 98.54\textpm1.50\% & 0.997\textpm0.002 \\
Deepfool & 100.00\textpm0.00\% & 4.00\textpm0.99\% & 95.89\textpm3.13\% & 0.993\textpm0.003 \\
FGSM & 100.00\textpm0.00\% & 2.04\textpm1.13\% & 98.74\textpm1.29\% & 0.998\textpm0.002 \\
PGD & 100.00\textpm0.00\% & 2.06\textpm1.16\% & 98.73\textpm1.32\% & 0.998\textpm0.002 \\
Uniform & 100.00\textpm0.00\% & 2.04\textpm1.13\% & 98.74\textpm1.29\% & 0.998\textpm0.002 \\ \bottomrule
\end{tabular}
\label{tab:droppedAttacksCifar10Balanced}
\end{table}
\clearpage
\section{Fast Parameter Set Tests}
\label{app:convergence}

We list the chosen parameter sets for Fast-100, Fast-1k and Fast-10k in \Cref{tab:convergenceParameters}. We plot the difference between the distance of the closest adversarial examples and the true decision boundary distance in \Cref{fig:convergenceAStandard,fig:convergenceAAdversarial,fig:convergenceAReLU,fig:convergenceBStandard,fig:convergenceBAdversarial,fig:convergenceBReLU,fig:convergenceCStandard,fig:convergenceCAdversarial,fig:convergenceCReLU}, while we plot the $R^2$ values in \Cref{fig:convergenceR2AStandard,fig:convergenceR2AAdversarial,fig:convergenceR2AReLU,fig:convergenceR2BStandard,fig:convergenceR2BAdversarial,fig:convergenceR2BReLU,fig:convergenceR2CStandard,fig:convergenceR2CAdversarial,fig:convergenceR2CReLU}. We do not study the Brendel \& Bethge and the Carlini \& Wagner attacks due to the fact that the number of model calls varies depending on how many inputs are attacked at the same time. Note that, for attacks that do not have the a 100\% success rate, the mean adversarial example distance can increase with the number of steps as new adversarial examples (for inputs for which there were previously no successful adversarial examples) are added.

\definecolor{plot1}{HTML}{648FFF}
\definecolor{plot2}{HTML}{785EF0}
\definecolor{plot3}{HTML}{DC267F}
\definecolor{plot4}{HTML}{FE6100}
\definecolor{plot5}{HTML}{FFB000}

\begin{table}[]
\centering
\caption{Parameters for the Fast-100, Fast-1k and Fast-10k sets.}
\begin{tabular}{@{}llclllll@{}}
\toprule
\multicolumn{1}{c}{\multirow{2}{*}{\textbf{Attack}}} & \multicolumn{1}{c}{\multirow{2}{*}{\textbf{Parameter Name}}} & \multicolumn{3}{c}{\textbf{MNIST}} & \multicolumn{3}{c}{\textbf{CIFAR10}} \\ \cmidrule(l){3-8} 
\multicolumn{1}{c}{} & \multicolumn{1}{c}{} & \textbf{100} & \multicolumn{1}{c}{1k} & \textbf{10k} & \multicolumn{1}{c}{\textbf{100}} & \multicolumn{1}{c}{\textbf{1k}} & 10k \\ \midrule
\multirow{6}{*}{BIM} & Initial Search Factor & \multicolumn{6}{c}{N/A} \\
 & Initial Search Steps & \multicolumn{6}{c}{N/A} \\
 & Binary Search Steps & \multicolumn{1}{l}{10} & 20 & 20 & 10 & 20 & 20 \\
 & Starting $\varepsilon$ & \multicolumn{3}{c}{0.5} & \multicolumn{3}{c}{0.1} \\
 & \#Iterations & \multicolumn{1}{l}{10} & 50 & 500 & 10 & 50 & 500 \\
 & Learning Rate & \multicolumn{1}{l}{0.1} & 0.01 & 1e-3 & 0.01 & 1e-3 & 1e-3 \\ \midrule
\multirow{4}{*}{Deepfool} & \#Iterations & \multicolumn{1}{l}{100} & 500 & 500 & \multicolumn{3}{c}{500} \\
 & Candidates & \multicolumn{6}{c}{10} \\
 & Overshoot & \multicolumn{1}{l}{0.1} & 1e-5 & 1e-5 & \multicolumn{3}{c}{1e-4} \\
 & Loss & \multicolumn{6}{c}{Logits} \\ \midrule
\multirow{4}{*}{FGSM} & Initial Search Factor & \multicolumn{3}{c}{0.75} & \multicolumn{3}{c}{0.5} \\
 & Initial Search Steps & \multicolumn{3}{c}{30} & \multicolumn{3}{c}{10} \\
 & Binary Search Steps & \multicolumn{6}{c}{20} \\
 & Starting $\varepsilon$ & \multicolumn{3}{c}{1} & \multicolumn{3}{c}{0.1} \\ \midrule
\multirow{7}{*}{PGD} & Initial Search Factor & \multicolumn{1}{l}{N/A} & 0.5 & 0.5 & N/A & 0.5 & 0.75 \\
 & Initial Search Steps & \multicolumn{1}{l}{N/A} & 10 & 10 & N/A & 10 & 30 \\
 & Binary Search Steps & \multicolumn{3}{c}{10} & 10 & 10 & 20 \\
 & Starting $\varepsilon$ & \multicolumn{3}{c}{0.1} & 0.1 & 0.1 & 1 \\
 & \#Iterations & \multicolumn{1}{l}{10} & 50 & 500 & 10 & 50 & 200 \\
 & Learning Rate & \multicolumn{1}{l}{0.1} & 0.01 & 1e-3 & 0.01 & 1e-3 & 1e-3 \\
 & Random Initialization & \multicolumn{6}{c}{True} \\ \midrule
\multirow{5}{*}{Uniform Noise} & Initial Search Factor & \multicolumn{3}{c}{0.75} & 0.75 & 0.75 & 0.25 \\
 & Initial Search Steps & \multicolumn{3}{c}{30} & 30 & 30 & 5 \\
 & Binary Search Steps & \multicolumn{3}{c}{20} & 20 & 20 & 15 \\
 & Starting $\varepsilon$ & \multicolumn{3}{c}{1} & 1 & 1 & 0.5 \\
 & Runs & \multicolumn{1}{l}{200} & 500 & 200 & 10 & 50 & 500 \\ \bottomrule
\end{tabular}
\label{tab:convergenceParameters}
\end{table}

\newlength{\convergenceFigureSize}
\setlength{\convergenceFigureSize}{0.25\textwidth}
\newlength{\convergenceLegendSize}
\setlength{\convergenceLegendSize}{0.375\textwidth}

\foreach \arch\archName in {a/A,b/B,c/C}{
\foreach \test\testName in {standard/Standard,adversarial/Adversarial,relu/ReLU}{
\begin{figure}
\centering
\begin{subfigure}[]{\convergenceFigureSize}
\centering
    \begin{adjustbox}{width=\textwidth}
    \includegraphics{convergence-plots/mnist-\arch-\test-100.pdf}
    \end{adjustbox}
\caption[Ex2]%
{{\small MNIST \archName{} \testName{} Fast-100}}
\end{subfigure}
\hfill
\begin{subfigure}[]{\convergenceFigureSize}
\centering
    \begin{adjustbox}{width=\textwidth}
    \includegraphics{convergence-plots/cifar10-\arch-\test-100.pdf}
    \end{adjustbox}
\caption[Ex2]%
{{\small CIFAR10 \archName{} \testName{} Fast-100}}
\end{subfigure}
\hfill
\begin{subfigure}[]{\convergenceFigureSize}
\centering
    \begin{adjustbox}{width=\textwidth}
    \includegraphics{convergence-plots/mnist-\arch-\test-1000.pdf}
    \end{adjustbox}
\caption[Ex2]%
{{\small MNIST \archName{} \testName{} Fast-1k}}
\end{subfigure}

\begin{subfigure}[]{\convergenceFigureSize}
\centering
    \begin{adjustbox}{width=\textwidth}
    \includegraphics{convergence-plots/cifar10-\arch-\test-1000.pdf}
    \end{adjustbox}
\caption[Ex2]%
{{\small CIFAR10 \archName{} \testName{} Fast-1k}}
\end{subfigure}
\hfill
\begin{subfigure}[]{\convergenceFigureSize}
\centering
    \begin{adjustbox}{width=\textwidth}
    \includegraphics{convergence-plots/mnist-\arch-\test-10000.pdf}
    \end{adjustbox}
\caption[Ex2]%
{{\small MNIST \archName{} \testName{} Fast-10k}}
\end{subfigure}
\hfill
\begin{subfigure}[]{\convergenceFigureSize}
\centering
    \begin{adjustbox}{width=\textwidth}
    \includegraphics{convergence-plots/cifar10-\arch-\test-10000.pdf}
    \end{adjustbox}
\caption[Ex2]%
{{\small CIFAR10 \archName{} \testName{} Fast-10k}}
\end{subfigure}

\vspace{1mm}
\begin{subfigure}[]{\textwidth}
    \centering
    \begin{adjustbox}{width=\convergenceLegendSize}%
    \begin{tikzpicture}[show background rectangle]
        \begin{customlegend}[legend columns=5,legend style={align=center,draw=none,column sep=2ex},
        legend entries={BIM,Deepfool,Fast Gradient,PGD,Uniform}
        ]
        \addlegendimage{solid,line legend, draw=plot1}
        \addlegendimage{solid,line legend, draw=plot2}
        \addlegendimage{solid,line legend, draw=plot3}
        \addlegendimage{solid,line legend, draw=plot4}
        \addlegendimage{solid,line legend, draw=plot5}
        \end{customlegend}
    \end{tikzpicture}
    \end{adjustbox}
\end{subfigure}

\caption{Mean difference between the distance of the closest adversarial examples and the exact decision boundary distance for MNIST \& CIFAR10 \archName{} \testName{}. A dashed line means that the attack found adversarial examples (of any distance) for only some inputs, while the absence of a line means that the attack did not find any adversarial examples. The loosely and densely dotted black lines respectively represent the balanced and strong attack pools. Both axes are logarithmic.}
\label{fig:convergence\archName\testName}
\end{figure}
}
}

\foreach \arch\archName in {a/A,b/B,c/C}{
\foreach \test\testName in {standard/Standard,adversarial/Adversarial,relu/ReLU}{
\begin{figure}
\centering
\begin{subfigure}[]{\convergenceFigureSize}
\centering
    \begin{adjustbox}{width=\textwidth}
    \includegraphics{convergence-plots/r2/mnist-\arch-\test-100.pdf}
    \end{adjustbox}
\caption[Ex2]%
{{\small MNIST \archName{} \testName{} Fast-100}}
\end{subfigure}
\hfill
\begin{subfigure}[]{\convergenceFigureSize}
\centering
    \begin{adjustbox}{width=\textwidth}
    \includegraphics{convergence-plots/r2/cifar10-\arch-\test-100.pdf}
    \end{adjustbox}
\caption[Ex2]%
{{\small CIFAR10 \archName{} \testName{} Fast-100}}
\end{subfigure}
\hfill
\begin{subfigure}[]{\convergenceFigureSize}
\centering
    \begin{adjustbox}{width=\textwidth}
    \includegraphics{convergence-plots/r2/mnist-\arch-\test-1000.pdf}
    \end{adjustbox}
\caption[Ex2]%
{{\small MNIST \archName{} \testName{} Fast-1k}}
\end{subfigure}

\begin{subfigure}[]{\convergenceFigureSize}
\centering
    \begin{adjustbox}{width=\textwidth}
    \includegraphics{convergence-plots/r2/cifar10-\arch-\test-1000.pdf}
    \end{adjustbox}
\caption[Ex2]%
{{\small CIFAR10 \archName{} \testName{} Fast-1k}}
\end{subfigure}
\hfill
\begin{subfigure}[]{\convergenceFigureSize}
\centering
    \begin{adjustbox}{width=\textwidth}
    \includegraphics{convergence-plots/r2/mnist-\arch-\test-10000.pdf}
    \end{adjustbox}
\caption[Ex2]%
{{\small MNIST \archName{} \testName{} Fast-10k}}
\end{subfigure}
\hfill
\begin{subfigure}[]{\convergenceFigureSize}
\centering
    \begin{adjustbox}{width=\textwidth}
    \includegraphics{convergence-plots/r2/cifar10-\arch-\test-10000.pdf}
    \end{adjustbox}
\caption[Ex2]%
{{\small CIFAR10 \archName{} \testName{} Fast-10k}}
\end{subfigure}

\vspace{1mm}
\begin{subfigure}[]{\textwidth}
    \centering
    \begin{adjustbox}{width=\convergenceLegendSize}%
    \begin{tikzpicture}[show background rectangle]
        \begin{customlegend}[legend columns=5,legend style={align=center,draw=none,column sep=2ex},
        legend entries={BIM,Deepfool,Fast Gradient,PGD,Uniform}
        ]
        \addlegendimage{solid,line legend, draw=plot1}
        \addlegendimage{solid,line legend, draw=plot2}
        \addlegendimage{solid,line legend, draw=plot3}
        \addlegendimage{solid,line legend, draw=plot4}
        \addlegendimage{solid,line legend, draw=plot5}
        \end{customlegend}
    \end{tikzpicture}
    \end{adjustbox}
\end{subfigure}

\caption{$R^2$ of linear model for the heuristic adversarial distances given the exact decision boundary distances for  MNIST \& CIFAR10 \archName{} \testName{}. A dashed line means that the attack found adversarial examples (of any distance) for only some inputs, while the absence of a line means that the attack did not find any adversarial examples. The loosely and densely dotted black lines respectively represent the balanced and strong attack pools. The x axis is logarithmic.}
\label{fig:convergenceR2\archName\testName}
\end{figure}
}
}

\end{document}